\documentclass{article}

\PassOptionsToPackage{table}{xcolor}
\usepackage{iclr2027_conference,times}

\usepackage[utf8]{inputenc}
\usepackage[T1]{fontenc}
\usepackage{amsmath,amssymb,amsfonts,amsthm,mathtools,bm}
\usepackage{booktabs,multirow,array}
\usepackage{graphicx}
\usepackage{algorithm}
\usepackage{algpseudocode}
\usepackage{microtype}
\usepackage{nicefrac}
\usepackage{url}
\usepackage{float}
\usepackage{placeins}
\usepackage{hyperref}

\usepackage{wrapfig}
\usepackage{adjustbox}

\usepackage{booktabs}
\usepackage{tabularx}
\usepackage{makecell}
\usepackage{threeparttable}
\usepackage{amssymb}

\usepackage[nameinlink,capitalise]{cleveref}

\DeclareMathOperator*{\argmin}{arg\,min}
\DeclareMathOperator{\median}{median}
\DeclareMathOperator{\clip}{clip}

\newcommand{\ind}{\mathbf{1}}
\newcommand{\E}{\mathbb{E}}
\newcommand{\Pp}{\mathbb{P}}
\newcommand{\R}{\mathbb{R}}
\newcommand{\Normal}{\mathcal{N}}
\newcommand{\RoVR}{\operatorname{RoVR}}
\newcommand{\SoftRoVR}{\operatorname{SoftRoVR}}

\newtheoremstyle{rovrplain}
{5pt}{5pt}{\itshape}{}{\normalfont}{.}{0.5em}{}
\newtheoremstyle{rovrdefinition}
{5pt}{5pt}{\normalfont}{}{\normalfont}{.}{0.5em}{}
\theoremstyle{rovrplain}
\newtheorem{theorem}{Theorem}[section]

\newtheorem{proposition}[theorem]{Proposition}
\newtheorem{corollary}[theorem]{Corollary}
\theoremstyle{rovrdefinition}

\newtheorem{assumption}[theorem]{Assumption}

\title{\raggedright Dual-Channel Robust Group-Relative Policy Optimization via Advantage and Sequence-Weight Estimation}

\author{
Zhongyi Li\textsuperscript{1*}\ \ 
Wan Tian\textsuperscript{2*} \ \ 
Xiang Xu\textsuperscript{1} \ \ 
Yutian Xiao\textsuperscript{1}\ \ 
Yikun Ban\textsuperscript{1} \ \ 
Yijie Peng\textsuperscript{3$\dagger$}\ \ 
Fuzhen Zhuang\textsuperscript{1$\dagger$} \\[1ex]
\textsuperscript{1}Beihang University \quad
\textsuperscript{2}Peking University \quad
\textsuperscript{3}Nanjing University \\[0.5ex]
\textsuperscript{*}These authors contributed equally to this work.\quad
\textsuperscript{$\dagger$}Corresponding authors. \\
\texttt{Correspondence: pengyijie@nju.edu.cn; zhuangfuzhen@buaa.edu.cn}
}

\iclrfinalcopy

\begin{document}

\maketitle

\begin{abstract}
Group-relative policy optimization relies on reward-derived advantages and sequence-level likelihood weights, both of which can be sensitive to localized outliers. Extreme rewards can collapse the contrast among clean responses after group normalization, while token-level log-ratio perturbations can alter sequence weights and clipping decisions. We introduce RoVR-GSPO, a dual-channel robust optimizer that addresses these failure modes separately. Its reward channel combines robust reference estimation with bounded residual credit, while its ratio channel uses differentiable SoftRoVR aggregation to construct robust sequence weights. We provide stability and efficiency analyses for both channels. Experiments on mathematical reasoning, long-context summarization, and tool-call annotation show consistent improvements over GSPO, while controlled perturbation studies demonstrate stronger robustness to reward contamination and token-ratio anomalies.
\end{abstract}

\section{Introduction}
\label{sec:introduction}

Reinforcement learning is central to language-model post-training~\citep{ouyang2022instructgpt,lightman2023verify,xu2025towards}. PPO uses a learned value function~\citep{schulman2017ppo}, whereas GRPO removes the critic by comparing several responses sampled for the same prompt~\citep{shao2024deepseekmath}. DAPO and GSPO retain this group-relative structure while changing training and likelihood-correction choices~\citep{yu2025dapo,zheng2025gspo}. These methods reduce the need for a learned critic, but they still depend on small-sample statistics: one reward reference determines all response advantages, and a sequence-level likelihood weight summarizes many token log-ratios.

The reward-side failure is more specific than the statement that rewards are heavy-tailed. If one reward is replaced by $R_j+\Delta$, the mean moves by $\Delta/G$ while the RMS grows with $|\Delta|$. The standardized advantages stay bounded, but as $|\Delta|\to\infty$ the anomalous response approaches $\operatorname{sign}(\Delta)\sqrt{G-1}$ and all clean responses approach $-\operatorname{sign}(\Delta)/\sqrt{G-1}$. Their pairwise advantage differences therefore vanish. This separates two roles that are often conflated: a robust reference limits shared displacement, whereas a bounded residual map protects focal leverage and the scale used to normalize clean responses.

We use this separation as the design principle for RoVR-GSPO. On the reward channel, RoVR computes a robust reference and then applies a bounded odd map and mapped RMS, yielding $\widehat A^{\mathrm{credit}}$. On the ratio channel, SoftRoVR applies differentiable robust aggregation to token log-ratios and supplies sequence weights to the clipped GSPO objective. The method therefore exposes the reference, credit, and sequence-weight operations as distinct objects that can be analyzed and evaluated separately. Our contributions are:

\begin{figure}[!htbp]
\centering
\includegraphics[width=0.98\linewidth]{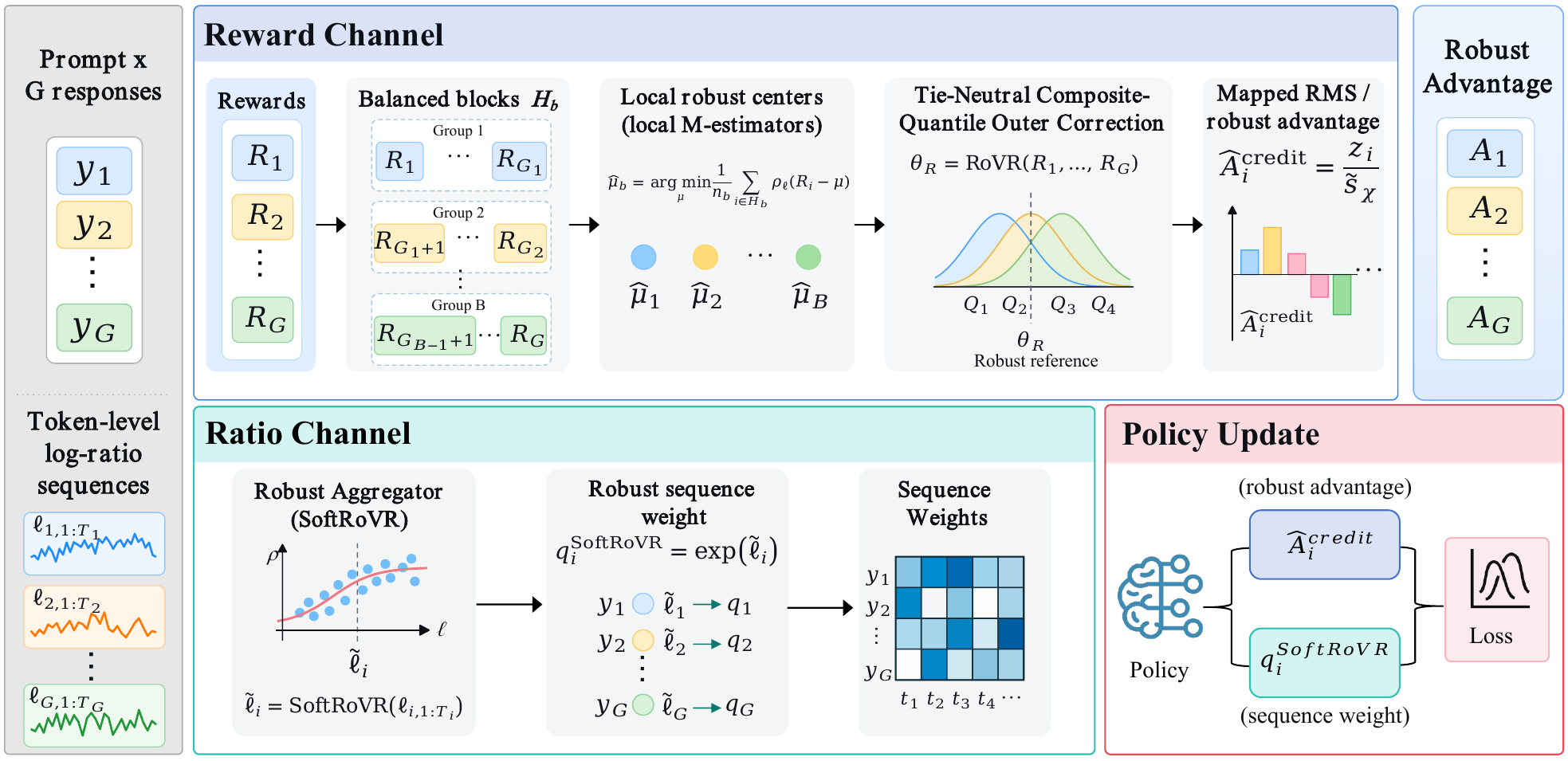}
\caption{Overview of RoVR-GSPO. The reward channel first estimates a robust reference, then applies the bounded residual map and mapped RMS to form $\widehat A^{\mathrm{credit}}$. The ratio channel converts token-level log-ratio sequences into differentiable robust sequence weights via SoftRoVR. These two signals jointly determine the policy update.}
\label{fig:frame}
\end{figure}

\begin{itemize}
\item We identify two distinct update sensitivities: reward perturbations can erase clean-response advantage contrast, while localized token log-ratio perturbations can change sequence weights and clipping branches.
\item We introduce RoVR-GSPO, which combines robust reference estimation with bounded credit on the reward channel and a differentiable robust sequence surrogate on the ratio channel. This separation makes the role of each operation explicit.
\item We derive conditional reference, credit-stability, outer-efficiency, and clipping-stability results, and connect them to downstream comparisons, channel placement, and paired perturbation studies.
\end{itemize}

Here ``variance reduction'' refers to the outer reference-estimation factor relative to median aggregation. SoftRoVR is a sequence-weight surrogate whose robustness properties are studied directly; it is not treated as an exact importance ratio.

\section{Background: From Group Statistics to Policy Updates}
\label{sec:background}

RoVR-GSPO is motivated by two distinct sensitivities: reward perturbations can collapse clean-response advantage contrast, while token log-ratio bursts can alter sequence weights and clipping branches. We analyze these effects below.

\subsection{Mean--RMS group-relative advantages}

For a prompt $x$, let the old policy sample responses $\{y_i\}_{i=1}^{G}$ with rewards $\{R_i\}_{i=1}^{G}$. GRPO constructs
\begin{equation}
\bar R = \frac{1}{G}\sum_{j=1}^{G}R_j,\qquad
s_R^2 = \frac{1}{G}\sum_{j=1}^{G}(R_j-\bar R)^2+\varepsilon_s,
\qquad
\widehat A_i^{\mathrm{GRPO}}=\frac{R_i-\bar R}{s_R},
\label{eq:grpo_advantage}
\end{equation}
where $\varepsilon_s\geq0$. The next proposition shows that self-normalization masks anomaly magnitude but broadcasts its effect across the group.

\begin{proposition}
\label{prop:grpo_broadcast}
Let $G\geq2$. Fix $R_1,\ldots,R_G$, replace $R_j$ by $R_j+\Delta$, and consider values of $\Delta$ for which $s_\Delta>0$ (in particular, all $\Delta$ when $\varepsilon_s>0$). Let $\bar R_\Delta,s_\Delta,A_{i,\Delta}$ denote the resulting quantities. Then
\begin{equation}
\bar R_\Delta=\bar R+\frac{\Delta}{G},\qquad
s_\Delta^2=s_R^2+\frac{2\Delta(R_j-\bar R)}{G}
+\frac{G-1}{G^2}\Delta^2.
\label{eq:grpo_spike_exact}
\end{equation}
For every finite group, $\sum_iA_{i,\Delta}=0$, $\sum_iA_{i,\Delta}^2\leq G$, and $\max_i|A_{i,\Delta}|\leq\sqrt{G-1}$. Moreover,
\begin{equation}
A_{j,\Delta}\longrightarrow\operatorname{sign}(\Delta)\sqrt{G-1},
\qquad
A_{i,\Delta}\longrightarrow-\frac{\operatorname{sign}(\Delta)}{\sqrt{G-1}}\quad(i\neq j)
\label{eq:grpo_broadcast_limit}
\end{equation}
as $|\Delta|\to\infty$. Consequently, when $G\geq3$, any two clean responses $i,k\neq j$ satisfy $A_{i,\Delta}-A_{k,\Delta}=(R_i-R_k)/s_\Delta\to0$.
\end{proposition}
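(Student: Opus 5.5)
The plan is to verify the exact formulas \eqref{eq:grpo_spike_exact} by direct expansion, then obtain the finite-group bounds from a Cauchy--Schwarz argument on centered vectors, and finally take limits using the leading-order behaviour of numerator and denominator.

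\textbf{Exact shift formulas.} Write $R'_i=R_i+\Delta\delta_{ij}$. The mean formula $\bar R_\Delta=\bar R+\Delta/G$ is immediate. For the spread, set $d_i=R_i-\bar R$, so that $\sum_i d_i=0$. Then $R'_i-\bar R_\Delta=d_i+\Delta(\delta_{ij}-1/G)$. Squaring and averaging gives three terms:
\begin{itemize}
\item the cross term $\frac{2\Delta}{G}\sum_i d_i(\delta_{ij}-1/G)=2\Delta d_j/G$, using $\sum_i d_i=0$;
\item the quadratic term $\frac{\Delta^2}{G}\bigl[(1-1/G)^2+(G-1)/G^2\bigr]=\Delta^2(G-1)/G^2$;
\item the remaining term, which together with $\varepsilon_s$ reproduces $s_R^2$.
\end{itemize}
This yields \eqref{eq:grpo_spike_exact}.

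\textbf{Finite-group bounds.} Let $c_i=R'_i-\bar R_\Delta$, so $\sum_i c_i=0$ and $A_{i,\Delta}=c_i/s_\Delta$.
\begin{itemize}
\item Zero sum: $\sum_i A_{i,\Delta}=0$ follows directly from $\sum_i c_i=0$.
\item Square sum: $\sum_i A_{i,\Delta}^2=\sum_i c_i^2/(\frac1G\sum_k c_k^2+\varepsilon_s)\le G$, because $\varepsilon_s\ge0$.
\item Maximum: for any index $m$, $c_m=-\sum_{i\ne m}c_i$. Cauchy--Schwarz gives $c_m^2\le (G-1)\sum_{i\ne m}c_i^2$, hence $\frac{G}{G-1}c_m^2\le\sum_i c_i^2\le G s_\Delta^2$. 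Therefore $|A_{m,\Delta}|\le\sqrt{G-1}$.
\end{itemize}
The only care needed is the hypothesis $s_\Delta>0$ so that all quotients are defined. When $\varepsilon_s=0$, this excludes exactly the degenerate $\Delta$ that makes the shifted group constant.

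\textbf{Limits.} From \eqref{eq:grpo_spike_exact}, $s_\Delta=|\Delta|\frac{\sqrt{G-1}}{G}\bigl(1+O(1/|\Delta|)\bigr)$ as $|\Delta|\to\infty$. The numerators are $c_j=d_j+\Delta(G-1)/G$ and $c_i=d_i-\Delta/G$ for $i\neq j$. Dividing each by $s_\Delta$ and letting $|\Delta|\to\infty$:
\begin{itemize}
\item $A_{j,\Delta}\to\operatorname{sign}(\Delta)\frac{(G-1)/G}{\sqrt{G-1}/G}=\operatorname{sign}(\Delta)\sqrt{G-1}$;
\item $A_{i,\Delta}\to-\operatorname{sign}(\Delta)\frac{1/G}{\sqrt{G-1}/G}=-\operatorname{sign}(\Delta)/\sqrt{G-1}$.
\end{itemize}
Note that $\varepsilon_s$ is negligible against $\Delta^2$.

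\textbf{Contrast collapse.} For $i,k\neq j$ the shift cancels: $A_{i,\Delta}-A_{k,\Delta}=(R_i-R_k)/s_\Delta$. Since $s_\Delta\to\infty$, this difference tends to $0$.

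\textbf{Expected difficulty.} No step is a real obstacle. The main bookkeeping risk is the max-bound, where one must use the centered Cauchy--Schwarz trick rather than the weaker bound $|A_i|\le\sqrt G$ implied by the square-sum inequality.
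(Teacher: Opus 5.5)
Your proposal is correct and follows the paper's proof step for step. Both arguments expand the centered sum of squares to obtain \eqref{eq:grpo_spike_exact}, bound $\sum_i A_{i,\Delta}^2\le G$ using $\varepsilon_s\ge0$, apply Cauchy--Schwarz to the other $G-1$ residuals (which must sum to $-c_m$) to get $\max_i|A_{i,\Delta}|\le\sqrt{G-1}$, and use the leading-order behaviour $s_\Delta\sim|\Delta|\sqrt{G-1}/G$ for the limits and the contrast collapse; your explicit cross-term and quadratic-term computation just fills in the expansion the paper states without detail.
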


For a clean index set $H$ with $|H|\geq2$, define the retained pairwise advantage contrast
\(
\mathcal C_H(A)=
\{
2\sum_{i<k,\,i,k\in H}(A_i-A_k)^2 / |H|(|H|-1)
\}^{1/2}.
\)
Equation \eqref{eq:grpo_broadcast_limit} gives $\mathcal C_H(A_\Delta)\to0$ when $H$ contains the fixed clean responses. This quantity is measurable in saved batches and is therefore a primary diagnostic rather than an informal interpretation.

\begin{proposition}
\label{prop:center_only_impossibility}
Let a location estimate $m_\Delta$ remain bounded when one reward is $R_j+\Delta$ and all other rewards are fixed. Form raw-residual credits $C_{i,\Delta}=(R_i^{(\Delta)}-m_\Delta)/d_\Delta$ with $d_\Delta>0$. If $d_\Delta=O(1)$, then $|C_{j,\Delta}|\to\infty$. If $C_{j,\Delta}=O(1)$, then $d_\Delta=\Omega(|\Delta|)$ and $C_{i,\Delta}\to0$ for every clean $i\neq j$. In particular, an RMS about $m_\Delta$ gives $d_\Delta\sim|\Delta|/\sqrt G$, $C_{j,\Delta}\to\operatorname{sign}(\Delta)\sqrt G$, and clean-advantage erasure.
\end{proposition}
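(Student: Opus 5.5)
The argument is elementary and rests on one observation: because $m_\Delta$ is bounded, the numerator of the anomalous credit grows linearly in $|\Delta|$, while the numerators of the clean credits stay bounded. I would write
\[
N_{j,\Delta}=R_j+\Delta-m_\Delta,\qquad N_{i,\Delta}=R_i-m_\Delta \quad (i\neq j).
\]
Let $M$ be a bound with $|m_\Delta|\le M$. Then $|N_{j,\Delta}|\ge|\Delta|-|R_j|-M$, so $|N_{j,\Delta}|=\Theta(|\Delta|)$. In particular $|N_{j,\Delta}|/|\Delta|\to1$ and $\operatorname{sign}(N_{j,\Delta})=\operatorname{sign}(\Delta)$ for $|\Delta|$ large. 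For $i\neq j$ we have $|N_{i,\Delta}|\le|R_i|+M$, uniformly in $\Delta$. Every claim then follows by dividing by $d_\Delta$.

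\textbf{First claim.} If $d_\Delta\le D$ for large $|\Delta|$, then $|C_{j,\Delta}|\ge(|\Delta|-|R_j|-M)/D\to\infty$.

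\textbf{Second claim.} Suppose $|C_{j,\Delta}|\le K$. Then $d_\Delta=|N_{j,\Delta}|/|C_{j,\Delta}|\ge(|\Delta|-|R_j|-M)/K$, which is $\Omega(|\Delta|)$. Consequently, for every clean $i\neq j$, $|C_{i,\Delta}|\le(|R_i|+M)/d_\Delta=O(1/|\Delta|)\to0$. Here $O(\cdot)$ and $\Omega(\cdot)$ are read along $|\Delta|\to\infty$, and I will state this explicitly since the proposition leaves it implicit.

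\textbf{RMS instance.} Take $d_\Delta^2=\frac1G\sum_i N_{i,\Delta}^2$, plus $\varepsilon_s$ if one mirrors \eqref{eq:grpo_advantage}; either choice gives $d_\Delta>0$ for large $|\Delta|$. This sum consists of $N_{j,\Delta}^2=\Delta^2(1+o(1))$ plus $G-1$ terms that are $O(1)$. Hence $d_\Delta=|\Delta|(1+o(1))/\sqrt G$, that is, $d_\Delta\sim|\Delta|/\sqrt G$. It follows that
\[
C_{j,\Delta}=\frac{N_{j,\Delta}}{d_\Delta}\to\operatorname{sign}(\Delta)\sqrt G .
\]
Clean-advantage erasure then follows from the second claim, since $C_{j,\Delta}$ is bounded. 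Equivalently, $C_{i,\Delta}-C_{k,\Delta}=(R_i-R_k)/d_\Delta\to0$, so $\mathcal C_H(C_\Delta)\to0$ for any fixed clean set $H$.

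The limit $\sqrt G$, rather than the $\sqrt{G-1}$ of Proposition~\ref{prop:grpo_broadcast}, reflects that the center does not absorb $\Delta/G$. There is no real obstacle beyond this bookkeeping. The only care needed is to interpret the hypotheses $d_\Delta=O(1)$ and $C_{j,\Delta}=O(1)$ asymptotically, and to note that the sign of $N_{j,\Delta}$ stabilizes to $\operatorname{sign}(\Delta)$ once $|\Delta|>|R_j|+M$.
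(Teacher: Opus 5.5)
Your proposal is correct and follows essentially the same route as the paper's proof. In both, boundedness of $m_\Delta$ makes the focal residual $\Delta+O(1)$ and every clean residual $O(1)$, each claim follows by dividing by $d_\Delta$, and in the RMS case the single squared focal residual dominates the average, giving $d_\Delta\sim|\Delta|/\sqrt G$; you add explicit constants via $|m_\Delta|\le M$ and state the asymptotic reading of $O(\cdot)$ and $\Omega(\cdot)$, which the paper leaves implicit.
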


For raw-residual credits with a shared scale, protecting both focal leverage and clean-response contrast therefore requires an additional residual transformation. We use the bounded map in \Cref{sec:reward_credit}, which also makes its clean-signal fidelity explicit.

\subsection{Sequence weights and clipping asymmetry}

For response length $T_i$, define token log ratios
\(
\ell_{i,t}(\theta)
=\log\pi_\theta(y_{i,t}\mid x,y_{i,<t})
-\log\pi_{\mathrm{old}}(y_{i,t}\mid x,y_{i,<t}).
\)
GSPO uses
\begin{equation}
q_i^{\mathrm{GSPO}}(\theta)
=\exp\!\left(\frac{1}{T_i}\sum_{t=1}^{T_i}\ell_{i,t}(\theta)\right).
\label{eq:gspo_ratio}
\end{equation}
With $l=1-\epsilon_{\mathrm{lo}}>0$ and $u=1+\epsilon_{\mathrm{hi}}$, the per-sequence clipped term is
\begin{equation}
g(q,A)=\min\{qA,\clip(q,l,u)A\},
\qquad
\mathcal L_{\mathrm{GSPO}}(\theta)
=-\E\!\left[\frac1G\sum_{i=1}^{G}g(q_i(\theta),\widehat A_i)\right].
\label{eq:gspo_objective}
\end{equation}
Writing $q=e^m$, away from $q\in\{l,u\}$,
\begin{equation}
\frac{\partial g(e^m,A)}{\partial m}
=
\begin{cases}
e^mA, & A>0,\ e^m<u,\\
0,    & A>0,\ e^m>u,\\
0,    & A<0,\ e^m<l,\\
e^mA, & A<0,\ e^m>l.
\end{cases}
\label{eq:clipping_branch_derivative}
\end{equation}
Therefore an upper-ratio excursion is clipped for positive advantage but remains on the $qA$ branch for negative credit. If a contiguous token set $\mathcal C$ receives perturbations $\delta_t$, then
\begin{equation}
\frac{q'}{q}=\exp\!\left(\frac1T\sum_{t\in\mathcal C}\delta_t\right),
\label{eq:burst_ratio}
\end{equation}
If $q<u$ initially, the perturbation crosses the upper boundary whenever $T^{-1}\sum_{t\in\mathcal C}\delta_t>\log(u/q)$. If $q\geq u$ already, the same upward perturbation magnifies the active negative-advantage branch. Thus localized token perturbations can affect both the sequence weight and the active optimization branch.

\section{Dual-Channel Robust Estimation for Group-Relative Policy Optimization}
\label{sec:method}

RoVR-GSPO robustifies the two compact statistics that drive a GSPO update. For prompt-level rewards $R_{1:G}$, the reward channel separates reference estimation from credit shaping: it computes a hard RoVR reference and converts centered residuals into the bounded normalized advantage $\widehat A^{\mathrm{credit}}$. For the token log-ratios $\ell_{i,1:T_i}$ of each response, the ratio channel applies SoftRoVR to obtain a differentiable sequence-weight surrogate. The complete update is therefore
\(
(R_{1:G},\,\ell_{i,1:T_i})
\longrightarrow
(\widehat A_i^{\mathrm{credit}},\,q_i^{\mathrm{SoftRoVR}})
\longrightarrow
\mathcal L_{\mathrm{GSPO}}.
\)
Building on the composite-quantile aggregation principle used in VRMOM~\citep{tu2021variancereducedmedianofmeansestimator}, RoVR uses bounded-score local centers and an explicit credit map so that reference protection and clean-contrast protection remain distinguishable. We reserve \emph{RoVR} for the hard reward-reference estimator, \emph{SoftRoVR} for the differentiable ratio-side aggregator, and \emph{RoVR-GSPO} for the complete dual-channel optimizer. The single-channel variants serve as placement ablations in \Cref{sec:robust_scale_exp}.

\subsection{RoVR robust reference estimation}

The main method fixes Barron's shape to $\alpha=1$~\citep{BarronCVPR2019}. Its loss and score are
\begin{equation}
\rho_c(u)=\sqrt{1+(u/c)^2}-1,
\qquad
\psi_c(u)=\frac{u}{c^2}\left(1+\frac{u^2}{c^2}\right)^{-1/2},
\qquad
\psi_c'(u)=\frac{1}{c^2}\left(1+\frac{u^2}{c^2}\right)^{-3/2}.
\label{eq:convex_score}
\end{equation}
The empirical objective is strictly convex, its score equation has a unique root, and $M_c:=\sup_u|\psi_c(u)|=1/c$. We fix the convex shape $\alpha=1$ throughout the main experiments and partition $N$ scalar observations into $B\geq1$ balanced blocks $\{\mathcal H_b\}_{b=1}^{B}$ of sizes $n_b$, where $\sum_bn_b=N$ and $|n_b-n_{b'}|\leq1$. The local robust center is
\begin{equation}
\widetilde\mu_b
=\argmin_{u\in\R}
\frac{1}{n_b}\sum_{i\in\mathcal H_b}\rho_c(x_i-u).
\label{eq:block_m}
\end{equation}
Its population target is the M-functional
\begin{equation}
\theta_\rho(P):\qquad \E_P[\psi_c(X-\theta_\rho)]=0.
\label{eq:m_functional}
\end{equation}
For a symmetric distribution, $\theta_\rho$ equals its center and, when finite, its mean. Under skewness it is an explicit robust reference rather than an undisclosed mean estimator. For each block, define its empirical sandwich scale by
\[
\widehat a_b=\frac1{n_b}\sum_{i\in\mathcal H_b}\psi_c'(x_i-\widetilde\mu_b),
\qquad
\widehat b_b=\frac1{n_b}\sum_{i\in\mathcal H_b}\psi_c^2(x_i-\widetilde\mu_b),
\qquad
\widehat\nu_b=\frac{\sqrt{\widehat b_b}}{\widehat a_b\vee a_{\min}}.
\]
We pool and cap these local sandwich scales,
\begin{equation}
\widehat\nu=
\clip\!\left(\median_b\widehat\nu_b,\nu_{\min},\nu_{\max}\right).
\label{eq:robust_sandwich_scale}
\end{equation}
For blocks of only two or three observations, a blockwise sandwich estimate can be unstable; the implementation must then use an independent pilot or a lagged running scale and report that choice. The caps are numerical safeguards, not quantities known from theory.

The outer correction is a tie-neutral composite-quantile update. For even $B$, the initializer is the midpoint of the two central local centers; for odd $B$, it is the central local center. Let $\widehat\mu_0=\median_b\widetilde\mu_b$ and define
\[
\tau_k=\frac{k}{K+1},\qquad
\Delta_k=\Phi^{-1}(\tau_k),\qquad
D_K=\sum_{k=1}^{K}\phi(\Delta_k),\qquad
W_B=\sum_{b=1}^{B}\sqrt{n_b}.
\]
Discrete rewards require a tie convention. We use the mid-indicator
\begin{equation}
J_0(u)=\ind\{u<0\}+\frac12\ind\{u=0\}.
\label{eq:mid_indicator}
\end{equation}
The RoVR robust reference is
\begin{equation}
\widehat\theta_{\mathrm{RoVR}}
=\widehat\mu_0-
\frac{\widehat\nu}{D_KW_B}
\sum_{b=1}^{B}\sum_{k=1}^{K}
\left[
J_0\!\left(
\widetilde\mu_b-\widehat\mu_0-
\frac{\widehat\nu\Delta_k}{\sqrt{n_b}}
\right)-\tau_k
\right].
\label{eq:rovr}
\end{equation}
For equal $n_b=n$, \eqref{eq:rovr} reduces to the usual prefactor $\widehat\nu/(B\sqrt nD_K)$. The denominator $W_B$ is the Newton derivative implied by heteroscedastic normal block centers. Its associated effective sample size is
\begin{equation}
N_{\mathrm{eff}}=\frac{W_B^2}{B}\leq N,
\label{eq:effective_sample_size}
\end{equation}
which equals $N$ for equal blocks. For balanced blocks, $N_{\mathrm{eff}}/N\to1$ when $n_{\min}\to\infty$; \eqref{eq:effective_sample_size} records the exact finite-block loss otherwise.

\begin{proposition}
\label{prop:calibration}
For every $a\in\R$, the hard RoVR map is translation equivariant and constant preserving:
\[
\RoVR(x_1+a,\ldots,x_N+a)=\RoVR(x_1,\ldots,x_N)+a,
\qquad
\RoVR(a,\ldots,a)=a.
\]
When $B=1$, it equals the single convex M-center in \eqref{eq:block_m}. These statements hold for even or odd $K$ and for discrete inputs.
\end{proposition}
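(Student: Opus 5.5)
The plan is to trace the shift $x_i\mapsto x_i+a$ through each ingredient of \eqref{eq:rovr} and show that every ingredient is either translation equivariant or translation invariant. \emph{Local centers:} since $\rho_c(x_i+a-u)=\rho_c(x_i-(u-a))$, the objective in \eqref{eq:block_m} for the shifted block is the original objective evaluated at $u-a$. Its unique minimizer, which exists by strict convexity, is therefore $\widetilde\mu_b+a$. \emph{Scale:} the sandwich quantities $\widehat a_b,\widehat b_b$ depend only on the residuals $x_i-\widetilde\mu_b$, and these residuals are unchanged by the shift. Hence each $\widehat\nu_b$, their median, and the clipped $\widehat\nu$ in \eqref{eq:robust_sandwich_scale} are all invariant. \emph{Initializer:} the median, or the midpoint of the two central values for even $B$, is an order-statistic functional. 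Adding $a$ to every local center preserves their order, so $\widehat\mu_0\mapsto\widehat\mu_0+a$.

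\emph{Correction term:} each argument $\widetilde\mu_b-\widehat\mu_0-\widehat\nu\Delta_k/\sqrt{n_b}$ is invariant, because the two shifts cancel and $\widehat\nu,\Delta_k,n_b$ do not change. So every $J_0(\cdot)-\tau_k$ is unchanged, with the tie value $1/2$ at zero included, since the arguments agree exactly. The constants $D_K,W_B$ do not depend on the data, so the whole subtracted sum is invariant. Only the additive term changes, giving $\RoVR+a$. This argument is insensitive to the parity of $K$ and to discrete inputs, because we never use anything beyond exact equality of the invariant arguments.

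\emph{Constant preservation:} take all $x_i=a$. The block score equation $\sum\psi_c(a-u)=0$ has the unique root $u=a$, because $\psi_c$ is strictly increasing and odd. Thus $\widetilde\mu_b=a$ for every $b$, and $\widehat\mu_0=a$. Then $\widehat a_b=\psi_c'(0)=1/c^2$ and $\widehat b_b=0$, so $\widehat\nu=\nu_{\min}$ after clipping, and this value is deterministic. The arguments become $-\widehat\nu\Delta_k/\sqrt{n_b}$. The quantile levels are symmetric: $\tau_{K+1-k}=1-\tau_k$ and $\Delta_{K+1-k}=-\Delta_k$. For $k$ and its mirror $K+1-k$ within a fixed block, the two arguments are negatives of each other. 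Whenever they are nonzero, exactly one of the two indicators equals $1$, so the pair sum is $J_0(z)+J_0(-z)-\tau_k-(1-\tau_k)=0$. When $K$ is odd, the middle index has $\tau=1/2$ and $\Delta=0$, so its argument is $0$ and $J_0(0)-1/2=0$. This is exactly where the mid-indicator \eqref{eq:mid_indicator} is needed. The correction therefore vanishes and $\RoVR=a$. Note that $\nu_{\min}$ could be $0$ in principle; even then, all arguments are $0$ and each term equals $1/2-\tau_k$, which sums to zero by the same symmetry. Constant preservation also follows directly from equivariance applied to the all-zero input, provided that case is handled.

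\emph{Case $B=1$:} with a single block, $\widehat\mu_0=\widetilde\mu_1$. The arguments are again $-\widehat\nu\Delta_k/\sqrt{n_1}$, so the same symmetric-pair cancellation makes the correction zero, and $\RoVR$ equals the single M-center. The main obstacle is this pairwise cancellation. It requires checking that $\Phi^{-1}$ antisymmetry gives exact sign-flipped arguments, and that ties at zero are handled by $J_0$ for odd $K$. A further check is that the pairing works for every $\widehat\nu\ge0$, including values produced by the pilot-scale convention, since any data-independent or invariant scale suffices.
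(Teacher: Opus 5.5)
Your proposal is correct and follows essentially the same route as the paper. Like the paper, you use shift-equivariance of the local M-centers and the median initializer, invariance of the residuals, sandwich scales and $J_0$-arguments, and the mirror-pair cancellation $J_0(z)+J_0(-z)=1$ together with the central term $J_0(0)=1/2$ for odd $K$; the same cancellation covers $B=1$ because the single center equals the initializer. Your extra remarks, on the $\widehat\nu=0$ edge case and on the fact that any translation-invariant pilot scale suffices, are valid refinements that the paper leaves implicit.
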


The half weight at a tie is essential. With the conventional ``$\leq$'' indicator, odd $K$ gives a spurious correction of $-\widehat\nu B/(2D_KW_B)$ on constant data. The logistic relaxation below takes value $1/2$ at zero, so \eqref{eq:mid_indicator} also aligns the hard and smooth estimators. After the local roots are computed, direct evaluation costs $O(BK)$; the equal-block version admits the same rank-count simplification as VRMOM~\citep{tu2021variancereducedmedianofmeansestimator}.

\long\def\rovrReferenceEstimatorTable{
	\begin{table}[H]
		\centering
		\caption{
			Comparison of representative location/reference estimators.
			RoVR combines a bounded local $M$-estimator with tie-neutral
			composite-quantile aggregation while retaining explicit block protection.
		}
		\label{tab:reference_estimators}
		
		\small
		\setlength{\tabcolsep}{4pt}
		\renewcommand{\arraystretch}{0.90}
		
		\begin{threeparttable}
			\begin{tabularx}{\textwidth}{
					@{}
					l
					>{\raggedright\arraybackslash}X
					>{\raggedright\arraybackslash}X
					ccc
					@{}
				}
				\toprule
				\textbf{Estimator}
				&
				\textbf{Local statistic}
				&
				\textbf{Outer aggregation}
				&
				\makecell{\textbf{Bounded}\\[-1pt]\textbf{local score}}
				&
				\makecell{\textbf{Block}\\[-1pt]\textbf{protection}}
				&
				\makecell{\textbf{Clean outer}\\[-1pt]\textbf{factor}}
				\\
				\midrule
				
				Arithmetic mean
				& Mean
				& None
				& --
				& --
				& $1$
				\\
				
				Global $M$-center
				& Bounded-score $M$-center
				& None
				& $\checkmark$
				& --
				& --
				\\
				
				MOM
				& Arithmetic block mean
				& Median
				& --
				& $\checkmark$
				& $\pi/2$
				\\
				
				VRMOM
				& Arithmetic block mean
				& Composite quantile
				& --
				& $\checkmark$
				& $V_K \to \pi/3$
				\\
				
				Robust MOM
				& Robust local $M$-center
				& Median
				& $\checkmark$
				& $\checkmark$
				& $\pi/2$
				\\
				
				\textbf{RoVR}
				& \textbf{Robust local $M$-center}
				& \textbf{Tie-neutral composite quantile}
				& $\boldsymbol{\checkmark}$
				& $\boldsymbol{\checkmark}$
				& $\boldsymbol{V_K \to \pi/3}$
				\\
				
				\bottomrule
			\end{tabularx}
		\end{threeparttable}
	\end{table}
}

\Cref{tab:reference_estimators} summarizes the reference estimators. Relative to MOM or Robust MOM, RoVR replaces the outer median by a symmetric composite-quantile correction. Under the clean local normal-score model, this changes the outer variance factor from $\pi/2$ to $V_K$, with $V_K\to\pi/3$ as the quantile grid is refined. The complete variance combines this factor with the local sandwich variance. RoVR shares the outer-efficiency principle of VRMOM and adds bounded-score local centers, tie-neutral handling of discrete inputs, and balanced unequal blocks.

\subsection{Reward-channel advantage estimation}
\label{sec:reward_credit}

The robust reference controls the shared reward baseline, while the credit map controls the focal residual and the scale used for normalization. RoVR therefore uses the following bounded odd residual map when forming reward advantages:
\begin{equation}
\chi_\kappa(u)=\frac{u}{\sqrt{1+(u/\kappa)^2}}
=\kappa^2\psi_\kappa(u),
\qquad
|\chi_\kappa(u)|\leq\kappa,\qquad 0<\chi_\kappa'(u)\leq1.
\label{eq:bounded_credit_map}
\end{equation}
It preserves sign and local slope while saturating smoothly. With $\widehat\theta_R=\RoVR(R_1,\ldots,R_G)$, define
\begin{equation}
\begin{aligned}
\widehat s_\chi
&=\{s_{\min}^2+\frac1G\sum_{j=1}^{G}
\chi_\kappa^2(R_j-\widehat\theta_R)\}^{1/2},\quad 
\widehat A_i^{\mathrm{credit}}
=\frac{\chi_\kappa(R_i-\widehat\theta_R)}{\widehat s_\chi}.
\end{aligned}
\label{eq:credit_advantage}
\end{equation}
The main bounded advantage is deterministically bounded,
\begin{equation}
|\widehat A_i^{\mathrm{credit}}|
\leq\min\!\left\{\frac{\kappa}{s_{\min}},\sqrt G\right\}.
\label{eq:credit_global_bound}
\end{equation}
The map is close to the raw residual on clean moderate rewards:
\begin{equation}
|\chi_\kappa(u)-u|
\leq\frac{|u|^3}{2\kappa^2}.
\label{eq:credit_clean_fidelity}
\end{equation}
Thus $\kappa$ controls the fidelity--robustness trade-off. For bounded rule rewards, choosing $\kappa$ above the reward range makes the distortion small. All reward-side statistics are detached before they enter the policy loss. The center-only comparison below retains the raw residual and raw RMS while replacing only the arithmetic mean by $\widehat\theta_R$:
\(
\widehat A_i^{\mathrm{center}}
=\frac{R_i-\widehat\theta_R}
{\{\varepsilon_s+G^{-1}\sum_j(R_j-\widehat\theta_R)^2\}^{1/2}}.
\)
This isolates the effect of the RoVR reference from the bounded residual map and mapped scale. For an action-independent control-variate reference, use a linear leave-one-out numerator and a positive leave-one-out scale $S_{-i}$, both computed without $R_i$:
\begin{equation}
\widehat A_i^{\mathrm{LOO}}
=\frac{R_i-\widehat\theta_{R,-i}}{S_{-i}},
\qquad
\widehat\theta_{R,-i}=\RoVR(R_1,\ldots,R_{i-1},R_{i+1},\ldots,R_G).
\label{eq:loo_advantage}
\end{equation}
A bounded residual leave-one-out version is a shaped robust surrogate and does not inherit the linear baseline cancellation. Its deviation is quantified in \Cref{prop:credit_gradient_distortion}.


\long\def\rovrAdvantageVariantTable{
\begin{table}[!htbp]
\centering
\caption{
RoVR-based advantage constructions used in our analysis and algorithm.
}
\label{tab:advantage_variants}

\small
\setlength{\tabcolsep}{4.5pt}
\renewcommand{\arraystretch}{1.15}

\begin{threeparttable}
\begin{tabularx}{\linewidth}{
@{}
l
l
l
l
>{\raggedright\arraybackslash}X
@{}
}
\toprule
\textbf{Variant}
&
\textbf{RoVR reference}
&
\textbf{Numerator}
&
\textbf{Scale}
&
\textbf{Role}
\\
\midrule

$\widehat A_i^{\mathrm{center}}$
&
$\operatorname{RoVR}(R_{1:G})$
&
$R_i-\widehat\theta_R$
&
Raw RMS
&
Center-only ablation
\\

$\widehat A_i^{\mathrm{LOO}}$
&
$\operatorname{RoVR}(R_{-i})$
&
$R_i-\widehat\theta_{R,-i}$
&
$S_{-i}$
&
Leave-one-out theoretical reference
\\

\addlinespace[2pt]
\textbf{$\widehat A_i^{\mathrm{credit}}$}
&
$\operatorname{RoVR}(R_{1:G})$
&
$\boldsymbol{\chi_\kappa(R_i-\widehat\theta_R)}$
&
$\widehat s_{\chi}$
&
\textbf{Default robust advantage}; bounded leverage and fidelity control
\\

\bottomrule
\end{tabularx}

\begin{tablenotes}[flushleft]
\footnotesize
\item
Only $\widehat A_i^{\mathrm{credit}}$ is used by default in RoVR-GSPO.
$\widehat A_i^{\mathrm{center}}$ is a center-only ablation, whereas
$\widehat A_i^{\mathrm{LOO}}$ provides a leave-one-out reference for the
theoretical analysis.
\end{tablenotes}

\end{threeparttable}
\end{table}
}

\Cref{tab:advantage_variants} separates the roles of the three advantage constructions: $\widehat A^{\mathrm{center}}$ isolates reference substitution, $\widehat A^{\mathrm{LOO}}$ provides a linear action-independent baseline for analysis, and $\widehat A^{\mathrm{credit}}$ is the normalized bounded advantage used by RoVR-GSPO.

\subsection{SoftRoVR sequence-weight estimation for GSPO}

The reward statistic uses the hard RoVR operator in \eqref{eq:rovr}. The ratio-side differentiable surrogate is called SoftRoVR. For token log-ratios, define
\(
H_\gamma(u)=\frac{1}{1+\exp(u/\gamma)}, \gamma>0,
\)
which converges pointwise to $J_0$, including at ties. A differentiable outer initializer can be obtained from
\begin{equation}
\operatorname{smed}_{\eta}(v_1,\ldots,v_B)
=\argmin_u\frac1B\sum_{b=1}^{B}\rho_\eta(v_b-u),
\qquad \eta>0.
\label{eq:smooth_median}
\end{equation}
Let $\mathcal S_\gamma$ retain the hard block centers, midpoint-median initializer, and pooled scale, replacing only $J_0$ by $H_\gamma$. This auxiliary map isolates indicator-smoothing error. The implemented surrogate $\SoftRoVR_{\gamma,\eta,S}$ additionally uses \eqref{eq:smooth_median} for both median operations and unrolls $S$ safeguarded solver iterations. At fixed $\eta>0$, the smooth objective specifies a unique initializer, including for even $B$. We analyze this finite-temperature surrogate directly; comparison with hard RoVR separates indicator, initializer, scale, and solver errors. The zero-temperature indicator limit agrees with $J_0$ at ties.

For response $i$, the robust sequence weight is
\begin{equation}
q_i^{\mathrm{SoftRoVR}}(\theta)
=\exp\!\left[
\SoftRoVR_{\gamma,\eta,S}
\bigl(\ell_{i,1}(\theta),\ldots,\ell_{i,T_i}(\theta)\bigr)
\right].
\label{eq:softrovr_ratio}
\end{equation}
RoVR-GSPO combines $\widehat A^{\mathrm{credit}}$ from \eqref{eq:credit_advantage} with \eqref{eq:softrovr_ratio}. Disabling one channel gives a placement ablation. Robust aggregation defines a sequence-weight surrogate with a different likelihood functional; its calibration and local stability are analyzed in \Cref{sec:theory_summary}.

\subsection{Resolution, failure geometry, and fallback}

The block design determines the contamination budget. To tolerate $q$ arbitrary blocks while retaining a strict majority of usable blocks, one needs $B\geq2q+1$. To leave a strict clean majority after at most $s$ replacements inside each usable block, one needs $n_{\min}\geq2s+1$. Hence a non-degenerate worst-case two-level design requires
\begin{equation}
N\geq(2q+1)(2s+1).
\label{eq:resolution_frontier}
\end{equation}
This condition describes the resolution needed for simultaneous worst-case protection at both levels. In particular, $G<9$ permits budgets with $q=0$ or $s=0$, but cannot support both $q\geq1$ and $s\geq1$. RoVR itself is defined for every balanced partition; the budget determines the applicable guarantee. The $B=1$ case provides a global-M boundary case. Contiguous token blocks encode a burst geometry, while random partitions support the independently assigned contamination model.

\long\def\rovrUpdateAlgorithm{
\begin{algorithm}[!htbp]
\caption{RoVR-GSPO: dual-channel robust GSPO update}
\label{alg:rovr_gspo}
\begin{algorithmic}[1]
\Require rewards $R_{1:G}$; proposed $B$ and balanced sizes $\{n_b\}$; declared failure unit and budgets $(q,s)$
\Require $(K,c,\kappa,s_{\min},a_{\min},\nu_{\min},\nu_{\max})$; SoftRoVR switch $z_{\rm ratio}$
\State $z_{\rm block}\gets$ failure unit supported and budgets $(q,s)$ declared
\State $z_{\rm block}\gets z_{\rm block}\land(B>1)\land(B\geq2q+1)$
\State $z_{\rm block}\gets z_{\rm block}\land(\min_bn_b\geq2s+1)\land(\sum_bn_b=G)$
\State $z_{\rm block}\gets z_{\rm block}\land(\max_bn_b-\min_bn_b\leq1)$
\If{$z_{\rm block}=0$ or the realized assignment violates the declared sizes}
\State Set $B\gets1$ and $\mathcal H_1\gets\{1,\ldots,G\}$ \Comment{global-M fallback}
\Else
\State Use the predeclared balanced assignment sampled independently of corruption labels used by the guarantee
\EndIf
\State Compute every convex local center by \eqref{eq:block_m}
\If{$B=1$}
\State $\widehat\theta_R\gets\widetilde\mu_1$
\Else
\State Compute the capped pooled scale \eqref{eq:robust_sandwich_scale} and RoVR reference \eqref{eq:rovr}
\EndIf
\State $z_i\gets\chi_\kappa(R_i-\widehat\theta_R)$ and
$\widehat s_\chi\gets\{s_{\min}^2+G^{-1}\sum_jz_j^2\}^{1/2}$
\State $\widehat A_i^{\mathrm{credit}}\gets z_i/\widehat s_\chi$ for $i=1,\ldots,G$; detach reward statistics
\If{$z_{\rm ratio}=1$}
\State Require a GSPO parent; compute the SoftRoVR sequence weight in \eqref{eq:softrovr_ratio}
\Else
\State Keep the canonical GSPO sequence aggregate
\EndIf
\State \Return $\widehat A_{1:G}^{\mathrm{credit}}$ and the selected sequence weights
\end{algorithmic}
\end{algorithm}
}

\section{Theoretical Guarantees}
\label{sec:theory_summary}

We analyze the three quantities modified by RoVR-GSPO: the reward reference, bounded response credit, and sequence weight. The results cover finite-group stability, clean outer efficiency, calibration, and clipping margins; auxiliary statements and proofs appear in \Cref{app:theory_statements,app:finite_group,app:target_scale,app:policy}.

Consider $B$ declared blocks, with a fixed set of $q<r_B:=\lceil B/2\rceil$ arbitrary blocks and honest complement $\mathcal H$. Each $b\in\mathcal H$ contains $n_b$ independent clean draws followed by at most $s_b$ adaptive replacements. If $g(t)=\E\psi_c(X-t)$ satisfies $g(\theta+u)\leq-au$ and $g(\theta-u)\geq au$ for $0\leq u\leq r_0$, define, for $0<t\leq r_0$,
\[
\begin{gathered}
H=B-q,\quad \eta_q=(r_B-q)/(B-q),\quad
p_b(t)=\exp\!\left[-\frac{n_b}{2M_c^2}
\left(at-\frac{2s_bM_c}{n_b}\right)_+^2\right],\\
\bar p_H(t)=H^{-1}\!\sum_{b\in\mathcal H}p_b(t),\quad
\beta_H(t)=\min\{1,2e^{-2H(\eta_q-\bar p_H(t))^2}\},\\
\widehat C_B=|\widehat\theta_{\mathrm{RoVR}}-\widehat\mu_0|,
\qquad C_B=\nu_{\max}BK/(2D_KW_B).
\end{gathered}
\]
Only the population score separation is distributional; replacement values may be unbounded.

\begin{theorem}[Finite-group reference and credit stability]
\label{thm:main_finite_credit}
If $\bar p_H(t)<\eta_q$, then $\widehat C_B\leq C_B$ and
\begin{equation}
\Pp\!\left(|\widehat\theta_{\mathrm{RoVR}}-\theta|>t+\widehat C_B\right)
\leq\beta_H(t),\qquad
\Pp\!\left(|\widehat\theta_{\mathrm{RoVR}}-\theta|>t+C_B\right)
\leq\beta_H(t).
\label{eq:main_finite_group}
\end{equation}
Suppose additionally that the observed reward group and its clean precursor differ in at most $r$ coordinates. Let $A_i^{\theta,\star}$ be the bounded credit of the precursor computed with population reference $\theta$, and set $e_\theta(t)=t+C_B$, $e_{R,i}=|R_i-R_i^\star|$, and
\(
\bar e_s(t)=\{r\kappa^2/G+2\kappa e_\theta(t)\}/(2s_{\min}).
\)
With probability at least $1-\beta_H(t)$, simultaneously for all responses,
\begin{equation}
|\widehat A_i^{\mathrm{credit}}-A_i^{\theta,\star}|
\leq
\frac{\min\{e_{R,i}+e_\theta(t),2\kappa\}}{s_{\min}}
+\frac{\kappa\bar e_s(t)}{s_{\min}^2}.
\label{eq:main_credit_stability}
\end{equation}
\end{theorem}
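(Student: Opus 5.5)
The plan is to separate a deterministic part (the size of the outer correction, and the algebra of the credit map) from a single probabilistic event (the median initializer $\widehat\mu_0$ lands within $t$ of $\theta$). I would first prove $\widehat C_B\le C_B$ with no assumptions, then derive \eqref{eq:main_finite_group} from a median-of-blocks concentration argument plus the triangle inequality, and finally obtain \eqref{eq:main_credit_stability} by a perturbation argument on that same event.

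\textbf{Step 1: deterministic bound on the correction.} Fix a block $b$ and write $J_{b,k}=J_0(\widetilde\mu_b-\widehat\mu_0-\widehat\nu\Delta_k/\sqrt{n_b})\in\{0,\tfrac12,1\}$. Since $\tau_k=k/(K+1)$, we have $\sum_k\tau_k=K/2$, so
\[
\sum_k (J_{b,k}-\tau_k)=\sum_k J_{b,k}-\tfrac K2\in[-\tfrac K2,\tfrac K2].
\]
Summing over the $B$ blocks and using $\widehat\nu\le\nu_{\max}$ from the clip in \eqref{eq:robust_sandwich_scale} gives $\widehat C_B\le \nu_{\max}BK/(2D_KW_B)=C_B$. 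This holds for every realization, so the second inequality in \eqref{eq:main_finite_group} follows from the first.

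\textbf{Step 2: blockwise and median concentration.} Fix an honest block $b$.
\begin{itemize}
\item The empirical score $u\mapsto n_b^{-1}\sum_{i\in\mathcal H_b}\psi_c(x_i-u)$ is nonincreasing, by convexity of $\rho_c$. Hence $\widetilde\mu_b>\theta+t$ implies that this empirical score is nonnegative at $\theta+t$.
\item Each of the $s_b$ adaptive replacements changes the empirical score by at most $2M_c/n_b$ in total, since $|\psi_c|\le M_c$. This holds whatever value is inserted, so unbounded or adaptive replacements are harmless.
\item For the clean sample, the population score at $\theta+t$ is at most $-at$ by the separation assumption. Hoeffding's inequality for the $[-M_c,M_c]$-valued summands then bounds the upper-side probability by $p_b(t)$. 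The lower side is symmetric.
\end{itemize}
Independence of clean draws across blocks makes the honest-block "upper-bad" indicators independent, with means at most $p_b(t)$. For $\widehat\mu_0>\theta+t$, at least $r_B$ local centers must exceed $\theta+t$; this covers both the central value for odd $B$ and the midpoint for even $B$. At most $q$ of these can come from arbitrary blocks, so at least $r_B-q=H\eta_q$ honest blocks must be upper-bad. Because $\bar p_H(t)<\eta_q$, Hoeffding over the $H$ honest blocks gives probability at most $e^{-2H(\eta_q-\bar p_H)^2}$. A union over the two sides gives $\Pp(|\widehat\mu_0-\theta|>t)\le\beta_H(t)$. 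Then $|\widehat\theta_{\mathrm{RoVR}}-\theta|\le|\widehat\mu_0-\theta|+\widehat C_B$ yields \eqref{eq:main_finite_group}.

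\textbf{Step 3: credit stability on the event $|\widehat\theta_R-\theta|\le e_\theta(t)$.} Set $z_i=\chi_\kappa(R_i-\widehat\theta_R)$ and $z_i^\star=\chi_\kappa(R_i^\star-\theta)$.
\begin{itemize}
\item By \eqref{eq:bounded_credit_map}, $\chi_\kappa$ is $1$-Lipschitz and bounded by $\kappa$. Hence $|z_i-z_i^\star|\le\min\{e_{R,i}+e_\theta,2\kappa\}$.
\item For the scales, split $G^{-1}\sum_j|z_j^2-z_j^{\star2}|$ by coordinate type. The at most $r$ corrupted coordinates contribute at most $\kappa^2$ each, since both squares lie in $[0,\kappa^2]$. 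Each clean coordinate contributes $|z_j-z_j^\star|\,|z_j+z_j^\star|\le 2\kappa e_\theta$.
\item Therefore $|\widehat s_\chi-s^\star|\le|\widehat s_\chi^2-s^{\star2}|/(\widehat s_\chi+s^\star)\le\bar e_s(t)$, because both scales are at least $s_{\min}$.
\item Finally, $|z_i/\widehat s_\chi-z_i^\star/s^\star|\le|z_i-z_i^\star|/\widehat s_\chi+|z_i^\star|\,|\widehat s_\chi-s^\star|/(\widehat s_\chi s^\star)$, which gives \eqref{eq:main_credit_stability} simultaneously for all $i$.
\end{itemize}

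The step I expect to be the main obstacle is Step 2. It requires three things to be checked carefully:
\begin{itemize}
\item that the even-$B$ midpoint and the mid-indicator convention still force $r_B$ centers past $\theta+t$;
\item that the counting of $r_B-q$ honest bad blocks is compatible with an arbitrary placement of the $q$ corrupted blocks;
\item that adaptive replacements do not destroy the independence used in Hoeffding's inequality.
\end{itemize}
I would handle the third point by conditioning on the clean draws, so that the replacement effect is a deterministic $2s_bM_c/n_b$ shift.
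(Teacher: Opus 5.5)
Your proposal is correct and tracks the paper's proof step for step: the deterministic envelope $\widehat C_B\le C_B$ via $\sum_k\tau_k=K/2$, the fixed-threshold inclusion of each honest block's root event in a clean Hoeffding event, Hoeffding over the $H$ honest blocks with the $r_B-q$ count and a two-sided union bound, the triangle inequality, and then the $1$-Lipschitz, $\kappa$-bounded $\chi_\kappa$ together with the $1/(2s_{\min})$-Lipschitz square root for the credit bound. On the independence point you flagged, the paper needs no conditioning: it applies independence and Hoeffding to the dominating clean events $E_b^{\pm}(t)$, which depend only on block $b$'s clean draws, so your ``upper-bad'' indicators should be those clean events rather than $\{\widetilde\mu_b>\theta+t\}$, since the latter need not be independent under adaptive replacements.
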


The credit bound includes the perturbed focal response. Its dependence on $e_{R,i}$ is capped by $2\kappa$, yielding magnitude-independent control under the stated geometry and population separation. Smaller $\kappa$ limits leverage more strongly, while larger $\kappa$ preserves moderate clean residuals more closely. Equation~\eqref{eq:credit_global_bound} gives the corresponding uniform advantage bound.

To characterize clean reference-estimation efficiency, write $\nu^2=\E\psi_c^2(X-\theta_\rho)/\{\E\psi_c'(X-\theta_\rho)\}^2$. For equal blocks of size $n$, assume independent honest block centers, $q_B=o(\sqrt B)$ arbitrary blocks, a $B^{-1/2}$-accurate initializer, an $o_{\Pp}(B^{-1/4})$ scale error, and the uniform local normal-score and equicontinuity conditions in \Cref{ass:transfer}.

\begin{theorem}[Clean outer efficiency]
\label{thm:main_clean_efficiency}
For fixed $K$, under the preceding transfer conditions,
\begin{equation}
\sqrt{Bn}(\widehat\theta_{\mathrm{RoVR}}-\theta_\rho)
\xrightarrow{d}\Normal(0,\nu^2V_K),\qquad
V_K:=\frac{\sum_{k,\ell}[\min(\tau_k,\tau_\ell)-\tau_k\tau_\ell]}
{\{\sum_k\phi(\Delta_k)\}^2}\longrightarrow\frac{\pi}{3}.
\label{eq:main_clean_efficiency}
\end{equation}
The limit first sends $(B,n)\to\infty$ at fixed $K$ and then refines the quantile grid.
\end{theorem}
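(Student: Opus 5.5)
The proof follows the VRMOM analysis of \citet{tu2021variancereducedmedianofmeansestimator}: linearize the composite-quantile correction around $\theta_\rho$ and read off the variance of the resulting sum of indicators. Write $Z_b=\sqrt n(\widetilde\mu_b-\theta_\rho)/\nu$ for the standardized honest block centers. By the uniform local normal-score condition in \Cref{ass:transfer}, each $Z_b$ has a distribution function $F_n$ with $\sup_z|F_n(z)-\Phi(z)|\to0$. The condition also gives the rate needed to control the resulting bias term: $\sqrt B\sup_{|z|\le C}|F_n(z)-\Phi(z)|\to0$, which is where the joint regime $(B,n)\to\infty$ enters. Put $u_0=\sqrt{Bn}(\widehat\mu_0-\theta_\rho)/\nu$; by the initializer assumption, $u_0=O_{\Pp}(1)$.

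\textbf{Step 1 (reduction to an empirical process).} Substitute into \eqref{eq:rovr} with equal blocks, so the prefactor is $\widehat\nu/(B\sqrt n D_K)$. The $k$-th argument of $J_0$ has sign opposite to
\[
Z_b-\frac{u_0}{\sqrt B}-\frac{\widehat\nu}{\nu}\Delta_k .
\]
The $q_B=o(\sqrt B)$ arbitrary blocks change the double sum by at most $q_BK$. After the prefactor $\sqrt{Bn}\cdot\widehat\nu/(B\sqrt nD_K)$ this contributes $O(q_B/\sqrt B)=o(1)$, so these blocks can be dropped. Ties have probability zero in the limit, or else contribute a bounded and equally negligible amount, so $J_0$ may be replaced by the ordinary indicator.

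\textbf{Step 2 (stochastic equicontinuity and scale error).} Define
\[
\mathbb G_B(s,v)=B^{-1/2}\sum_b\bigl[\ind\{Z_b\le v\Delta_k+s/\sqrt B\}-F_n(v\Delta_k+s/\sqrt B)\bigr].
\]
By the equicontinuity condition, $\mathbb G_B(s,v)-\mathbb G_B(0,1)=o_{\Pp}(1)$ uniformly over $|s|\le C$ and $|v-1|\le\delta_B$. The centered part is then
\[
B^{-1/2}\sum_{b,k}\bigl[\tau_k-\ind\{Z_b\le\Delta_k\}\bigr]+o_{\Pp}(1).
\]
For the mean part, a Taylor expansion gives
\[
F_n\!\left(\tfrac{\widehat\nu}{\nu}\Delta_k+\tfrac{u_0}{\sqrt B}\right)-\tau_k
=\phi(\Delta_k)\Bigl[\tfrac{u_0}{\sqrt B}+\bigl(\tfrac{\widehat\nu}{\nu}-1\bigr)\Delta_k\Bigr]+\text{remainder}.
\]
Summing over $k$, the scale-error term cancels at first order, because $\sum_k\phi(\Delta_k)\Delta_k=0$ by the symmetry of the grid. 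The second-order remainder is $O\bigl(|\widehat\nu/\nu-1|^2\bigr)=o_{\Pp}(B^{-1/2})$, and this is exactly where the $o_{\Pp}(B^{-1/4})$ scale rate is used. Multiplying by $B$ and by the prefactor, the $u_0$ term contributes $-u_0\,(\widehat\nu/\nu)\,D_K/D_K\to-u_0$, which cancels the initializer. What remains is
\[
\sqrt{Bn}(\widehat\theta_{\mathrm{RoVR}}-\theta_\rho)=\frac{\nu}{D_K\sqrt B}\sum_b\sum_k\bigl[\tau_k-\ind\{Z_b\le\Delta_k\}\bigr]+o_{\Pp}(1).
\]

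\textbf{Step 3 (CLT and grid limit).} The summands are independent across honest blocks, bounded, and have asymptotic covariance $\sum_{k,\ell}[\min(\tau_k,\tau_\ell)-\tau_k\tau_\ell]$ because $F_n\to\Phi$. A Lindeberg CLT for triangular arrays therefore gives the limit $\Normal(0,\nu^2V_K)$. For $V_K\to\pi/3$, I would read the numerator divided by $K^2$ as a Riemann sum converging to $\int\!\!\int[\min(s,t)-st]\,ds\,dt=1/12$. The term $D_K/K$ converges to $\int_0^1\phi(\Phi^{-1}(\tau))\,d\tau=\int\phi^2=1/(2\sqrt\pi)$. The ratio is then $(1/12)\cdot4\pi=\pi/3$.

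The main obstacle is Step 2. The $o_{\Pp}(1)$ control must be uniform over the random initializer and the random scale, and it must be achieved jointly with the normal-approximation bias $\sqrt B\,(F_n-\Phi)$. I would handle the uniformity with a bracketing argument over the VC class of half-lines. The bias I would take directly from \Cref{ass:transfer} rather than derive it.
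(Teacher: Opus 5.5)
Your proposal is correct and follows essentially the same route as the paper's proof of Theorem~\ref{thm:efficiency_transfer}. You discard the $q_B$ arbitrary blocks at cost $O(q_B/\sqrt B)$. You evaluate the honest mean part $M_n(t_0,\widehat s)$ using (T3) and scale orthogonality, so that the initializer cancels up to $(1-\widehat s)t_0$ and the $o_{\mathbb P}(B^{-1/4})$ scale rate removes the quadratic remainder. You control the fluctuation by the equicontinuity condition (T4), and you finish with the Lindeberg CLT and the Riemann-sum limits $A_K/K^2\to1/12$ and $D_K/K\to1/(2\sqrt\pi)$.

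The differences are cosmetic. The paper keeps $J_0$ and the mid-distribution function $F_n(z)=\E[J_0(Z_{b,n}-z)]$ throughout, so it needs no separate tie argument. Its (T3) is only the summed local bound on $M_n-M_\Phi$ near $(0,1)$. That weaker condition is all your Taylor step actually uses, whereas you attribute to the assumption the stronger $\sqrt B\sup_{|z|\le C}|F_n-\Phi|\to0$.
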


The outer factor improves on the median value $\pi/2$ under the stated local normal-score model. Total reference-estimation variance combines this factor with the local sandwich factor and, for unequal blocks, the effective sample size in \eqref{eq:effective_sample_size}.

The ratio channel instead needs calibration and a clipping-margin statement. Let $|\widehat m-m^\star|\leq e_m$ and $(\widehat q,q^\star)=(e^{\widehat m},e^{m^\star})$.

\begin{proposition}[Calibration and clipping-margin control]
\label{prop:main_softrovr_stability}
At differentiable inputs, the converged SoftRoVR map, and any fixed number of equivariantly initialized solver steps, obey
\[
\SoftRoVR(\ell+a\mathbf1)=\SoftRoVR(\ell)+a,\quad
\sum_t\partial\SoftRoVR(\ell)/\partial\ell_t=1,\quad
e^{-e_m}\leq\widehat q/q^\star\leq e^{e_m}.
\]
A clipping branch can change only if
$\operatorname{dist}(m^\star,\{\log l,\log u\})\leq e_m$.
\end{proposition}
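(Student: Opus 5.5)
The plan is to prove translation equivariance first and obtain the derivative identity by differentiating it. Calibration of $\widehat q$ and the clipping-margin claim then follow from monotonicity of the exponential and from \eqref{eq:clipping_branch_derivative}.

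\textbf{Equivariance.} I would show that every primitive in $\SoftRoVR_{\gamma,\eta,S}$ commutes with the shift $\ell\mapsto\ell+a\mathbf1$; the claim then follows by composition.
\begin{itemize}
\item Each block center $\widetilde\mu_b$ solves a strictly convex problem whose objective depends only on $\ell_t-u$. Shifting the data by $a$ shifts the unique minimizer by $a$.
\item The smooth median $\operatorname{smed}_\eta$ of \eqref{eq:smooth_median} is likewise a unique minimizer of a function of $v_b-u$, so it is translation equivariant.
\item The sandwich quantities $\widehat a_b$ and $\widehat b_b$ depend only on the residuals $\ell_t-\widetilde\mu_b$. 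Hence they, $\widehat\nu_b$, the smoothed pooled scale and its caps are all translation invariant.
\item In the outer correction the arguments $\widetilde\mu_b-\widehat\mu_0-\widehat\nu\Delta_k/\sqrt{n_b}$ are invariant, so the correction term is invariant. The output is therefore $\widehat\mu_0+a$ minus an unchanged correction, i.e.\ the original output plus $a$.
\end{itemize}
For the unrolled version, each safeguarded solver step has the form $u_{s+1}=u_s-(\text{step depending only on residuals }\ell-u_s)$. An equivariant initialization $u_0(\ell+a\mathbf1)=u_0(\ell)+a$ then propagates equivariance to every iterate by induction. The same argument covers any fixed $S$ as well as the converged map.

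\textbf{Unit gradient sum.} At a differentiable input $\ell$, I differentiate the identity $\SoftRoVR(\ell+a\mathbf1)=\SoftRoVR(\ell)+a$ in $a$ at $a=0$. The chain rule gives $\sum_t\partial\SoftRoVR(\ell)/\partial\ell_t=1$.

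The only point needing care is that the converged block centers and smooth medians are differentiable with well-defined derivatives. For this I invoke the implicit function theorem on the score equations. The relevant derivatives are $\psi_c'>0$ and $\psi_\eta'>0$, so the Jacobians are nonzero and the implicit maps are $C^1$; $H_\gamma$ is smooth. The caps and the $a_{\min}$ floor are nondifferentiable only on a null set, and these points are excluded by the hypothesis ``at differentiable inputs''. I expect this regularity bookkeeping to be the main obstacle, mostly because the safeguards have kinks. The remedy I would try first is to note that equivariance holds everywhere, so the derivative identity follows wherever the directional derivative along $\mathbf1$ exists.

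\textbf{Ratio bound and clipping margin.} Since $\widehat q/q^\star=e^{\widehat m-m^\star}$ and $|\widehat m-m^\star|\le e_m$, monotonicity of $\exp$ gives $e^{-e_m}\le\widehat q/q^\star\le e^{e_m}$.

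For the clipping claim, \eqref{eq:clipping_branch_derivative} shows that, for a fixed sign of $A$, the active branch is determined by which side of $\log l$ or $\log u$ the log-ratio $m$ lies on. Suppose $\operatorname{dist}(m^\star,\{\log l,\log u\})>e_m$. Then the interval $[m^\star-e_m,m^\star+e_m]$ contains neither boundary. Both $m^\star$ and $\widehat m$ lie in this interval, so they sit on the same side of each boundary and select the same branch. Taking the contrapositive, a branch change requires $\operatorname{dist}(m^\star,\{\log l,\log u\})\le e_m$.
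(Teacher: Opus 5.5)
Your proposal is correct and follows the paper's own argument. You compose the translation equivariance of each primitive (convex local roots and smooth medians equivariant; residual-based scales, caps and correction arguments invariant), differentiate along $\mathbf 1$ to get the unit gradient sum, use monotonicity of $\exp$ for the ratio bound, and observe that $m^\star$ and $\widehat m$ share a closed $e_m$-interval containing neither $\log l$ nor $\log u$. The one detail to tighten is that a safeguarded Newton/bisection step depends on the current bracket as well as on $\ell-u_s$, so your induction should carry the full solver state: the iterate together with the bracket endpoints, which start at the equivariant sample minimum and maximum.
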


The proposition links log-weight error to the distance from a clipping boundary. \Cref{prop:policy_stability} extends this relation to same-branch objective and gradient perturbations under explicit Jacobian conditions.

\long\def\rovrDeferredTheory{
\section{Detailed Theoretical Statements}
\label{app:theory_statements}

This section connects estimator robustness to the group-relative update in four steps. We first give deterministic and fixed-group guarantees under mixed block and observation contamination. We then control the complete bounded credit and its target-fidelity cost. A clean transfer result separates the local sandwich variance from the outer factor and exploits scale orthogonality. The final results cover smoothing, clipping branches, linear leave-one-out baselines, and the bias introduced by robust credit shaping. Throughout, $\mathcal I(a,b)=[\min\{a,b\},\max\{a,b\}]$ denotes the closed interval between two real numbers.

\subsection{Mixed contamination at fixed group size}

Bounded score alone does not bound the displacement of a convex M-root because $\psi_c'$ vanishes in the tails. A deterministic comparison of two realized roots therefore requires curvature on the interval between them. This result is useful for token replacements, where no independence model is imposed.

\begin{proposition}
\label{prop:block_replacement}
Let datasets $D=(x_1,\ldots,x_n)$ and $D'=(x'_1,\ldots,x'_n)$ differ in at most $s$ coordinates, with convex M-centers $T,T'$. If
\[
\inf_{u\in\mathcal I(T,T')}
\frac1n\sum_{i=1}^{n}\psi_c'(x_i-u)\geq a_0>0,
\]
then
\begin{equation}
|T'-T|\leq\frac{2sM_c}{na_0}.
\label{eq:block_replacement_sensitivity}
\end{equation}
The same conclusion holds when the curvature condition is imposed on $D'$.
\end{proposition}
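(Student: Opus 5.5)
The plan is to compare the two score equations through the mean value theorem. Define the empirical scores
\[
\Psi(u)=\frac1n\sum_{i=1}^{n}\psi_c(x_i-u),
\qquad
\Psi'(u)=\frac1n\sum_{i=1}^{n}\psi_c(x'_i-u).
\]
Both maps are continuous and nonincreasing in $u$, because $\psi_c'>0$. Strict convexity of the objective gives the unique roots $\Psi(T)=0$ and $\Psi'(T')=0$.

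\textbf{Step 1: the score gap at $T'$.} Since $D$ and $D'$ differ in at most $s$ coordinates, and each differing term contributes at most $|\psi_c(x_i-T')-\psi_c(x'_i-T')|\leq 2M_c$, we get
\[
|\Psi(T')|=|\Psi(T')-\Psi'(T')|\leq\frac{2sM_c}{n}.
\]

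\textbf{Step 2: convert the score gap into a location gap.} Apply the mean value theorem to $\Psi$ on $\mathcal I(T,T')$:
\[
\Psi(T)-\Psi(T')=-(T-T')\cdot\frac1n\sum_i\psi_c'(x_i-\xi)
\]
for some $\xi\in\mathcal I(T,T')$. Equivalently, write it as the integral
\[
\int_{\mathcal I(T,T')}\frac1n\sum_i\psi_c'(x_i-u)\,du,
\]
which avoids the pointwise choice of $\xi$. The curvature hypothesis bounds the average derivative below by $a_0$ on the whole interval. Combined with $\Psi(T)=0$, this yields
\[
a_0|T-T'|\leq|\Psi(T')|\leq\frac{2sM_c}{n},
\]
which is \eqref{eq:block_replacement_sensitivity}.

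\textbf{Step 3: symmetric case.} If the curvature condition is instead imposed on $D'$, swap the roles of the two datasets. Evaluate $\Psi'$ at $T$, use $|\Psi'(T)-\Psi(T)|\leq 2sM_c/n$, and apply the mean value theorem to $\Psi'$ on the same interval.

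The only delicate point is Step 2. The lower bound on curvature must hold on the interval between the two realized roots, not just at one of them. That is exactly what the hypothesis supplies, and it explains the paper's remark that a bounded score alone does not suffice, since $\psi_c'$ vanishes in the tails. Everything else is a direct counting bound using $M_c=\sup_u|\psi_c(u)|=1/c$ from \eqref{eq:convex_score}. The case $T=T'$ is trivial.
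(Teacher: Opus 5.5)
Your proposal is correct and matches the paper's proof. Both bound the score perturbation from the $s$ replacements by $2sM_c/n$, apply the mean-value theorem to the $D$-score on $\mathcal I(T,T')$ using the curvature floor $a_0$, and swap the roles of $D$ and $D'$ for the alternative condition; you evaluate the score gap only at $T'$ rather than the paper's uniform $\sup_u|S_D(u)-S_{D'}(u)|$, which changes nothing.
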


For reward groups, the main stochastic result avoids conditioning on this unknown root path. Let $g(t)=\E\psi_c(X-t)$ and suppose, for $0\leq u\leq r_0$,
\[
g(\theta+u)\leq-au,
\qquad
g(\theta-u)\geq au
\]
for $a>0$. In honest block $b$, let at most $s_b$ observations be replaced after drawing $n_b$ independent clean observations. Define
\begin{equation}
p_b(t)=
\exp\!\left[
-\frac{n_b}{2M_c^2}
\left(at-\frac{2s_bM_c}{n_b}\right)_+^2
\right].
\label{eq:block_mixed_tail}
\end{equation}
This bound follows by evaluating the contaminated score at the fixed threshold $\theta\pm t$; it does not require the empirical root to remain in a postulated curvature basin.

Random reward blocks only justify a within-block load statement if the partition is drawn independently of the corrupted set. The next proposition makes that scope explicit.

\begin{proposition}
\label{prop:random_partition}
Fix a set of $m$ corrupted indices among $N$ observations, and draw a balanced partition independently and uniformly at random. If $S_b$ is the number of corrupted indices in a block of size $n_b$, then for every $u>0$,
\begin{equation}
\Pp\!\left(
\max_{1\leq b\leq B}
\left\{S_b-\frac{n_bm}{N}\right\}\geq u
\right)
\leq\sum_{b=1}^{B}\exp\!\left(-\frac{2u^2}{n_b}\right)
\leq B\exp\!\left(-\frac{2u^2}{n_{\max}}\right).
\label{eq:random_partition_load}
\end{equation}
The statement does not apply to corruption selected after observing the partition.
\end{proposition}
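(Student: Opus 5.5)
The statement to prove is \Cref{prop:random_partition}, which is a concentration statement for a hypergeometric count. The plan is to work with one block at a time and then take a union bound.

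\emph{Step 1: identify the law of a single load.} Because the balanced partition is drawn uniformly at random and independently of the fixed corrupted set, each block $b$ is marginally a uniformly random subset of size $n_b$ of the $N$ indices. Its corruption count $S_b$ is therefore hypergeometric with population $N$, $m$ marked items and $n_b$ draws without replacement, so $\E S_b=n_bm/N$. The only property of the partition used here is this exchangeability, which depends on the independence assumption. For corruption chosen after seeing the partition, $S_b$ can be adversarial, and this is the scope restriction stated at the end of the proposition.

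\emph{Step 2: one-sided Hoeffding bound.} Write $S_b=\sum_{r=1}^{n_b}Z_r$, where the $Z_r\in\{0,1\}$ are indicators of the draws taken without replacement. By Hoeffding's comparison theorem for sampling without replacement, $\E\exp(\lambda S_b)\le\E\exp(\lambda S_b^{\mathrm{wr}})$ for the with-replacement binomial analogue $S_b^{\mathrm{wr}}$, because the exponential is convex. The standard Chernoff and Hoeffding-lemma argument for i.i.d.\ $[0,1]$ summands then gives
\[
\Pp\bigl(S_b-n_bm/N\ge u\bigr)\le\exp\!\left(-\frac{2u^2}{n_b}\right),
\]
since the range sum is $\sum_r(1-0)^2=n_b$. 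As an alternative, a bounded-differences argument on a random permutation would also work, but Hoeffding's reduction is cleaner and gives the exact constant.

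\emph{Step 3: union bound and uniformization.} The event that the maximum over $b$ is at least $u$ is the union of the block events, so a union bound gives the first inequality. Since $n_b\le n_{\max}$, each term satisfies $\exp(-2u^2/n_b)\le\exp(-2u^2/n_{\max})$, and summing the $B$ terms gives the second inequality. No independence across blocks is needed; this matters because the block loads are negatively dependent, as they sum to $m$.

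The main obstacle is only Step 2. Specifically, one has to justify Hoeffding's without-replacement comparison rather than assuming independence of the $Z_r$. I would cite Hoeffding's 1963 Theorem 4 for it. If a self-contained argument is preferred, I would instead note that the exposure martingale of a uniformly random subset has increments bounded by one, which yields the same $2u^2/n_b$ exponent via Azuma--Hoeffding.
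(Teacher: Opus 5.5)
Your proposal is correct and follows the paper's proof: $S_b$ is hypergeometric with mean $n_bm/N$, Hoeffding's inequality for sampling without replacement gives $\exp(-2u^2/n_b)$, and a union bound together with $n_b\leq n_{\max}$ gives both inequalities. One small caution on your Azuma fallback: it yields the exponent $2u^2/n_b$ only because each exposure-martingale increment lies in an interval of length at most one, since its two possible values differ by $1-(n_b-r)/(N-r)\in[0,1]$. A bound of the form $|d_r|\leq1$ alone would give only $u^2/(2n_b)$.
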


We now combine non-identical honest-block tails with two distinct contamination units. Let $\mathcal A$ contain $q$ arbitrary blocks and let $\mathcal H$ be its complement, with $H=B-q$, $r_B=\lceil B/2\rceil$, and $q<r_B$. The set $\mathcal A$ must be fixed before the clean draws or sampled independently of them; values in these blocks may then be arbitrary and may depend on all clean observations. Latent clean observations in the honest blocks are mutually independent within and across blocks. After those observations are drawn, an adversary may choose the coordinates and values of at most $s_b$ replacements inside each honest block $b$. Thus the replacement mechanism may be adaptive to all clean values, but neither the honest/arbitrary block labels nor a partition invoked through \Cref{prop:random_partition} may be selected from those values. Define
\[
\overline p_H(t)=\frac1H\sum_{b\in\mathcal H}p_b(t),
\qquad
\eta_q=\frac{r_B-q}{B-q},
\]
and define the observable outer-correction magnitude and its deterministic envelope
\begin{equation}
\widehat C_B=
|\widehat\theta_{\mathrm{RoVR}}-\widehat\mu_0|,
\qquad
C_B=\frac{\nu_{\max}BK}{2D_KW_B},
\qquad \widehat C_B\leq C_B.
\label{eq:correction_magnitude}
\end{equation}
The value $\widehat C_B$ is computable from a realized sample, but it is not by itself a confidence radius. The probability statement below additionally fixes $t$ and depends on the population-separation quantities $(a,r_0)$ through $p_b(t)$; these quantities are not estimated post hoc from the same batch.

\begin{theorem}
\label{thm:finite_group}
Suppose the contamination timing and independence conditions above hold and every honest block satisfies the population score separation used in \eqref{eq:block_mixed_tail}. For $0<t\leq r_0$, if $\overline p_H(t)<\eta_q$, set
\[
\beta_H(t)=
\min\!\left\{1,\,2\exp\!\left[-2H
\{\eta_q-\overline p_H(t)\}^2\right]\right\}.
\]
Then the tie-neutral estimator in \eqref{eq:rovr} satisfies both
\begin{equation}
\Pp\!\left(
|\widehat\theta_{\mathrm{RoVR}}-\theta|>t+\widehat C_B
\right)
\leq\beta_H(t),
\qquad
\Pp\!\left(
|\widehat\theta_{\mathrm{RoVR}}-\theta|>t+C_B
\right)
\leq\beta_H(t).
\label{eq:finite_rovr_bound}
\end{equation}
\end{theorem}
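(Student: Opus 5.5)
Our strategy is to split the RoVR error into the initializer error and the outer correction, $|\widehat\theta_{\mathrm{RoVR}}-\theta|\leq|\widehat\mu_0-\theta|+\widehat C_B$, and then control the two pieces separately. The initializer is handled probabilistically through a median-of-blocks argument. The correction is handled deterministically from the form of \eqref{eq:rovr}.

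We begin with the deterministic envelope $\widehat C_B\leq C_B$. Because $J_0$ takes values in $\{0,\tfrac12,1\}$ and $\tau_k\in(0,1)$, each summand $J_0(\cdot)-\tau_k$ lies in $[-\tau_k,1-\tau_k]$. That alone would give $|\cdot|\leq\max\{\tau_k,1-\tau_k\}$, which is weaker than the $1/2$ needed for $C_B$, so we refine it using symmetry. The grid is symmetric, $\tau_{K+1-k}=1-\tau_k$ and $\Delta_{K+1-k}=-\Delta_k$. We therefore pair $k$ with $K+1-k$ within each block.

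Write $v=\widetilde\mu_b-\widehat\mu_0$ and $h=\widehat\nu\Delta_k/\sqrt{n_b}\geq0$. The paired contribution is $J_0(v-h)+J_0(v+h)-1$. By monotonicity of $J_0$ this lies in $[-1,1]$, and it equals $\pm1$ only in the extreme configurations. Summing over pairs (a middle index, when $K$ is odd, contributes $J_0(v)-\tfrac12\in[-\tfrac12,\tfrac12]$) bounds the inner sum over $k$ by $K/2$ in absolute value for each block. With $\widehat\nu\leq\nu_{\max}$ we then obtain $\widehat C_B\leq \nu_{\max}BK/(2D_KW_B)=C_B$.

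The second inequality in \eqref{eq:finite_rovr_bound} follows from the first once this envelope is in hand. It therefore suffices to show $\Pp(|\widehat\mu_0-\theta|>t)\leq\beta_H(t)$.

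\textbf{Honest-block tail.} For an honest block $b$, the M-center objective is convex, so the empirical score $\widehat g_b(u)=n_b^{-1}\sum\psi_c(x_i-u)$ is nonincreasing. Hence $\widetilde\mu_b>\theta+t$ implies $\widehat g_b(\theta+t)\geq0$ for the contaminated data. Replacing $s_b$ coordinates changes $\widehat g_b$ at the fixed point $\theta+t$ by at most $2s_bM_c/n_b$. The clean score there has mean $g(\theta+t)\leq-at$ and summands bounded in $[-M_c,M_c]$.

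Hoeffding's inequality therefore bounds the one-sided probability by $\exp[-n_b(at-2s_bM_c/n_b)_+^2/(2M_c^2)]=p_b(t)$. The same argument applies to the lower side. Since the replacement is bounded deterministically at a fixed threshold, adaptivity of the adversary is harmless.

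\textbf{Median step and main obstacle.} This is the step we expect to be the main obstacle. On the event $\widehat\mu_0>\theta+t$, the midpoint/central-median convention forces at least $r_B$ blocks with $\widetilde\mu_b>\theta+t$. At most $q$ of these can be arbitrary, so at least $r_B-q$ honest blocks exceed the threshold. The indicators of this event are independent across honest blocks, since the clean draws are independent and replacements act within a block; each indicator has mean at most $p_b(t)$.

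The difficulty is that adaptivity could couple blocks. We handle this by dominating each indicator with the clean-data event "$\widehat g_b^{\mathrm{clean}}(\theta+t)\geq -2s_bM_c/n_b$". This event depends only on the clean draws of block $b$, which restores independence.

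Hoeffding's inequality for these non-identical Bernoulli variables then gives $\Pp(\sum\geq H\eta_q)\leq\exp[-2H(\eta_q-\bar p_H)^2]$. Adding the symmetric lower-side event yields the factor $2$, and capping at $1$ gives $\beta_H(t)$. For even $B$ we must check that the midpoint exceeding $\theta+t$ still forces $r_B=B/2$ of the centers above it; this holds because the upper central center must exceed the midpoint.
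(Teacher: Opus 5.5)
Your proposal is correct and follows the paper's proof essentially step for step: the triangle split $|\widehat\theta_{\mathrm{RoVR}}-\theta|\leq|\widehat\mu_0-\theta|+\widehat C_B$, the deterministic envelope $\widehat C_B\leq C_B$, domination of each honest block's tail event by a clean-data event at the fixed threshold $\theta\pm t$ (which is what makes adaptive replacements harmless), Hoeffding for independent non-identical Bernoulli counts, and a two-sided union bound. Two small cleanups: the pairing argument is unnecessary, since summing $J_0(\cdot)-\tau_k\in[-\tau_k,1-\tau_k]$ over $k$ with $\sum_k\tau_k=K/2$ gives the $K/2$ bound directly (this is how the paper does it), and you should drop the earlier sentence claiming the contaminated-data indicators are independent, because under adaptive replacements only the clean dominating events are independent.
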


\begin{corollary}
\label{cor:partition_to_estimator}
Under \Cref{thm:finite_group} with $q=0$, suppose the complete set of $m$ replacement indices is fixed before an independently sampled uniform balanced partition. For $0<\delta_{\rm part}<1$, use the block-specific deviation
\[
u_b(\delta_{\rm part})
=\sqrt{\frac{n_b}{2}\log\frac{B}{\delta_{\rm part}}},
\qquad
s_b(\delta_{\rm part})=
\min\!\left\{n_b,
\left\lceil\frac{n_bm}{N}+u_b(\delta_{\rm part})\right\rceil\right\}.
\]
Thus each block uses its own $n_b$ rather than a common deviation based on $n_{\max}$. Form $p_{b,\delta}(t)$ from \eqref{eq:block_mixed_tail}, and let
$\overline p_{B,\delta}(t)=B^{-1}\sum_bp_{b,\delta}(t)$ and
$\eta_0=\lceil B/2\rceil/B$. If
$\overline p_{B,\delta}(t)<\eta_0$, define
\[
\beta_{B,\delta}(t)=
\min\!\left\{1,\,
\delta_{\rm part}
+\min\!\left\{1,\,2\exp\!\left[-2B
\{\eta_0-\overline p_{B,\delta}(t)\}^2\right]\right\}
\right\}.
\]
Then, jointly over the clean sample and partition,
\begin{equation}
\Pp\!\left(
|\widehat\theta_{\mathrm{RoVR}}-\theta|>t+\widehat C_B
\right)
\leq\beta_{B,\delta}(t),
\qquad
\Pp\!\left(
|\widehat\theta_{\mathrm{RoVR}}-\theta|>t+C_B
\right)
\leq\beta_{B,\delta}(t).
\label{eq:partition_estimator_bound}
\end{equation}
The conclusion does not cover indices selected after the partition, a block designation selected by inspecting realized block values, or corruption that can observe the partition RNG before choosing its support. As in \Cref{thm:finite_group}, the conditional error statement still depends on the declared population-separation model.
\end{corollary}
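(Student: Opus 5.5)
The plan is a union bound over the two independent sources of randomness, partition and clean sample, with \Cref{thm:finite_group} (at $q=0$) doing the estimator work on a high-probability partition event.

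\textbf{Step 1: partition event.} The replacement index set is fixed before the partition is drawn, so \Cref{prop:random_partition} applies block by block. I would use its first, block-specific sum rather than the $n_{\max}$ form. For each block $b$ take $u=u_b(\delta_{\rm part})$. Then
\[
\exp(-2u_b^2/n_b)=\delta_{\rm part}/B .
\]
Summing over the $B$ blocks gives
\[
\Pp\bigl(\exists b:\ S_b-n_bm/N\geq u_b\bigr)\leq\delta_{\rm part}.
\]
Define $E_{\rm part}=\{S_b<n_bm/N+u_b\ \forall b\}$. Since $S_b$ is an integer, on $E_{\rm part}$ we have $S_b\leq\lceil n_bm/N+u_b\rceil$. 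Trivially $S_b\leq n_b$ as well, so $S_b\leq s_b(\delta_{\rm part})$ for every block.

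I need to check that \Cref{prop:random_partition} is really stated per block with block-specific $u$. Its proof is Hoeffding for sampling without replacement applied to each block separately, so I would rerun that argument with $u_b$ in place of a common $u$. This is a minor but necessary point.

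\textbf{Step 2: conditional application of \Cref{thm:finite_group}.} Condition on a realized partition in $E_{\rm part}$. Because the partition is independent of the clean sample, the clean draws are still independent given the partition. With $q=0$, all blocks are honest, so $H=B$ and $\eta_q=\eta_0$. Each block then contains its $n_b$ clean draws with at most $s_b(\delta_{\rm part})$ replacements.

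The replacement values may still be adaptive to the clean values. This matches the contamination model of \Cref{thm:finite_group}, where only coordinates and values are chosen after the clean draws and within-block counts are bounded. Here the coordinates were fixed even earlier, which is a special case.

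Because $p_b(t)$ is increasing in $s_b$, and \Cref{thm:finite_group} only needs an upper bound on the replacement count, the hypothesis $\overline p_{B,\delta}(t)<\eta_0$ yields a conditional failure probability of at most $\min\{1,2\exp[-2B(\eta_0-\overline p_{B,\delta})^2]\}$. This holds for both events in \eqref{eq:finite_rovr_bound}.

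\textbf{Step 3: combine.} Integrate the conditional bound over partitions in $E_{\rm part}$ and add $\Pp(E_{\rm part}^c)\leq\delta_{\rm part}$. The result is $\delta_{\rm part}$ plus the conditional bound. Capping at $1$ gives $\beta_{B,\delta}(t)$. The $C_B$ version follows from the $\widehat C_B$ version together with $\widehat C_B\leq C_B$ from \eqref{eq:correction_magnitude}, or can be taken directly from the theorem.

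\textbf{Main obstacle.} The delicate step is justifying the conditioning in Step 2. \Cref{thm:finite_group} takes the block structure as fixed before the clean draws. I must argue that conditioning on the partition changes neither the distribution of the clean observations nor the adversary's power. Independence of the partition RNG from both the clean sample and the pre-fixed index set gives exactly this. The excluded scenarios listed in the corollary are precisely those where this independence fails.
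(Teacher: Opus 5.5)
Your proposal is correct and follows the paper's proof essentially step for step: block-specific Hoeffding thresholds $u_b$ with a union bound give the load event with probability at least $1-\delta_{\rm part}$; on that event you apply \Cref{thm:finite_group} conditionally with $q=0$ (so $H=B$ and $\eta_q=\eta_0$), using the partition's independence from the clean sample and the pre-fixed support; a final union bound over the load event then finishes the argument. Your appeal to monotonicity of $p_b$ in $s_b$ is harmless but unnecessary, because the theorem's hypothesis is already the upper bound ``at most $s_b$ replacements.''
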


The theorem covers fixed $B$ and balanced unequal block sizes under the declared population-separation model. The quantity $\widehat C_B$ measures the realized outer correction; an error radius $t+\widehat C_B$ additionally uses the ex ante threshold $t$ and separation parameters $(a,r_0)$. At $B=1$, the estimator equals the global M-center. Clean outer-efficiency rates are characterized separately in \Cref{sec:clean_efficiency}.

\subsection{Target, bounded response advantage, and retained contrast}

Robust location changes its population target under asymmetry, while bounded credit changes the reward utility. This subsection separates both effects from sampling and contamination error.

\begin{proposition}
\label{prop:asymmetric_bias}
Let $\mu=\E X$ be finite, $g(t)=\E\psi_c(X-t)$, and $\theta_\rho$ be a root of $g$. If
\[
\inf_{u\in\mathcal I(\mu,\theta_\rho)}
\E\psi_c'(X-u)\geq a_0>0,
\]
then
\begin{equation}
|\theta_\rho-\mu|
\leq\frac{|\E\psi_c(X-\mu)|}{a_0}.
\label{eq:asymmetric_functional_bias}
\end{equation}
If $\E|X-\mu|^3<\infty$ and
\[
\inf_{u\in\mathcal I(\mu,\theta_\rho)}
\E\!\left[\left\{1+\frac{(X-u)^2}{c^2}\right\}^{-3/2}\right]
\geq \bar a_0>0,
\]
then
\begin{equation}
|\theta_\rho-\mu|
\leq\frac{\E|X-\mu|^3}{2\bar a_0c^2}.
\label{eq:third_moment_bias}
\end{equation}
\end{proposition}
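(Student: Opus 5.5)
The plan is a mean-value argument on the population score $g(t)=\E\psi_c(X-t)$. Since $|\psi_c|\le M_c=1/c$ and $\psi_c'$ is bounded by $1/c^2$, $g$ is finite and differentiable for every $t$. Dominated convergence gives $g'(t)=-\E\psi_c'(X-t)$. Because $g(\theta_\rho)=0$, we can write
\[
-\E\psi_c(X-\mu)=g(\theta_\rho)-g(\mu)=-(\theta_\rho-\mu)\,\E\psi_c'(X-\xi)
\]
for some $\xi\in\mathcal I(\mu,\theta_\rho)$. It is cleaner to use the integral form $g(\theta_\rho)-g(\mu)=-(\theta_\rho-\mu)\int_0^1\E\psi_c'(X-\mu-s(\theta_\rho-\mu))\,ds$, which avoids any worry about where the intermediate point lies. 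The curvature hypothesis bounds the integrand below by $a_0$ on the whole interval, and dividing gives \eqref{eq:asymmetric_functional_bias}. The degenerate case $\theta_\rho=\mu$ is trivial.

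For \eqref{eq:third_moment_bias}, the step is to bound $|\E\psi_c(X-\mu)|$ using $\E(X-\mu)=0$. Write $\psi_c(u)=c^{-2}\chi_c(u)$, where $\chi_c$ is the map in \eqref{eq:bounded_credit_map}. Then
\[
\E\psi_c(X-\mu)=c^{-2}\E[\chi_c(X-\mu)-(X-\mu)],
\]
and the fidelity bound \eqref{eq:credit_clean_fidelity} gives $|\chi_c(u)-u|\le |u|^3/(2c^2)$. Hence $|\E\psi_c(X-\mu)|\le \E|X-\mu|^3/(2c^4)$. I would verify \eqref{eq:credit_clean_fidelity} quickly: for $|u|\le\kappa$ it follows from $1-(1+x)^{-1/2}\le x/2$ with $x=u^2/\kappa^2$, and for $|u|>\kappa$ it follows from $|\chi-u|\le|u|\le|u|^3/(2\kappa^2)$ only when $u^2\ge2\kappa^2$. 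The general case actually comes from $|u|(1-(1+x)^{-1/2})\le |u|x/2$, which holds for all $x\ge0$ because $(1+x)^{-1/2}\ge 1-x/2$ by convexity.

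Next, $\E\psi_c'(X-u)=c^{-2}\E[\{1+(X-u)^2/c^2\}^{-3/2}]\ge \bar a_0/c^2$ on the interval, so we may take $a_0=\bar a_0/c^2$ in the first part. Combining the two bounds gives
\[
|\theta_\rho-\mu|\le \frac{\E|X-\mu|^3/(2c^4)}{\bar a_0/c^2}=\frac{\E|X-\mu|^3}{2\bar a_0c^2},
\]
as claimed.

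The main point needing care is justifying differentiation under the expectation and the mean-value/integral representation. Both follow from boundedness of $\psi_c'$ and dominated convergence, and finiteness of $\E X$ is needed only for $\mu$ to be defined. Everything else is an elementary inequality, so I expect no serious obstacle.
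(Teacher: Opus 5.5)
Your proposal is correct and matches the paper's proof essentially step for step: a mean-value argument on $g$ using $g(\theta_\rho)=0$ (your integral form is a minor variant), then $\E(X-\mu)=0$ with $1-(1+x)^{-1/2}\leq x/2$ to get $|\E\psi_c(X-\mu)|\leq \E|X-\mu|^3/(2c^4)$, and finally the substitution $a_0=\bar a_0/c^2$. The only blemish is expository: the case split on $|u|\leq\kappa$ versus $|u|>\kappa$ when checking the fidelity bound is unnecessary, since, as you then note, the convexity inequality holds for all $x\geq0$.
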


\begin{proposition}
\label{prop:population_contamination}
Let $P_\epsilon=(1-\epsilon)P_0+\epsilon Q$, where $P_0$ is symmetric about $\mu_0$. If the clean score obeys $|\E_{P_0}\psi_c(X-t)|\geq a_0|t-\mu_0|$ between $\mu_0$ and $\theta_\rho(P_\epsilon)$, then
\begin{equation}
|\theta_\rho(P_\epsilon)-\mu_0|
\leq\frac{\epsilon M_c}{(1-\epsilon)a_0}.
\label{eq:population_contamination_bias}
\end{equation}
\end{proposition}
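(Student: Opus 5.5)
The plan is to write $\theta_\epsilon:=\theta_\rho(P_\epsilon)$ and evaluate the mixture score at this root. By definition, $\E_{P_\epsilon}\psi_c(X-\theta_\epsilon)=0$. Expanding the mixture gives
\[
(1-\epsilon)\,\E_{P_0}\psi_c(X-\theta_\epsilon)+\epsilon\,\E_Q\psi_c(X-\theta_\epsilon)=0.
\]
Rearranging and bounding the contamination term,
\[
(1-\epsilon)\,\bigl|\E_{P_0}\psi_c(X-\theta_\epsilon)\bigr|=\epsilon\,\bigl|\E_Q\psi_c(X-\theta_\epsilon)\bigr|\leq\epsilon M_c.
\]
The last step uses only $\sup_u|\psi_c(u)|=M_c=1/c$ from \eqref{eq:convex_score}. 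This holds no matter what $Q$ is, so $Q$ may put mass arbitrarily far out; bounded score is the only property of $Q$ that enters.

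Next I would apply the assumed clean-score separation at $t=\theta_\epsilon$. This point is an endpoint of the interval $\mathcal I(\mu_0,\theta_\epsilon)$, on which the hypothesis is imposed. Thus $|\E_{P_0}\psi_c(X-\theta_\epsilon)|\geq a_0|\theta_\epsilon-\mu_0|$. Combining this with the display above gives $(1-\epsilon)a_0|\theta_\epsilon-\mu_0|\leq\epsilon M_c$. Dividing by $(1-\epsilon)a_0>0$ yields \eqref{eq:population_contamination_bias}; this implicitly takes $\epsilon<1$.

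Symmetry of $P_0$ about $\mu_0$ is not needed in the inequality chain itself. Because $\psi_c$ is odd, symmetry gives $\E_{P_0}\psi_c(X-\mu_0)=0$, so $\mu_0=\theta_\rho(P_0)$. This identifies $\mu_0$ as the clean M-functional and makes the separation hypothesis natural, since it holds trivially at $t=\mu_0$. I would also note that a root $\theta_\epsilon$ exists and is unique. The function $g_\epsilon(t)=\E_{P_\epsilon}\psi_c(X-t)$ is continuous and nonincreasing, and it tends to $\pm M_c$ as $t\to\mp\infty$ by dominated convergence. It is strictly decreasing because $\psi_c'>0$.

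The main subtlety is that the separation inequality must be used at the unknown root $\theta_\epsilon$, not in a fixed neighborhood of $\mu_0$. This is exactly why the hypothesis is stated on the interval between $\mu_0$ and $\theta_\rho(P_\epsilon)$. Beyond that point, the argument is a one-line score-balance computation.
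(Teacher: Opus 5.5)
Your proposal is correct and follows essentially the same route as the paper. Both proofs expand the mixture score equation at $\theta_\epsilon$, bound the $Q$-term by $\epsilon M_c$, and apply the clean-score separation at the endpoint $t=\theta_\epsilon$ before dividing by $(1-\epsilon)a_0$. Your side remarks are accurate but not needed for the bound: symmetry plays no role in the inequality chain, and existence and uniqueness of the root follow from strict monotonicity.
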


The next result controls every response, including the replaced focal response omitted by a center-only analysis.

\begin{proposition}
\label{prop:credit_stability}
Let reward groups $R,R^\star\in\R^G$ differ in at most $r$ coordinates, and suppose their RoVR centers differ by at most $e_\theta$. Construct scales and credits with the same $\chi_\kappa$ as in \eqref{eq:credit_advantage}. Then
\begin{equation}
e_s:=|\widehat s_\chi-s_\chi^\star|
\leq
\frac{r\kappa^2/G+2\kappa e_\theta}{2s_{\min}}.
\label{eq:credit_scale_stability}
\end{equation}
For any index $i$, let $e_{R,i}=|R_i-R_i^\star|$ and $d_{\chi,i}=\min\{e_{R,i}+e_\theta,2\kappa\}$. Then
\begin{equation}
|\widehat A_i^{\mathrm{credit}}-A_i^{\mathrm{credit},\star}|
\leq
\frac{d_{\chi,i}}{s_{\min}}
+\frac{\kappa e_s}{s_{\min}^2},
\qquad
|\widehat A_i^{\mathrm{credit}}|\leq\frac{\kappa}{s_{\min}}.
\label{eq:credit_stability}
\end{equation}
The bound is independent of the magnitude of a replaced reward. For an unchanged response, $d_{\chi,i}\leq e_\theta$ and its sign is preserved whenever $|R_i-\theta_R^\star|>e_\theta$.
\end{proposition}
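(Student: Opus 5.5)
The argument is deterministic and has three steps: a per-coordinate bound on the mapped residuals, a scale bound from the difference of squares, and a final quotient decomposition. Write $z_j=\chi_\kappa(R_j-\widehat\theta_R)$ and $z_j^\star=\chi_\kappa(R_j^\star-\theta_R^\star)$, so that $\widehat s_\chi^2=s_{\min}^2+G^{-1}\sum_j z_j^2$, with the analogous expression for $s_\chi^\star$.

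\textbf{Step 1: residual bounds.} We need two properties of $\chi_\kappa$ from \eqref{eq:bounded_credit_map}: it is $1$-Lipschitz because $0<\chi_\kappa'\leq1$, and it is bounded by $\kappa$. Together these give, for every index,
\[
|z_i-z_i^\star|\leq\min\{|R_i-R_i^\star|+|\widehat\theta_R-\theta_R^\star|,\,2\kappa\}\leq d_{\chi,i}.
\]
For unchanged coordinates this reduces to $|z_i-z_i^\star|\leq e_\theta$.

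\textbf{Step 2: scale bound.} Since $\widehat s_\chi+s_\chi^\star\geq2s_{\min}$, the difference of squares gives $e_s\leq|\widehat s_\chi^2-s_\chi^{\star2}|/(2s_{\min})$. We then split the sum $G^{-1}\sum_j(z_j^2-z_j^{\star2})$ into changed and unchanged coordinates.
\begin{itemize}
\item On the at most $r$ changed coordinates, each term has $|z_j^2-z_j^{\star2}|\leq\kappa^2$, since both squares lie in $[0,\kappa^2]$. This contributes at most $r\kappa^2/G$.
\item On an unchanged coordinate, $|z_j^2-z_j^{\star2}|=|z_j-z_j^\star|\,|z_j+z_j^\star|\leq e_\theta\cdot2\kappa$. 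Averaging over at most $G$ such terms contributes at most $2\kappa e_\theta$.
\end{itemize}
Adding the two pieces yields \eqref{eq:credit_scale_stability}.

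\textbf{Step 3: credit bound.} Decompose the quotient difference as
\[
\frac{z_i}{\widehat s_\chi}-\frac{z_i^\star}{s_\chi^\star}
=\frac{z_i-z_i^\star}{\widehat s_\chi}+z_i^\star\frac{s_\chi^\star-\widehat s_\chi}{\widehat s_\chi s_\chi^\star}.
\]
Using $\widehat s_\chi,s_\chi^\star\geq s_{\min}$ and $|z_i^\star|\leq\kappa$, the first term is at most $d_{\chi,i}/s_{\min}$ and the second is at most $\kappa e_s/s_{\min}^2$, which gives \eqref{eq:credit_stability}. The uniform bound $|\widehat A_i^{\mathrm{credit}}|\leq\kappa/s_{\min}$ is immediate from $|z_i|\leq\kappa$ and $\widehat s_\chi\geq s_{\min}$. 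Magnitude independence holds because every bound uses only $\kappa$, $r$, $e_\theta$ and $s_{\min}$; through $d_{\chi,i}$, the dependence on $e_{R,i}$ is capped at $2\kappa$.

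\textbf{Sign preservation.} For an unchanged response, $R_i-\widehat\theta_R$ differs from $R_i-\theta_R^\star$ by at most $e_\theta$. If $|R_i-\theta_R^\star|>e_\theta$, the two residuals share a sign. Because $\chi_\kappa$ is odd and strictly increasing, $z_i$ and $z_i^\star$ share a sign, and dividing by positive scales preserves it.

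The only mildly delicate point is the scale step. We must use $|z_j^2-z_j^{\star2}|\leq\kappa^2$ rather than the cruder $2\kappa\cdot2\kappa$ on changed coordinates to match the stated constant $r\kappa^2/G$. On unchanged coordinates the factorization is needed to obtain the $2\kappa e_\theta$ term. Everything else is routine.
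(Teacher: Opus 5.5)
Your proposal is correct and follows the paper's proof essentially step for step. Both arguments use the $1$-Lipschitz and $\kappa$-bounded properties of $\chi_\kappa$, split the squared-credit average into changed coordinates (at most $\kappa^2$ each) and unchanged ones (at most $2\kappa e_\theta$ each), and apply the $1/(2s_{\min})$ square-root bound, which you obtain via the difference of squares. The only cosmetic difference is in the quotient decomposition: you add and subtract $z_i^\star/\widehat s_\chi$ where the paper uses $z_i/s_\chi^\star$, and both give the same constant $\kappa e_s/s_{\min}^2$.
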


The center and credit bounds can be composed without hiding the focal replacement. Let $R^\star$ denote the clean precursor group and construct $A_i^{\theta,\star}$ from $R^\star$ using population reference $\theta$, the same $\chi_\kappa$, and the same $s_{\min}$.

\begin{corollary}
\label{cor:finite_credit}
Under \Cref{thm:finite_group}, suppose the observed and precursor reward groups differ in at most $r$ coordinates. Set
\[
e_\theta(t)=t+C_B,
\qquad
\overline e_s(t)=
\frac{r\kappa^2/G+2\kappa e_\theta(t)}{2s_{\min}},
\]
and $e_{R,i}=|R_i-R_i^\star|$. With probability at least $1-\beta_H(t)$, simultaneously for all responses,
\begin{equation}
|\widehat A_i^{\mathrm{credit}}-A_i^{\theta,\star}|
\leq
\frac{\min\{e_{R,i}+e_\theta(t),2\kappa\}}{s_{\min}}
+\frac{\kappa\overline e_s(t)}{s_{\min}^2}.
\label{eq:finite_credit_bound}
\end{equation}
The same population-separation-dependent probability statement holds with the observable correction magnitude
$e_\theta(t)=t+\widehat C_B$ and the corresponding
$\overline e_s(t)$. Hence the complete credit perturbation is independent of replacement magnitude, although it can still be numerically loose when $s_{\min}$ is small. A magnitude-independent policy-gradient statement additionally requires a score-Jacobian bound, a stable clipping branch, or explicit gradient clipping; \Cref{prop:policy_stability} states only the corresponding local conditions.
\end{corollary}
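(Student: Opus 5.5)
The plan is to compose \Cref{thm:finite_group} with \Cref{prop:credit_stability}, treating the precursor-with-population-reference as the comparison object. Let $E_t$ denote the event $\{|\widehat\theta_{\mathrm{RoVR}}-\theta|\leq t+C_B\}$. By \Cref{thm:finite_group}, $\Pp(E_t)\geq1-\beta_H(t)$. The analogous event with $\widehat C_B$ in place of $C_B$ also has probability at least $1-\beta_H(t)$. On $E_t$ the argument is purely deterministic, so the bound holds simultaneously for all responses without any union bound over $i$.

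Next I would rerun the proof of \Cref{prop:credit_stability} with a slightly different pairing. Rather than comparing two RoVR centers, compare the observed group $R$ with its RoVR center $\widehat\theta_R$ against the precursor $R^\star$ with the fixed reference $\theta$. The argument there uses only three ingredients: the reference gap $e_\theta$, the number $r$ of differing coordinates, and the bounded $1$-Lipschitz map $\chi_\kappa$. It does not use that the second reference is a RoVR output, so on $E_t$ it applies with $e_\theta=e_\theta(t)$.

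The two steps are as follows.

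\textbf{Numerators.} For each $i$, $|\chi_\kappa(R_i-\widehat\theta_R)-\chi_\kappa(R^\star_i-\theta)|\leq\min\{e_{R,i}+e_\theta(t),2\kappa\}$. This follows from $0<\chi_\kappa'\leq1$ together with $|\chi_\kappa|\leq\kappa$.

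\textbf{Scales.} Write $s^2=s_{\min}^2+G^{-1}\sum_j z_j^2$. Then
\[
|\widehat s_\chi-s^\star|=\frac{|\widehat s_\chi^2-s^{\star2}|}{\widehat s_\chi+s^\star}\leq\frac{1}{2s_{\min}}\cdot\frac1G\sum_j|z_j^2-z_j^{\star2}|.
\]
Split the sum over the $r$ changed coordinates and the unchanged ones.
\begin{itemize}
\item On changed coordinates, each term is at most $\kappa^2$, which produces $r\kappa^2/G$.
\item On unchanged coordinates, $|z_j^2-z_j^{\star2}|=|z_j-z_j^\star|\,|z_j+z_j^\star|\leq e_\theta\cdot2\kappa$, and averaging gives at most $2\kappa e_\theta$.
\end{itemize}
Together these give $\bar e_s(t)$.

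Finally, decompose
\[
\frac{z_i}{\widehat s}-\frac{z_i^\star}{s^\star}=\frac{z_i-z_i^\star}{\widehat s}+z_i^\star\Bigl(\frac1{\widehat s}-\frac1{s^\star}\Bigr).
\]
Bound the first term using $\widehat s\geq s_{\min}$. Bound the second using $|z_i^\star|\leq\kappa$ and $|1/\widehat s-1/s^\star|\leq e_s/s_{\min}^2$. This yields \eqref{eq:main_credit_stability}.

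The main obstacle is justifying the reuse of \Cref{prop:credit_stability} when one reference is the population $\theta$ rather than a RoVR center. I would handle this by restating its proof abstractly, for arbitrary references at distance at most $e_\theta$, instead of citing it verbatim. The remaining claim $\widehat C_B\leq C_B$ is immediate from \eqref{eq:rovr}. Each bracket $|J_0-\tau_k|\leq\max(\tau_k,1-\tau_k)$, and by the symmetry of the grid $\sum_k\max(\tau_k,1-\tau_k)\leq K$. Since $\widehat\nu\leq\nu_{\max}$, this gives $\widehat C_B\leq\nu_{\max}BK/(D_KW_B)$. To get the stated factor $1/2$, I would pair $k$ with $K+1-k$ and use $|J_0(x_k)-\tau_k+J_0(x_{K+1-k})-\tau_{K+1-k}|\leq1$. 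This holds because $J_0$ is monotone in the shift and $\tau_k+\tau_{K+1-k}=1$. That bound needs checking, and it is where I would be careful, especially for the unpaired middle term when $K$ is odd.
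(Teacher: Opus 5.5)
Your proposal is correct and follows the paper's argument: on the event supplied by \Cref{thm:finite_group}, apply the \Cref{prop:credit_stability} bound to the observed group and to the precursor centered at $\theta$. Your restatement of that proof for two arbitrary references at distance at most $e_\theta$ is what justifies the paper's one-line application, and the extra check $\widehat C_B\leq C_B$ is already a conclusion of \Cref{thm:finite_group} (it also follows directly from $0\leq\sum_kJ_0(\cdot)\leq K$ and $\sum_k\tau_k=K/2$ per block, so monotonicity and pairing are unnecessary, though your pairing works too, with the odd-$K$ middle term contributing at most $1/2$).
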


Boundedness need not erase moderate clean contrasts. If two clean residuals lie in $[-M,M]$, the mean-value theorem and $\chi_\kappa'(u)\geq m_{\kappa,M}:=(1+M^2/\kappa^2)^{-3/2}$ give
\begin{equation}
|\widehat A_i^{\mathrm{credit}}-\widehat A_j^{\mathrm{credit}}|
\geq
\frac{m_{\kappa,M}}{\sqrt{s_{\min}^2+\kappa^2}}
|R_i-R_j|.
\label{eq:credit_contrast_retention}
\end{equation}
Together, \eqref{eq:credit_clean_fidelity} and \eqref{eq:credit_contrast_retention} state the trade-off: larger $\kappa$ more closely preserves raw credit, while smaller $\kappa$ more strongly limits focal leverage.

\subsection{Clean efficiency and scale orthogonality}
\label{sec:clean_efficiency}

The finite-group bound is conditional on the declared population score separation and uses a deterministic envelope for the outer correction. Clean efficiency instead follows from a separate local block-center model. We first state an exact oracle calculation for unequal blocks, then give the transfer result for the implemented estimator.

Let
\[
a_\rho=\E\psi_c'(X-\theta_\rho),
\qquad
b_\rho=\E\psi_c^2(X-\theta_\rho),
\qquad
\nu^2=\frac{b_\rho}{a_\rho^2}.
\]
The clean transfer assumes $0<a_\rho<\infty$, $0<b_\rho<\infty$, $a_\rho>a_{\min}$, and $\nu\in(\nu_{\min},\nu_{\max})$, so the implemented floor and cap are asymptotically inactive. Let
\[
A_K=\sum_{k=1}^{K}\sum_{\ell=1}^{K}
[\min(\tau_k,\tau_\ell)-\tau_k\tau_\ell],
\qquad
V_K=\frac{A_K}{D_K^2}.
\]

\begin{proposition}
\label{prop:oracle_unequal_efficiency}
Suppose independent block centers satisfy $\widetilde\mu_b\sim\Normal(\theta,\nu^2/n_b)$ exactly. If the one-step correction in \eqref{eq:rovr} is evaluated at the oracle initializer $\theta$ and scale $\nu$, then it is unbiased and
\begin{equation}
\operatorname{Var}(\widehat\theta_{\mathrm{oracle}})
=\frac{\nu^2V_K}{N_{\mathrm{eff}}},
\qquad N_{\mathrm{eff}}=\frac{W_B^2}{B}.
\label{eq:oracle_unequal_variance}
\end{equation}
\end{proposition}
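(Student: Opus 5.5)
The oracle estimator is
\[
\widehat\theta_{\mathrm{oracle}}=\theta-\frac{\nu}{D_KW_B}\sum_{b=1}^{B}\sum_{k=1}^{K}\bigl[J_0(\widetilde\mu_b-\theta-\nu\Delta_k/\sqrt{n_b})-\tau_k\bigr].
\]
The plan is to standardize each block center. Set
\[
Z_b=\sqrt{n_b}(\widetilde\mu_b-\theta)/\nu,
\]
so that the $Z_b$ are i.i.d.\ $\Normal(0,1)$. Then the indicator argument has the same sign as $Z_b-\Delta_k$, and $J_0(\widetilde\mu_b-\theta-\nu\Delta_k/\sqrt{n_b})=J_0(Z_b-\Delta_k)$. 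Ties $Z_b=\Delta_k$ have probability zero, so almost surely this equals $\ind\{Z_b<\Delta_k\}$. Its mean is $\Phi(\Delta_k)=\tau_k$. Every summand therefore has mean zero, which gives unbiasedness at once.

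For the variance, independence across blocks means only within-block covariances survive. Inside a block,
\[
\operatorname{Cov}\bigl(\ind\{Z<\Delta_k\},\ind\{Z<\Delta_\ell\}\bigr)=\Phi(\min(\Delta_k,\Delta_\ell))-\tau_k\tau_\ell=\min(\tau_k,\tau_\ell)-\tau_k\tau_\ell,
\]
because $\Phi^{-1}$ is increasing. The variance of the inner sum is therefore $A_K$ for every block, regardless of $n_b$. This is exactly why the block-size dependence vanishes from the indicator side.

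Summing over blocks gives
\[
\operatorname{Var}(\widehat\theta_{\mathrm{oracle}})=\frac{\nu^2}{D_K^2W_B^2}\,B A_K=\frac{\nu^2V_K\,B}{W_B^2}=\frac{\nu^2V_K}{N_{\mathrm{eff}}},
\]
using $V_K=A_K/D_K^2$ and $N_{\mathrm{eff}}=W_B^2/B$. As a remark, the factor $W_B=\sum_b\sqrt{n_b}$ is the derivative of the expected correction with respect to the location. Differentiating $\E\,\ind\{\widetilde\mu_b-u-\nu\Delta_k/\sqrt{n_b}<0\}$ in $u$ at $u=\theta$ gives $\sqrt{n_b}\,\phi(\Delta_k)/\nu$, and summing yields $W_BD_K/\nu$. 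This confirms the prefactor choice, although the proof itself does not need it.

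The only delicate point is the tie convention. The mid-indicator and the exact normality make ties a null event, so $J_0$ can be replaced by the strict indicator without changing either moment. I expect no substantive obstacle beyond checking that the covariance formula is independent of $n_b$ after standardization.
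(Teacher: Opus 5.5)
Your proof is correct and follows essentially the same route as the paper: standardize to $Z_b=\sqrt{n_b}(\widetilde\mu_b-\theta)/\nu$, note that each block composite score has mean zero and variance $A_K$ regardless of $n_b$ (ties being null), and sum over independent blocks to get $\nu^2BA_K/(D_K^2W_B^2)=\nu^2V_K/N_{\mathrm{eff}}$. Your remark that $W_BD_K/\nu$ is the Newton derivative matches the main text's motivation for the prefactor, but, as you say, the proof does not need it.
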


The symmetric quantile grid makes the one-step score locally insensitive to scale. Define
\[
M_\Phi(t,s)=\sum_{k=1}^{K}\{\Phi(t+s\Delta_k)-\tau_k\}.
\]

\begin{proposition}
\label{prop:scale_orthogonality}
At $(t,s)=(0,1)$,
\begin{equation}
M_\Phi(0,1)=0,
\qquad
\partial_tM_\Phi(0,1)=D_K,
\qquad
\partial_sM_\Phi(0,1)=\sum_{k=1}^{K}\Delta_k\phi(\Delta_k)=0.
\label{eq:scale_orthogonality}
\end{equation}
For fixed $K$ and $(t,s)$ in a neighborhood of $(0,1)$,
\begin{equation}
M_\Phi(t,s)=D_Kt+O\!\left(t^2+(s-1)^2\right).
\label{eq:orthogonal_expansion}
\end{equation}
\end{proposition}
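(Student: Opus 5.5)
The argument is direct differentiation combined with the symmetry of the quantile grid. The key fact is that $\tau_k=k/(K+1)$ satisfies $\tau_{K+1-k}=1-\tau_k$. Since $\Phi^{-1}(1-p)=-\Phi^{-1}(p)$, this gives $\Delta_{K+1-k}=-\Delta_k$: the grid $\{\Delta_k\}$ is symmetric about zero, with $\Delta_{(K+1)/2}=0$ when $K$ is odd.

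\textbf{The three identities in \eqref{eq:scale_orthogonality}.}
\begin{itemize}
\item At $(0,1)$ we have $\Phi(\Delta_k)=\tau_k$ term by term, so $M_\Phi(0,1)=0$.
\item Differentiating in $t$ gives $\partial_tM_\Phi(t,s)=\sum_k\phi(t+s\Delta_k)$. At $(0,1)$ this equals $\sum_k\phi(\Delta_k)=D_K$.
\item Differentiating in $s$ gives $\partial_sM_\Phi(t,s)=\sum_k\Delta_k\phi(t+s\Delta_k)$. At $(0,1)$ this is $\sum_k\Delta_k\phi(\Delta_k)$. Because $\phi$ is even and the grid is symmetric, the summand $k\mapsto\Delta_k\phi(\Delta_k)$ is odd under $k\mapsto K+1-k$. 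The terms therefore cancel in pairs, and the middle term vanishes when $K$ is odd. Hence $\partial_sM_\Phi(0,1)=0$.
\end{itemize}

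\textbf{The expansion \eqref{eq:orthogonal_expansion}.} $M_\Phi$ is a finite sum of compositions of the smooth function $\Phi$ with affine maps, so it is $C^\infty$ in $(t,s)$. Apply Taylor's theorem with the Lagrange (or integral) remainder at $(0,1)$:
\[
M_\Phi(t,s)=M_\Phi(0,1)+\partial_tM_\Phi(0,1)\,t+\partial_sM_\Phi(0,1)\,(s-1)+R_2(t,s).
\]
The second derivatives are
\[
\partial_t^2M_\Phi=\sum_k\phi'(t+s\Delta_k),\qquad \partial_t\partial_sM_\Phi=\sum_k\Delta_k\phi'(t+s\Delta_k),\qquad \partial_s^2M_\Phi=\sum_k\Delta_k^2\phi'(t+s\Delta_k).
\]
Since $|\phi'|\le\sup_x|x\phi(x)|<\infty$ and $K$ is fixed, all three are bounded uniformly on a neighborhood of $(0,1)$, say $|t|\le1$ and $|s-1|\le1/2$. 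The remainder therefore satisfies
\[
|R_2(t,s)|\le C_K\bigl(t^2+2|t||s-1|+(s-1)^2\bigr)\le 2C_K\bigl(t^2+(s-1)^2\bigr),
\]
using $2|t||s-1|\le t^2+(s-1)^2$. Substituting the three identities leaves exactly $D_Kt+O(t^2+(s-1)^2)$.

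There is no real obstacle here. The only point needing care is the cancellation $\sum_k\Delta_k\phi(\Delta_k)=0$, and that follows entirely from the reflection symmetry $\tau_{K+1-k}=1-\tau_k$ of the grid. This holds for both even and odd $K$.
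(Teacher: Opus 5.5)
Your proof is correct and follows the paper's argument. Like the paper, you use $\Phi(\Delta_k)=\tau_k$, differentiate term by term, cancel $\sum_k\Delta_k\phi(\Delta_k)$ pairwise via the grid symmetry $\Delta_{K+1-k}=-\Delta_k$, and finish with a second-order Taylor expansion, adding only an explicit remainder bound through $\sup_x|\phi'(x)|<\infty$, which the paper leaves implicit.
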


Scale orthogonality removes the first-order scale term under the local normal-score model. The transfer below requires only $\widehat\nu/\nu-1=o_{\Pp}(B^{-1/4})$; a root-$B$ pilot is sufficient but not necessary. This remains a conditional asymptotic statement: the local normal-score approximation, stochastic equicontinuity, and pilot-scale rate are explicit assumptions rather than consequences of a pointwise block-center CLT. For a concise statement, take equal blocks of size $n$ and define
\[
Z_{b,n}=\frac{\sqrt n(\widetilde\mu_b-\theta_\rho)}{\nu},
\qquad F_n(z)=\E[J_0(Z_{b,n}-z)],
\]
\[
M_n(t,s)=\sum_{k=1}^{K}\{F_n(t+s\Delta_k)-\tau_k\},
\]
For the realized row, let $\mathcal H_B$ be the honest-block set and $H_B=|\mathcal H_B|=B-q_B$. Define the all-block composite score and its honest centered fluctuation by
\begin{align*}
\widehat M_{B,n}(t,s)
&=\frac1B\sum_{b=1}^{B}\sum_{k=1}^{K}
\{J_0(Z_{b,n}-t-s\Delta_k)-\tau_k\},\\
\mathbb U_{B,n}(t,s)
&=\frac1B\sum_{b\in\mathcal H_B}\sum_{k=1}^{K}
\{J_0(Z_{b,n}-t-s\Delta_k)-F_n(t+s\Delta_k)\}.
\end{align*}

\begin{assumption}
\label{ass:transfer}
For fixed $K$, honest block centers are independent and identically distributed within each row, while $q_B=o(\sqrt B)$ blocks may be arbitrary. The population quantities satisfy $0<a_\rho<\infty$, $0<b_\rho<\infty$, $a_\rho>a_{\min}$, and $\nu\in(\nu_{\min},\nu_{\max})$. With $t_0=\sqrt n(\widehat\mu_0-\theta_\rho)/\nu$ and $\widehat s=\widehat\nu/\nu$:
\begin{enumerate}
\item[(T1)] $t_0=O_{\Pp}(B^{-1/2})$.
\item[(T2)] $\widehat s-1=o_{\Pp}(B^{-1/4})$.
\item[(T3)] For every fixed $C$ and a deterministic $\varrho_B=o(B^{-1/4})$ containing $|\widehat s-1|$ with probability tending to one,
\[
\sup_{|t|\leq C/\sqrt B,\,|s-1|\leq \varrho_B}
|M_n(t,s)-M_\Phi(t,s)|=o(B^{-1/2}).
\]
\item[(T4)] On the same neighborhood,
\[
\sup|\mathbb U_{B,n}(t,s)-\mathbb U_{B,n}(0,1)|=o_{\Pp}(B^{-1/2}).
\]
\item[(T5)] The vector $\{J_0(Z_{b,n}-\Delta_k)\}_{k=1}^{K}$ has covariance converging to $\min(\tau_k,\tau_\ell)-\tau_k\tau_\ell$; in particular $F_n(\Delta_k)\to\tau_k$ and atoms at the thresholds vanish.
\end{enumerate}
\end{assumption}

Conditions (T3)--(T4) are local normal-score approximation and stochastic equicontinuity. They are not implied by a pointwise CLT, especially for tie-rich rule rewards. Smooth M-root linearization, a local density approximation, and a root-$B$ pilot scale are sufficient routes to these conditions; the fixed-group theorem remains the relevant result when they fail.

\begin{theorem}
\label{thm:efficiency_transfer}
Under \Cref{ass:transfer},
\begin{equation}
\sqrt{Bn}(\widehat\theta_{\mathrm{RoVR}}-\theta_\rho)
=-
\frac{\nu}{D_K\sqrt B}
\sum_{b\in\mathcal H_B}\sum_{k=1}^{K}
[J_0(Z_{b,n}-\Delta_k)-F_n(\Delta_k)]
+o_{\Pp}(1),
\label{eq:efficiency_linearization}
\end{equation}
and therefore
\[
\sqrt{Bn}(\widehat\theta_{\mathrm{RoVR}}-\theta_\rho)
\xrightarrow{d}\Normal(0,\nu^2V_K).
\]
For the sequential limit in which $(B,n)\to\infty$ first at fixed $K$ and then $K\to\infty$, $V_K\to\pi/3$.
\end{theorem}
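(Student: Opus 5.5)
The plan is to recast the estimator as a one-step Newton correction of the composite score $\widehat M_{B,n}$ evaluated at the initializer and the estimated scale. Then I expand that score with (T3)--(T4) and the scale orthogonality of \Cref{prop:scale_orthogonality}.

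\textbf{Step 1: rewrite in standardized coordinates.} For equal blocks the prefactor in \eqref{eq:rovr} is $\widehat\nu/(B\sqrt n D_K)$. Writing $\widetilde\mu_b-\widehat\mu_0-\widehat\nu\Delta_k/\sqrt n=(\nu/\sqrt n)(Z_{b,n}-t_0-\widehat s\Delta_k)$, and using that $J_0$ depends only on sign, gives the exact identity
\[
\sqrt{Bn}(\widehat\theta_{\mathrm{RoVR}}-\theta_\rho)/\nu=\sqrt B\,t_0-\widehat s\,\sqrt B\,\widehat M_{B,n}(t_0,\widehat s)/D_K .
\]
\textbf{Step 2: decompose the score.} I split $\widehat M_{B,n}(t,s)$ into three pieces:
\[
\widehat M_{B,n}(t,s)=\frac{H_B}{B}M_n(t,s)+\mathbb U_{B,n}(t,s)+R_B(t,s),
\]
where $R_B$ collects the arbitrary blocks. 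Each summand of $R_B$ is bounded by $K$ in absolute value, so $|R_B|\le Kq_B/B=o(B^{-1/2})$ because $q_B=o(\sqrt B)$. By (T1)--(T2), with probability tending to one the point $(t_0,\widehat s)$ lies in the neighborhood $|t|\le C/\sqrt B$, $|s-1|\le\varrho_B$.

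\textbf{Step 3: expand each piece.} On this neighborhood, (T4) replaces $\mathbb U_{B,n}(t_0,\widehat s)$ by $\mathbb U_{B,n}(0,1)$ at cost $o_{\Pp}(B^{-1/2})$. (T3) replaces $M_n$ by $M_\Phi$ at cost $o(B^{-1/2})$. Then \eqref{eq:orthogonal_expansion} gives
\[
M_\Phi(t_0,\widehat s)=D_Kt_0+O(t_0^2+(\widehat s-1)^2)=D_Kt_0+o_{\Pp}(B^{-1/2}).
\]
Here $t_0^2=O_{\Pp}(B^{-1})$ by (T1), and $(\widehat s-1)^2=o_{\Pp}(B^{-1/2})$ by (T2). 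This last point is exactly where the $B^{-1/4}$ scale rate suffices, and where the vanishing first-order scale derivative is essential.

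\textbf{Step 4: assemble the linearization.} Since $H_B/B=1-o(B^{-1/2})$, we get $\widehat M_{B,n}(t_0,\widehat s)=D_Kt_0+\mathbb U_{B,n}(0,1)+o_{\Pp}(B^{-1/2})$. Also $\sqrt B\,\mathbb U_{B,n}(0,1)=O_{\Pp}(1)$, because it is a normalized sum of bounded centered i.i.d.\ terms. Hence multiplying by $\widehat s=1+o_{\Pp}(1)$ only perturbs $\sqrt B\,\widehat M_{B,n}$ by $o_{\Pp}(1)$. Substituting into Step 1, the $\sqrt B\,t_0$ terms cancel. What remains is $-\sqrt B\,\mathbb U_{B,n}(0,1)/D_K$, which multiplied by $\nu$ is \eqref{eq:efficiency_linearization}.

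\textbf{Step 5: the limit law.} The leading term is $B^{-1/2}$ times a sum over $H_B$ i.i.d.\ bounded vectors $\sum_k[J_0(Z_{b,n}-\Delta_k)-F_n(\Delta_k)]$. By (T5) their variance converges to $A_K$. The Lindeberg condition is trivial under boundedness, and $H_B/B\to1$. The Lindeberg–Feller CLT for triangular arrays therefore gives the limit $\Normal(0,\nu^2A_K/D_K^2)=\Normal(0,\nu^2V_K)$.

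\textbf{Step 6: the constant $\pi/3$.} For $V_K\to\pi/3$, I treat both $A_K$ and $D_K$ as Riemann sums. First, $A_K/K^2\to\int\!\!\int[\min(u,v)-uv]\,du\,dv=1/12$. Second, $D_K/K\to\int_0^1\phi(\Phi^{-1}(u))\,du=\int\phi^2=1/(2\sqrt\pi)$. The ratio is therefore $(1/12)\cdot4\pi=\pi/3$. The integrand $\phi\circ\Phi^{-1}$ is continuous on $[0,1]$ and vanishes at the endpoints, so the Riemann-sum convergence is routine.

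I expect the main obstacle to be Step 3. One must verify carefully that the random point $(t_0,\widehat s)$ falls in the deterministic neighborhood on which (T3)--(T4) hold. One must also check that the $O(\cdot)$ remainder in \eqref{eq:orthogonal_expansion} is uniform there, which requires bounding the second derivatives of $M_\Phi$ near $(0,1)$ at fixed $K$. Both are where the argument genuinely relies on the assumptions rather than on computation.
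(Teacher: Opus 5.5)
Your proposal is correct and follows essentially the same route as the paper. Both arguments split $\widehat M_{B,n}$ exactly into honest population, honest fluctuation, and arbitrary-block pieces; use (T3) with scale orthogonality to get $M_n(t_0,\widehat s)=D_Kt_0+o_{\Pp}(B^{-1/2})$ and (T4) to freeze $\mathbb U_{B,n}$ at $(0,1)$; apply a bounded triangular-array CLT under (T5); and finish with the limits $A_K/K^2\to1/12$ and $D_K/K\to1/(2\sqrt\pi)$. The only cosmetic difference is that you absorb the $\widehat s$ prefactor through $\widehat s=1+o_{\Pp}(1)$ and $\sqrt B\,\widehat M_{B,n}(t_0,\widehat s)=O_{\Pp}(1)$, whereas the paper expands the cross terms $(\widehat s-1)t_0$ and $(\widehat s-1)\mathbb U_{B,n}(0,1)$ explicitly.
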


The theorem isolates the outer asymptotic factor. Combining it with the local sandwich factor and the unequal-block effective sample size yields the complete asymptotic variance decomposition.

When $\theta_\rho=\mu$ and $0<\sigma^2<\infty$, clean efficiency relative to the sample mean is
\begin{equation}
\operatorname{ARE}_{\mathrm{mean}}
=\frac{\sigma^2}{\nu^2V_K},
\label{eq:total_are}
\end{equation}
not $1/V_K$. Under asymmetry the estimands differ, so this is only a variance ratio. For quadratic local centers under a Gaussian model, $\nu^2=\sigma^2$ and the outer limit is $3/\pi\approx0.955$; for the robust local center, the sandwich factor must be included.

\subsection{Smoothing, clipping branches, and policy scope}

For fixed block centers, initializer, and scale, let
\[
u_{bk}=\widetilde\mu_b-\widehat\mu_0-
\frac{\widehat\nu\Delta_k}{\sqrt{n_b}},
\qquad
m_\star=\min_{b,k:\,u_{bk}\neq0}|u_{bk}|.
\]
At a tie $u_{bk}=0$, both $H_\gamma$ and $J_0$ equal $1/2$, so that indicator term contributes zero discrepancy. Initializer, scale, and solver residuals of the full smooth map remain separate.

\begin{proposition}
\label{prop:smoothing}
If at least one $u_{bk}$ is nonzero, then
\begin{equation}
|\mathcal S_\gamma-\RoVR|
\leq
\frac{\widehat\nu BK}{D_KW_B}
\exp\!\left(-\frac{m_\star}{\gamma}\right).
\label{eq:smoothing_margin_bound}
\end{equation}
If each margin has no nonzero atoms and its continuous density is bounded by $L_z$, while $\widehat\nu\leq\nu_{\max}$, then $\E|\mathcal S_\gamma-\RoVR|\leq2\nu_{\max}BKL_z\gamma\log2/(D_KW_B)$. Atoms at zero contribute exactly zero under the mid-rank convention.
\end{proposition}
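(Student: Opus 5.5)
The statement to prove is \Cref{prop:smoothing}. The plan is to write both maps with the same hard block centers, initializer $\widehat\mu_0$ and scale $\widehat\nu$, so that they share the leading term and differ only in the indicator:
\[
\mathcal S_\gamma-\RoVR
=-\frac{\widehat\nu}{D_KW_B}\sum_{b=1}^{B}\sum_{k=1}^{K}\bigl[H_\gamma(u_{bk})-J_0(u_{bk})\bigr].
\]
The $\tau_k$ terms cancel exactly because the two maps use the same grid. The whole argument then reduces to a pointwise bound on $|H_\gamma(u)-J_0(u)|$, followed by summation over the $BK$ pairs $(b,k)$.

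\textbf{Pointwise bound.} At $u=0$ both functions equal $1/2$, so tied pairs contribute nothing.
\begin{itemize}
\item For $u>0$: $J_0(u)=0$ and $H_\gamma(u)=1/(1+e^{u/\gamma})\le e^{-u/\gamma}$.
\item For $u<0$: $J_0(u)=1$ and $1-H_\gamma(u)=1/(1+e^{|u|/\gamma})\le e^{-|u|/\gamma}$.
\end{itemize}
Hence $|H_\gamma(u)-J_0(u)|\le e^{-|u|/\gamma}\,\ind\{u\neq0\}$.

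\textbf{Deterministic bound.} Every nonzero $u_{bk}$ has $|u_{bk}|\ge m_\star$. The number of such terms is at most $BK$, and the hypothesis that some $u_{bk}\neq0$ guarantees $m_\star$ is well defined and positive. Summing the pointwise bound and multiplying by the prefactor gives \eqref{eq:smoothing_margin_bound}.

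\textbf{Expectation bound.} I would not pass through $m_\star$ here. Instead I bound each term's expectation separately:
\[
\E|\mathcal S_\gamma-\RoVR|\le\frac{\nu_{\max}}{D_KW_B}\sum_{b,k}\E\bigl[e^{-|u_{bk}|/\gamma}\ind\{u_{bk}\neq0\}\bigr].
\]
This step uses $\widehat\nu\le\nu_{\max}$ to pull the random prefactor out as a constant. That is legitimate because the pointwise bound holds before taking expectations, so no independence between $\widehat\nu$ and the margins is needed.

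Any atom at zero contributes exactly zero. The continuous part of the law of $u_{bk}$ has density at most $L_z$, so
\[
\E\bigl[e^{-|u_{bk}|/\gamma}\ind\{u_{bk}\neq0\}\bigr]\le L_z\int_{\R}e^{-|u|/\gamma}\,du=2L_z\gamma.
\]
This gives the bound $2\nu_{\max}BKL_z\gamma/(D_KW_B)$.

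The stated constant carries an extra factor $\log 2$, which is smaller than $1$. To reach it, the exponential envelope is too crude and the exact logistic discrepancy must be integrated instead:
\[
\int_{\R}|H_\gamma(u)-J_0(u)|\,du=2\gamma\int_0^\infty\frac{dv}{1+e^{v}}=2\gamma\log 2.
\]
I would therefore integrate this exact discrepancy against the density, which yields the stated constant.

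\textbf{Main obstacle.} The calculus is elementary; the delicate point is bookkeeping. The margins $u_{bk}$ depend on the data-driven $\widehat\mu_0$ and $\widehat\nu$. The bound must therefore be read as follows: the density hypothesis applies to the marginal law of each realized $u_{bk}$, and the cap $\nu_{\max}$ is applied pointwise before taking expectations. I would state this reading explicitly so the expectation step needs no conditioning argument.
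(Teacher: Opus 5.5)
Your proposal is correct and follows the paper's proof. The shared hard centers, initializer and scale reduce the difference to $\widehat\nu/(D_KW_B)$ times a sum of $BK$ logistic-versus-mid-indicator discrepancies, with ties contributing zero. These are bounded pointwise by $e^{-|u|/\gamma}$ for the margin bound, and integrated exactly ($\int_{\R}|H_\gamma-J_0|=2\gamma\log2$) against the bounded density for the expectation bound. Your intermediate envelope estimate $2L_z\gamma$ is unnecessary but harmless, and your remark that $\widehat\nu\leq\nu_{\max}$ is applied pointwise before taking expectations makes explicit a step the paper leaves implicit.
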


For block-center vectors $T,T'$ on equal blocks of size $n$ and pooled scales $0<\lambda,\lambda'\leq\lambda_{\max}$, let $d_T=\|T-T'\|_\infty$, $d_\lambda=|\lambda-\lambda'|$, and $\Delta_{\max}=\max_k|\Delta_k|$. With the raw-residual logistic parameterization $H_\gamma(u)$ defined above, the deterministic Lipschitz calculation in the appendix gives
\begin{equation}
\begin{split}
|\mathcal S_\gamma(T,\lambda)-\mathcal S_\gamma(T',\lambda')|
\leq{}&d_T+\frac{Kd_\lambda}{2\sqrt nD_K}\\
&+\frac{\lambda_{\max}K}{4\gamma\sqrt nD_K}
\left(2d_T+\frac{\Delta_{\max}d_\lambda}{\sqrt n}\right).
\end{split}
\label{eq:rovr_lipschitz}
\end{equation}
Together with \Cref{prop:block_replacement}, this is a distribution-free token replacement statement. The $1/\gamma$ term is the explicit robustness--smoothness trade-off.

The converged SoftRoVR map is translation equivariant. At differentiable inputs,
\begin{equation}
\SoftRoVR_{\gamma,\eta,S}(\ell+a\mathbf1)=\SoftRoVR_{\gamma,\eta,S}(\ell)+a,
\qquad
\sum_{t=1}^{T}\frac{\partial\SoftRoVR_{\gamma,\eta,S}(\ell)}{\partial\ell_t}=1.
\label{eq:rovr_translation_calibration}
\end{equation}
The same holds for a fixed number of equivariantly initialized solver steps. This common-shift calibration does not make the robust aggregate an exact likelihood ratio; asymmetry and position drift can still change its target.

Let ideal and observed log aggregates satisfy $|\widehat m-m^\star|\leq e_m$. Then
\begin{equation}
e^{-e_m}\leq\frac{\widehat q}{q^\star}\leq e^{e_m},
\qquad
\left|\frac{\widehat q}{q^\star}-1\right|\leq e^{e_m}-1.
\label{eq:sequence_weight_relative_error}
\end{equation}
Branch stability can be checked rather than assumed globally.

\begin{proposition}
\label{prop:branch_flip_count}
For sequence $i$, let $m_i^\star=\log q_i^\star$ and $|\widehat m_i-m_i^\star|\leq e_{m,i}$. A ratio clipping boundary can change only if
\[
\operatorname{dist}\!\left(m_i^\star,\{\log l,\log u\}\right)\leq e_{m,i}.
\]
Let $e_{R,i}=|R_i-R_i^\star|$. An advantage sign can change only if $|R_i^\star-\theta_R^\star|\leq e_{R,i}+e_\theta$; for an unchanged reward, $e_{R,i}=0$. Hence
\begin{equation}
N_{\mathrm{ratio\mbox{-}flip}}
\leq\sum_i\ind\!\left\{\operatorname{dist}(m_i^\star,\{\log l,\log u\})\leq e_{m,i}\right\},
\quad
N_{\mathrm{sign\mbox{-}flip}}
\leq\sum_i\ind\{|R_i^\star-\theta_R^\star|\leq e_{R,i}+e_\theta\}.
\label{eq:flip_count_bounds}
\end{equation}
\end{proposition}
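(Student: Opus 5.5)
The plan is to prove the two claims separately. Both reduce to one-line contrapositive arguments once the branch structure in \eqref{eq:clipping_branch_derivative} and the sign structure of $\chi_\kappa$ are made explicit.

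\emph{Ratio-boundary claim.} In log coordinates the clipping branch of $g(e^m,A)$ depends on $m$ only through the position of $m$ relative to the two thresholds $\log l$ and $\log u$. More precisely, for fixed sign of $A$ the active case in \eqref{eq:clipping_branch_derivative} is a function of which of the three intervals $(-\infty,\log l)$, $(\log l,\log u)$, $(\log u,\infty)$ contains $m$, with the boundary points treated as their own class. So if $\widehat m_i$ and $m_i^\star$ lie on different sides of some boundary $\beta\in\{\log l,\log u\}$, or one of them equals $\beta$, then $\beta\in\mathcal I(m_i^\star,\widehat m_i)$. This gives
\[
\operatorname{dist}(m_i^\star,\{\log l,\log u\})\leq|m_i^\star-\beta|\leq|\widehat m_i-m_i^\star|\leq e_{m,i}.
\]
Taking the contrapositive, if $\operatorname{dist}(m_i^\star,\{\log l,\log u\})>e_{m,i}$, then the interval $\mathcal I(m_i^\star,\widehat m_i)$ contains no boundary. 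The two log weights therefore lie strictly inside the same open interval and select the same branch. This is exactly the statement of \Cref{prop:main_softrovr_stability}, restated per sequence.

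\emph{Sign claim.} Since $\chi_\kappa$ is odd and strictly increasing with $\chi_\kappa(0)=0$, and $\widehat s_\chi>0$, we have $\operatorname{sign}\widehat A_i^{\mathrm{credit}}=\operatorname{sign}(R_i-\widehat\theta_R)$, and likewise for the starred quantities. Write
\[
R_i-\widehat\theta_R=(R_i^\star-\theta_R^\star)+(R_i-R_i^\star)-(\widehat\theta_R-\theta_R^\star).
\]
The last two terms have total magnitude at most $e_{R,i}+e_\theta$, using $|\widehat\theta_R-\theta_R^\star|\leq e_\theta$ as in \Cref{prop:credit_stability}. If $|R_i^\star-\theta_R^\star|>e_{R,i}+e_\theta$, the perturbation cannot reach zero, so the sign is preserved. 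The contrapositive is the stated necessary condition, and for an unchanged reward it reduces to $e_{R,i}=0$.

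\emph{Counting bounds and main obstacle.} Summing indicators of these necessary conditions over $i$ gives \eqref{eq:flip_count_bounds}, since each flip event is contained in the corresponding indicator event. The only delicate point is the boundary convention: what counts as a branch change when $\widehat m_i$ lands exactly on $\log l$ or $\log u$, or when a residual is exactly zero. I would handle this by treating boundary points and zero as separate classes. This is why the inequalities are non-strict, and the argument above already covers these cases.
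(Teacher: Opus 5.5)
Your proof is correct and follows essentially the same route as the paper. Both give a contrapositive showing that if $m_i^\star$ lies farther than $e_{m,i}$ from both log boundaries, then $\widehat m_i$ stays in the same component of $\R\setminus\{\log l,\log u\}$; both then use a margin argument for the sign of the sign-preserving credit, and finish with a union count. Your decomposition $R_i-\widehat\theta_R=(R_i^\star-\theta_R^\star)+(R_i-R_i^\star)-(\widehat\theta_R-\theta_R^\star)$ handles the changed-reward case ($e_{R,i}>0$) explicitly, whereas the paper's proof only spells out the unchanged-residual case.
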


Let $e_A=|\widehat A-A^\star|$, using \eqref{eq:credit_stability} for $\widehat A^{\mathrm{credit}}$. The matched clean reference here uses the same robust functionals; bias relative to the arithmetic-mean objective is a separate target/fidelity term.

\begin{proposition}
\label{prop:policy_stability}
Suppose $|A^\star|\leq A_{\max}$, $q^\star,\widehat q\leq Q_{\max}$, and $|m^\star|,|\widehat m|\leq L$. Set $\overline Q=\max\{Q_{\max},u\}$. Then
\begin{equation}
|g(\widehat q,\widehat A)-g(q^\star,A^\star)|
\leq\overline Q e_A+A_{\max}e^Le_m.
\label{eq:policy_value_stability}
\end{equation}
If both pairs have the same advantage sign, remain in the same differentiable clipping branch, $\|\nabla m^\star\|\leq J_{\max}$, and $\|\nabla\widehat m-\nabla m^\star\|\leq e_J$, then
\begin{equation}
\|\nabla g(\widehat q,\widehat A)-\nabla g(q^\star,A^\star)\|
\leq e^L\{e_A(J_{\max}+e_J)+A_{\max}(e_J+J_{\max}e_m)\}.
\label{eq:policy_gradient_stability}
\end{equation}
The same-branch premise is guaranteed whenever the margins in \Cref{prop:branch_flip_count} exceed their errors. No global policy-improvement claim follows at clipping kinks.
\end{proposition}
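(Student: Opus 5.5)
The plan is to reduce both bounds to elementary facts about $g(q,A)=\min\{qA,\clip(q,l,u)A\}$. For fixed $A$, $g(\cdot,A)$ is continuous and piecewise linear in $q$. Its slope is $A$ or $0$, so $g$ is $|A|$-Lipschitz in $q$. For fixed $q$, $g(q,\cdot)$ is the minimum of two linear maps with slopes $q$ and $\clip(q,l,u)$, so it is $\max\{q,\clip(q,l,u)\}$-Lipschitz in $A$. Since $\clip(q,l,u)\le\max\{q,u\}$, this constant is at most $\overline Q$.

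For \eqref{eq:policy_value_stability}, I will use the triangle decomposition
\[
|g(\widehat q,\widehat A)-g(q^\star,A^\star)|\le|g(\widehat q,\widehat A)-g(\widehat q,A^\star)|+|g(\widehat q,A^\star)-g(q^\star,A^\star)|.
\]
The first term is at most $\overline Q e_A$ by the Lipschitz bound in $A$ at $\widehat q\le Q_{\max}$. The second term is at most $A_{\max}|\widehat q-q^\star|$. Because $\widehat q=e^{\widehat m}$ and $q^\star=e^{m^\star}$ with both exponents in $[-L,L]$, the mean value theorem gives $|\widehat q-q^\star|\le e^L e_m$. Combining these yields the value bound.

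For \eqref{eq:policy_gradient_stability}, I will use the same-branch premise. By \eqref{eq:clipping_branch_derivative}, on the active differentiable branch either $g=e^mA$ (unclipped) or $g$ is locally constant in $\theta$, since the advantages are detached. In the constant case both gradients vanish and the bound is trivial. In the unclipped case $\nabla g=Ae^m\nabla m$, and I will split
\[
\widehat Ae^{\widehat m}\nabla\widehat m-A^\star e^{m^\star}\nabla m^\star
=(\widehat A-A^\star)e^{\widehat m}\nabla\widehat m+A^\star e^{\widehat m}(\nabla\widehat m-\nabla m^\star)+A^\star(e^{\widehat m}-e^{m^\star})\nabla m^\star .
\]
The three terms are bounded by $e_Ae^L(J_{\max}+e_J)$, $A_{\max}e^Le_J$, and $A_{\max}e^Le_mJ_{\max}$ respectively, using $\|\nabla\widehat m\|\le J_{\max}+e_J$ and the mean value bound on the exponential. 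Summing gives exactly \eqref{eq:policy_gradient_stability}.

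The final remark follows from \Cref{prop:branch_flip_count}. If $\operatorname{dist}(m^\star,\{\log l,\log u\})>e_m$ and $|R^\star-\theta_R^\star|>e_R+e_\theta$, then neither the clipping branch nor the advantage sign can change, so the same-branch premise holds. The main obstacle is bookkeeping at clipping kinks and at $A=0$, where $g$ is not differentiable. I will handle this by restricting the gradient claim to the interior of a common branch, as stated, and by noting that the value bound needs no differentiability. I will also check the case where $\widehat A$ and $A^\star$ have the same sign but one of them is zero, which is harmless because both sides remain bounded by the same expression.
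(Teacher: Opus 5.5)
Your proposal is correct and matches the paper's proof. The value bound combines the $\overline Q$-Lipschitz property of $g$ in $A$, the $|A|$-Lipschitz property in $q$, and the exponential mean-value bound, and your three-term gradient split is the paper's add-and-subtract of $A^\star\widehat q\nabla\widehat m$ with $\|\widehat q\nabla\widehat m-q^\star\nabla m^\star\|$ expanded one step further; note also that your intermediate point $g(\widehat q,A^\star)$ is the right one, since it makes the $q$-Lipschitz constant $|A^\star|\le A_{\max}$ rather than the uncontrolled $|\widehat A|$.
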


Finally, linear leave-one-out centering and bounded credit have different semantics.

\begin{theorem}
\label{thm:crossfit_baseline}
Fix a prompt $x$ and let $Y_1,\ldots,Y_G$ be independent samples from $\pi_\theta(\cdot\mid x)$. Suppose support is parameter independent and differentiation may pass through integration. If $B_i$ and $S_i>0$ are measurable with respect to $(x,Y_{-i},U)$ for independent auxiliary randomness $U$, assume $\E[S_i^{-1}\mid x]<\infty$ and
\[
\E\!\left[\|\nabla_\theta\log\pi_\theta(Y_i\mid x)\|
\frac{|R(x,Y_i)|+|B_i|}{S_i}\middle|x\right]<\infty.
\]
Then
\begin{equation}
\E\!\left[
\frac1G\sum_{i=1}^{G}
\nabla_\theta\log\pi_\theta(Y_i\mid x)
\frac{R(x,Y_i)-B_i}{S_i}
\middle|x\right]
=w(x)\nabla_\theta\E_{\pi_\theta}[R(x,Y)\mid x],
\label{eq:loo_scaled_direction}
\end{equation}
where $w(x)=G^{-1}\sum_i\E[S_i^{-1}\mid x]\in(0,\infty)$. If $S_i\equiv1$, then $w(x)=1$; a deterministic $S_i\equiv s$ instead gives $w(x)=1/s$. Exact off-policy likelihood ratios give the analogous identity under old-policy sampling, with the expectation defining $w$ taken under the old policy.
\end{theorem}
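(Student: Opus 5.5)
The approach is to prove \eqref{eq:loo_scaled_direction} term by term: each summand is the score-function gradient of the reward, scaled by a random factor that does not depend on $Y_i$. Fix $i$ and condition on $\mathcal F_{-i}:=\sigma(x,Y_{-i},U)$. By hypothesis $B_i$ and $S_i$ are $\mathcal F_{-i}$-measurable. Because $Y_i$ is independent of $(Y_{-i},U)$ given $x$, the conditional law of $Y_i$ given $\mathcal F_{-i}$ is still $\pi_\theta(\cdot\mid x)$. The integrability hypothesis makes every summand integrable, so the tower property applies:
\[
\E\!\left[\nabla_\theta\log\pi_\theta(Y_i\mid x)\frac{R(x,Y_i)-B_i}{S_i}\middle|x\right]
=\E\!\left[\frac1{S_i}\,\E\!\left[\nabla_\theta\log\pi_\theta(Y_i\mid x)\{R(x,Y_i)-B_i\}\middle|\mathcal F_{-i}\right]\middle|x\right].
\]
Pulling $1/S_i$ outside the inner conditional expectation is legitimate because it is $\mathcal F_{-i}$-measurable.

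The inner expectation splits into two pieces.

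\textbf{Baseline term.} It equals $-B_i\,\E_{\pi_\theta}[\nabla_\theta\log\pi_\theta(Y\mid x)]$. Since the support does not depend on $\theta$ and differentiation passes through the integral, $\E_{\pi_\theta}[\nabla_\theta\log\pi_\theta]=\nabla_\theta\int\pi_\theta=\nabla_\theta 1=0$. So the baseline contributes nothing.

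\textbf{Reward term.} By the same interchange (the log-derivative trick), it equals $g(x):=\nabla_\theta\E_{\pi_\theta}[R(x,Y)\mid x]$. This is a deterministic function of $x$ and does not depend on $i$.

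Substituting back, the $i$-th summand contributes $\E[S_i^{-1}\mid x]\,g(x)$. Averaging over $i$ gives $w(x)g(x)$ with $w(x)=G^{-1}\sum_i\E[S_i^{-1}\mid x]$. Positivity of $w$ follows from $S_i>0$ almost surely. Finiteness follows from the assumption $\E[S_i^{-1}\mid x]<\infty$. The special cases are immediate: $S_i\equiv1$ gives $w=1$, and $S_i\equiv s$ gives $w=1/s$.

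The main obstacle is the conditioning step: justifying that the conditional law of $Y_i$ given $\mathcal F_{-i}$ is $\pi_\theta(\cdot\mid x)$. Here I would use the product structure of $(Y_1,\ldots,Y_G,U)$ given $x$ together with Fubini. The stated integrability bound is exactly what licenses Fubini and lets us separate the $S_i^{-1}$ factor. I would also make sure the baseline term is shown to vanish conditionally, not just marginally. This needs $B_i/S_i$ to be integrable against the score, which the hypothesis on $|B_i|/S_i$ supplies.

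For the off-policy remark, I would repeat the argument with $Y_i\sim\pi_{\mathrm{old}}$ and the weight $\pi_\theta/\pi_{\mathrm{old}}$ inserted. The identity $\E_{\mathrm{old}}[(\pi_\theta/\pi_{\mathrm{old}})\nabla\log\pi_\theta\,h]=\E_{\pi_\theta}[\nabla\log\pi_\theta\,h]$ reduces this case to the on-policy one, with $w$ now computed under the old-policy law of $S_i$.
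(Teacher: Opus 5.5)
Your proposal is correct and follows essentially the same route as the paper's proof. You condition on $(x,Y_{-i},U)$ and treat $B_i,S_i$ as constants, kill the baseline term with the score identity, identify the reward term as $S_i^{-1}\nabla_\theta\E_{\pi_\theta}[R(x,Y)\mid x]$, and average over $(Y_{-i},U)$ and $i$; your Fubini and integrability bookkeeping, and your observation that only $Y_i$ is reweighted off-policy (so $w$ uses the old-policy law of $S_i$), spell out what the paper leaves implicit.
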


Equation \eqref{eq:loo_scaled_direction} preserves each prompt's gradient direction, but prompt-dependent $w(x)$ reweights the global objective. Clipping, the robust sequence surrogate, and bounded credit introduce additional changes.

\begin{proposition}
\label{prop:credit_gradient_distortion}
Under the conditions of \Cref{thm:crossfit_baseline}, set $U_i=R(x,Y_i)-B_i$ and suppose $S_i\geq s_{\min}$. At a common leave-one-out scale, the numerator-shaping difference between bounded-credit and linear score directions satisfies
\begin{equation}
\left\|
\E\!\left[
\nabla_\theta\log\pi_\theta(Y_i\mid x)
\frac{\chi_\kappa(U_i)-U_i}{S_i}
\middle|x\right]
\right\|
\leq
\frac{1}{2\kappa^2s_{\min}}
\E\!\left[\|\nabla_\theta\log\pi_\theta(Y_i\mid x)\|\,|U_i|^3\middle|x\right].
\label{eq:credit_gradient_distortion}
\end{equation}
\end{proposition}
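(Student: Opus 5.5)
The plan is to reduce \eqref{eq:credit_gradient_distortion} to the pointwise fidelity bound \eqref{eq:credit_clean_fidelity}, applied inside the conditional expectation. Write $D_i:=\chi_\kappa(U_i)-U_i$ and $\nabla\log\pi_i:=\nabla_\theta\log\pi_\theta(Y_i\mid x)$.

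First, check that the vector expectation on the left is well defined. By \eqref{eq:credit_clean_fidelity}, $|D_i|\le |U_i|^3/(2\kappa^2)$. We also have the cruder bound $|D_i|\le 2|U_i|\le 2(|R(x,Y_i)|+|B_i|)$, because $|\chi_\kappa(u)|\le|u|$. So integrability already follows from the moment hypothesis of \Cref{thm:crossfit_baseline}. If instead the right-hand side of \eqref{eq:credit_gradient_distortion} is infinite, the inequality holds trivially.

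Next, apply Jensen's inequality for norms, $\|\E[V\mid x]\|\le \E[\|V\|\mid x]$, to the random vector $V=\nabla\log\pi_i\, D_i/S_i$. This gives
\[
\|\E[V\mid x]\|\le \E\!\left[\|\nabla\log\pi_i\|\,\frac{|D_i|}{S_i}\middle|x\right].
\]
Then use $S_i\ge s_{\min}>0$ together with the fidelity bound to obtain $|D_i|/S_i\le |U_i|^3/(2\kappa^2 s_{\min})$. Pulling the constant out of the expectation yields the claim.

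The only substantive ingredient is \eqref{eq:credit_clean_fidelity}, which I would verify directly in case it is not proved elsewhere. For $v=(u/\kappa)^2$,
\[
u-\chi_\kappa(u)=u\bigl(1-(1+v)^{-1/2}\bigr)=u\,\frac{\sqrt{1+v}-1}{\sqrt{1+v}}.
\]
Since $\sqrt{1+v}-1\le v/2$ and $\sqrt{1+v}\ge1$, it follows that $|u-\chi_\kappa(u)|\le |u|v/2=|u|^3/(2\kappa^2)$.

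No step is a real obstacle. The one point needing care is the phrase ``at a common leave-one-out scale'': both directions must use the same $S_i$, so that the difference collapses to the single numerator difference $D_i$. Unbiasedness is never used; the bound is purely pointwise plus Jensen.
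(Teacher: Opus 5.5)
Your proposal is correct and follows the paper's proof step for step: the pointwise fidelity bound \eqref{eq:credit_clean_fidelity}, then $S_i^{-1}\leq s_{\min}^{-1}$, then the triangle (Jensen) inequality for the norm inside the conditional expectation. Your integrability check, via $|\chi_\kappa(U_i)-U_i|\leq 2|U_i|$ and the moment hypothesis of \Cref{thm:crossfit_baseline}, is a small but welcome addition that the paper leaves implicit.
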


The preceding bound isolates numerator shaping at a common scale. The full $\widehat A^{\mathrm{credit}}$ construction also replaces a raw scale by the mapped scale; algebraically,
\[
\left|\frac{\chi_\kappa(U)}{S_\chi}-\frac{U}{S_{\rm lin}}\right|
\leq\frac{|\chi_\kappa(U)-U|}{S_\chi}
+\frac{|U|\,|S_\chi-S_{\rm lin}|}{S_\chi S_{\rm lin}},
\]
so the scale-construction term must be bounded under additional moments or reported empirically. Thus the error to the original arithmetic objective decomposes into contamination error to the matched robust clean reference, robust-location target bias, numerator-shaping distortion, scale-construction distortion, and smoothing or solver error. Only the first component is addressed by a stability theorem; the others are deliberate surrogate choices.

}

\FloatBarrier
\section{Experiments}
\label{sec:experiments}

We evaluate RoVR-GSPO on mathematical reasoning, long-context summarization, and tool-call annotation, followed by channel ablations and controlled perturbation analyses. Detailed protocols and supplementary diagnostics appear in \Cref{app:additional_experiments,app:exp_protocol}.

\subsection{Experimental setup}
\label{sec:exp_protocol}

We evaluate three downstream tasks. For mathematical reasoning, Qwen3-4B-Base and OLMo-3-7B-Instruct are trained on the 7,500-example MATH split~\citep{hendrycks2021measuringmathematicalproblemsolving} and evaluated on MATH500, AIME2025, AMC23, Gaokao2023-Math-En~\citep{zhang2024evaluatingperformancelargelanguage}, and MinervaMath~\citep{lewkowycz2022solvingquantitativereasoningproblems}, using either a bounded answer verifier or InternLM2-1.8B-Reward. For GovReport summarization~\citep{huang2021efficient}, training uses LongDocFACTScore~\citep{bishop2024longdocfactscore} while evaluation uses ROUGE-1/2/L~\citep{lin2004rouge}; the evaluation metrics are therefore distinct from the training reward. For offline tool-call annotation, Qwen3-4B-Base is trained on the RLLA-4K split from ToolRL~\citep{qian2025toolrl}, and the held-out metric is the training scorer's format-plus-tool-match reward. No tools are executed in this third task.

GSPO~\citep{zheng2025gspo} is the parent baseline. Initialization, response and optimizer budgets, decoding, and evaluation code are matched within each comparison. RoVR-GSPO combines the full RoVR reward reference, bounded credit, and SoftRoVR sequence weights; channel ablations disable the corresponding component. The mathematical-reasoning main comparison and channel ablation report mean $\pm$ standard deviation over three independent runs; summarization and tool-call annotation are single-run estimates. \Cref{app:simulation} separately evaluates the reference estimator.

\subsection{Downstream results}
\label{sec:downstream_results}

\paragraph{Mathematical reasoning.}
\label{sec:math_results}

\begin{table}[!htbp]
\centering
\caption{Mathematical-reasoning accuracy (\%), mean $\pm$ standard deviation over three runs. Bold marks the higher mean for each setting.}
\label{tab:math_main}
\small
\setlength{\tabcolsep}{4pt}
\renewcommand{\arraystretch}{1.10}
\begin{tabular}{llcccc}
\toprule
\textbf{Model} & \textbf{Dataset}
& \multicolumn{2}{c}{\textbf{Rule reward}}
& \multicolumn{2}{c}{\textbf{InternLM2 reward}} \\
\cmidrule(lr){3-4}\cmidrule(lr){5-6}
& & \textbf{GSPO} & \textbf{RoVR-GSPO} 
& \textbf{GSPO} & \textbf{RoVR-GSPO} \\
\midrule

\multirow{5}{*}{Qwen3-4B-Base}
& MATH500      
& $77.00 \pm 0.43$ 
& $\mathbf{77.47 \pm 0.57}$ 
& $74.53 \pm 0.41$ 
& $\mathbf{75.80 \pm 0.28}$ \\

& Gaokao2023en 
& $39.74 \pm 0.21$ 
& $\mathbf{45.37 \pm 0.95}$ 
& $38.53 \pm 0.32$ 
& $\mathbf{42.08 \pm 0.37}$ \\

& MinervaMath  
& $19.36 \pm 0.46$ 
& $\mathbf{21.69 \pm 1.04}$ 
& $15.81 \pm 0.30$ 
& $\mathbf{16.67 \pm 0.34}$ \\

& AIME2025     
& $14.44 \pm 1.57$ 
& $\mathbf{15.56 \pm 1.57}$ 
& $8.89 \pm 1.57$ 
& $\mathbf{15.56 \pm 1.57}$ \\

& AMC23        
& $62.50 \pm 2.04$ 
& $\mathbf{63.33 \pm 1.18}$ 
& $56.67 \pm 1.18$ 
& $\mathbf{64.17 \pm 3.12}$ \\

\midrule

\multirow{5}{*}{OLMo-3-7B-Instruct}
& MATH500      
& $89.80 \pm 0.16$ 
& $\mathbf{90.73 \pm 0.25}$ 
& $88.93 \pm 0.74$ 
& $\mathbf{89.53 \pm 0.41}$ \\

& Gaokao2023en 
& $73.77 \pm 0.21$ 
& $\mathbf{74.64 \pm 0.74}$ 
& $71.94 \pm 0.36$ 
& $\mathbf{72.90 \pm 0.25}$ \\

& MinervaMath  
& $28.80 \pm 0.76$ 
& $\mathbf{29.17 \pm 0.34}$ 
& $28.56 \pm 0.17$ 
& $\mathbf{29.29 \pm 0.63}$ \\

& AIME2025     
& $37.78 \pm 1.57$ 
& $\mathbf{41.11 \pm 1.57}$ 
& $37.78 \pm 1.57$ 
& $\mathbf{41.11 \pm 3.14}$ \\

& AMC23        
& $86.67 \pm 2.36$ 
& $\mathbf{87.50 \pm 2.04}$ 
& $86.67 \pm 2.36$ 
& $\mathbf{90.83 \pm 1.18}$ \\

\bottomrule
\end{tabular}
\end{table}

RoVR-GSPO achieves a higher mean than matched GSPO in all $20$ settings. The five-benchmark average improvements are $2.08\%/3.97\%$ for Qwen3-4B-Base and $1.27\%/1.96\%$ for OLMo-3-7B-Instruct under rule/InternLM2 rewards.

\paragraph{Long-context summarization.}
\label{sec:summarization}

For GovReport, both backbones are fine-tuned with LongDocFACTScore rewards using eight sampled summaries per input and evaluated with ROUGE-1/2/L.

\begin{table}[!htbp]
\centering
\caption{GovReport ROUGE point estimates on a 0--1 scale; bold marks the higher score.}
\label{tab:govreport_rouge}
\setlength{\tabcolsep}{8pt}
\begin{tabular}{llccc}
\toprule
\textbf{Model} & \textbf{Method} & \textbf{ROUGE-1} & \textbf{ROUGE-2} & \textbf{ROUGE-L} \\
\midrule
\multirow{2}{*}{Qwen3-4B-Base}
& GSPO      & 0.309 & 0.142 & 0.166 \\
& RoVR-GSPO & \textbf{0.333} & \textbf{0.164} & \textbf{0.175} \\
\midrule
\multirow{2}{*}{OLMo-3-7B-Instruct}
& GSPO      & 0.552 & 0.189 & 0.221 \\
& RoVR-GSPO & \textbf{0.553} & \textbf{0.195} & \textbf{0.229} \\
\bottomrule
\end{tabular}
\end{table}

RoVR-GSPO improves ROUGE-1/2/L for both backbones: $0.024/0.022/0.009$ for Qwen3-4B-Base and $0.001/0.006/0.008$ for OLMo-3-7B-Instruct.

\long\def\rovrGovReportDynamicsFigure{
\begin{figure}[H]
\centering
\includegraphics[pagebox=mediabox,width=0.98\linewidth]{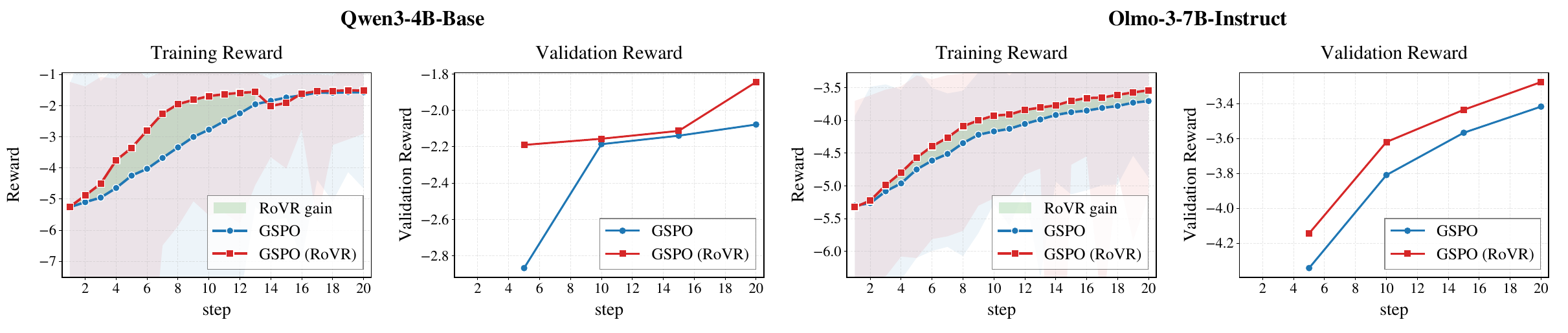}
\caption{GovReport training dynamics for Qwen3-4B-Base (left pair) and OLMo-3-7B-Instruct (right pair). Each pair shows training reward followed by validation reward. Blue circles denote GSPO, and red squares denote RoVR-GSPO, labeled GSPO (RoVR) in the figure. The green fill visualizes the reward gap.}
\label{fig:govreport_curves}
\end{figure}
}

The validation-reward trajectories in \Cref{fig:govreport_curves} are consistent with the held-out ROUGE results.

\paragraph{Tool-call annotation.}
\label{sec:tool_call_results}

On RLLA-4K, using 80 held-out prompts and one greedy response per prompt, the highest observed mean Total reward is $1.924$ for GSPO and $2.009$ for RoVR-GSPO. Each value is selected from four logged validations. The score measures format and tool matching without executing tools, and the available logs cover only the early portion of the configured training budget.

\subsection{Channel placement ablation}
\label{sec:robust_scale_exp}

We compare GSPO, reward-only RoVR-Adv, ratio-only SoftRoVR, and the full RoVR-GSPO model (\Cref{tab:placement_ablation,fig:placement_ablation}).


\begin{table}[!htbp]
	\centering
	\caption{Channel-ablation accuracy (\%) on mathematical reasoning. Values are reported as mean $\pm$ standard deviation over three independent runs; bold marks the highest mean in each column.}
	\label{tab:placement_ablation}
	\small
	\setlength{\tabcolsep}{5pt}
	\renewcommand{\arraystretch}{1.05}
	\begin{tabular}{lccccc}
		\toprule
		\textbf{Method} & \textbf{MATH500} & \textbf{Gaokao} & \textbf{Minerva} & \textbf{AIME25} & \textbf{AMC23} \\
		\midrule
		GSPO
		& $74.53 \pm 0.50$
		& $38.53 \pm 0.40$
		& $15.81 \pm 0.37$
		& $8.89 \pm 1.92$
		& $56.67 \pm 1.44$ \\
		
		RoVR-Adv
		& $74.73 \pm 0.81$
		& $40.61 \pm 0.40$
		& $15.18 \pm 0.23$
		& $12.22 \pm 1.92$
		& $56.67 \pm 1.44$ \\
		
		SoftRoVR
		& $74.53 \pm 0.23$
		& $40.87 \pm 1.28$
		& $16.42 \pm 0.34$
		& $14.44 \pm 1.93$
		& $59.17 \pm 1.44$ \\
		
		RoVR-GSPO
		& $\mathbf{75.80 \pm 0.35}$
		& $\mathbf{42.08 \pm 0.45}$
		& $\mathbf{16.67 \pm 0.42}$
		& $\mathbf{15.56 \pm 1.93}$
		& $\mathbf{64.17 \pm 3.82}$ \\
		\bottomrule
	\end{tabular}
\end{table}

Across the five benchmarks, the macro-average gains over GSPO are $1.00$, $2.20$, and $3.97$ percentage points for RoVR-Adv, SoftRoVR, and RoVR-GSPO, respectively, based on the three-run means. RoVR-GSPO has the highest mean on all five benchmarks and exceeds SoftRoVR by $1.77$ points on average.

\long\def\rovrAblationDynamicsFigure{
\begin{figure}[!htbp]
\centering
\includegraphics[width=\linewidth]{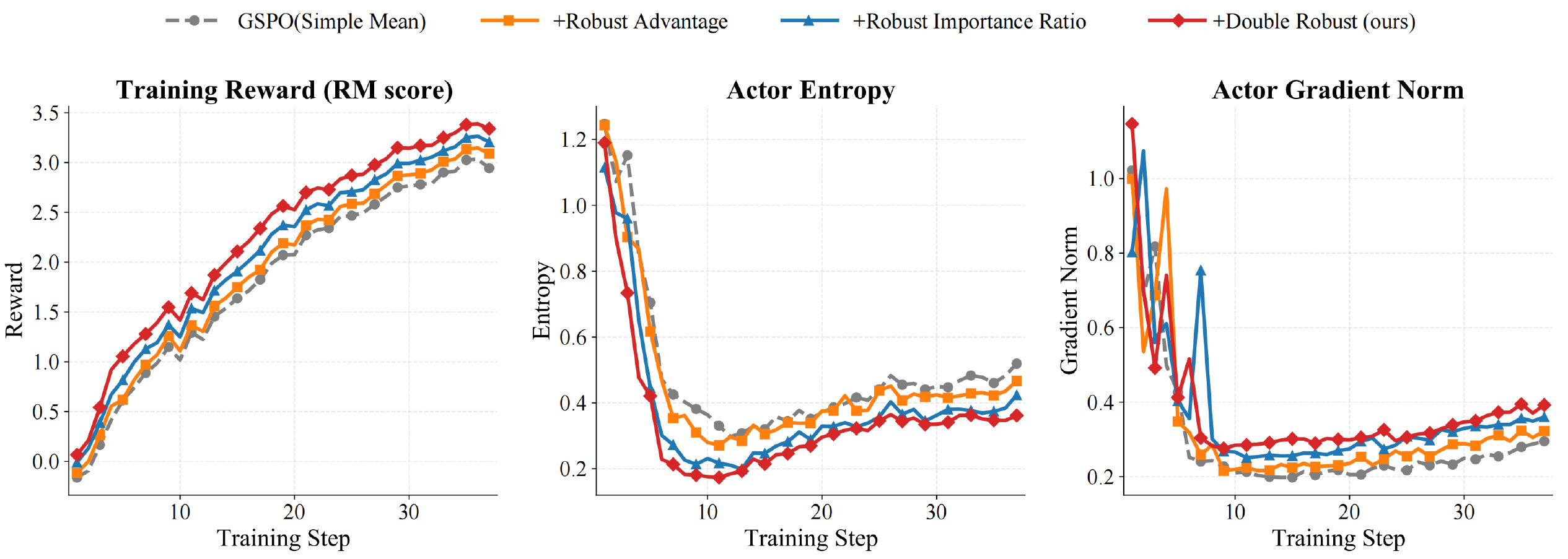}
\caption{Training dynamics for the channel ablation: reward, actor entropy, and actor gradient norm. Orange, blue, and red denote RoVR-Adv, SoftRoVR, and RoVR-GSPO; dashed gray denotes GSPO.}
\label{fig:placement_ablation}
\end{figure}
}

The combined variant records the highest late-stage reward, while the robust variants show lower late-stage entropy (\Cref{fig:placement_ablation}).

\paragraph{Mechanism checks.}
Offline perturbations support both mechanisms. Robust centering reduces reference displacement from $1.000\sigma$ to $0.060\sigma$, while bounded credit limits scale inflation from $28.194\times$ to $1.911\times$ and raises clean-contrast retention from $0.035$ to $0.520$. SoftRoVR reduces log-weight displacement by about $26\%$ for strong token spikes and $87\%$ for $20\%$ bursts, with fewer clipping flips under strong spikes.

\long\def\rovrRobustnessExperiments{
\subsection{Localized token-ratio stress test}
\label{sec:softrovr_token_stress}

We evaluate aggregation sensitivity on saved Qwen3-4B-Base records from a dual-channel run using $\widehat A^{\mathrm{credit}}$ and SoftRoVR. No policy is retrained: GSPO applies arithmetic-mean aggregation to the recorded token log-ratios, while SoftRoVR uses the run's differentiable aggregation. We then measure how localized token-ratio perturbations change sequence log-weights and clipping branches.

The saved snapshot contains $35$ complete training steps, each with $192$ prompt groups and $16$ responses ($107{,}520$ responses in total). The partially recorded step $36$ contains only $1{,}408$ responses and is excluded from the stress analysis. We select step $18$ using a fixed upper-middle rule on the ordered complete steps and use all $3{,}072$ responses, including all eight within-step optimizer-update positions. For response $i$, let $\boldsymbol{\ell}_i$ be the valid token log-ratios and let $\sigma_i$ be their population standard deviation. A single-token spike adds $\Delta=\pm c\sigma_i$ at a uniformly sampled valid position, with $c\in\{0,2,4,8,16\}$. A contiguous burst adds $\pm8\sigma_i$ to $\lceil fT_i\rceil$ tokens, with $f\in\{0,0.01,0.05,0.10,0.20\}$. Each response uses five paired random positions across methods and severities. We average positions within a response, then responses within a prompt, and finally give every prompt equal weight. Pointwise intervals are $5{,}000$ prompt-cluster percentile bootstrap replicates.

We measure the absolute log-weight displacement
\[
D_q=\left|m_{\mathrm{corrupt}}-m_{\mathrm{clean}}\right|
\]
and the fraction of responses whose clipping branch changes relative to its own clean branch. Branches are evaluated with the recorded advantage and the actual GSPO clipping interval $[0.9997,1.0004]$. The zero-scale responses ($12.5\%$ of responses in the complete steps) remain in the primary analysis; their perturbation is exactly zero. The paired differences in \Cref{tab:softrovr_local_stress} are GSPO minus SoftRoVR, so positive values favor SoftRoVR.

\begin{figure}[!htbp]
\centering
\includegraphics[width=0.98\linewidth]{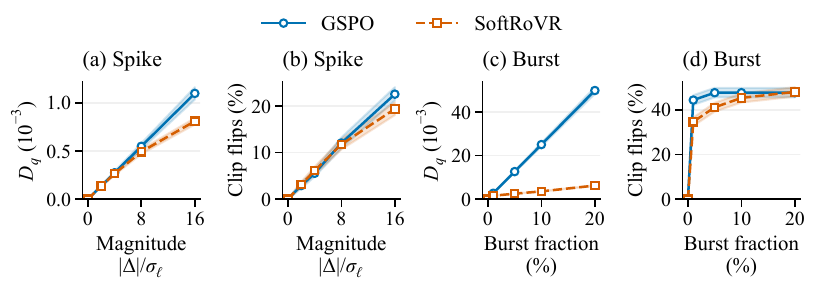}
\caption{Localized token-ratio stress test at recorded step $18$. Curves show means over $192$ prompt groups ($3{,}072$ responses); shaded regions show pointwise 95\% prompt-cluster bootstrap intervals. SoftRoVR attenuates log-weight displacement for both isolated spikes and contiguous bursts. For strong spikes, the paired flip-rate difference favors SoftRoVR; for the positive $20\%$ burst, its interval includes zero.}
\label{fig:softrovr_spike_burst}
\end{figure}

\begin{table}[!htbp]
\centering
\small
\setlength{\tabcolsep}{3pt}
\caption{Paired offline token-ratio stress test at step $18$ using $192$ prompt groups and $3{,}072$ responses. $D_q$ is reported in $10^{-3}$ units; for Flips, differences are reported in \%. Intervals are pointwise 95\% prompt-cluster bootstrap intervals.}
\label{tab:softrovr_local_stress}
\begin{tabular}{lllrrr}
\toprule
Perturbation & Sign & Metric & GSPO & SoftRoVR & Difference [95\% CI] \\
\midrule
Spike $16\sigma_\ell$ & $+$ & $D_q$ & 1.097 & 0.808 & 0.288 [0.256, 0.324] \\
Spike $16\sigma_\ell$ & $+$ & Flips (\%) & 22.526 & 19.362 & 3.164 [2.246, 4.082] \\
Burst $20\%$ & $+$ & $D_q$ & 49.812 & 6.258 & 43.554 [42.112, 44.963] \\
Burst $20\%$ & $+$ & Flips (\%) & 47.591 & 47.956 & -0.365 [-0.944, 0.221] \\
Spike $16\sigma_\ell$ & $-$ & $D_q$ & 1.097 & 0.811 & 0.286 [0.254, 0.321] \\
Spike $16\sigma_\ell$ & $-$ & Flips (\%) & 21.061 & 17.370 & 3.691 [2.949, 4.453] \\
Burst $20\%$ & $-$ & $D_q$ & 49.812 & 6.283 & 43.529 [42.088, 44.940] \\
Burst $20\%$ & $-$ & Flips (\%) & 39.909 & 39.102 & 0.807 [0.202, 1.413] \\
\bottomrule
\end{tabular}
\end{table}

At the strongest single-token spike, SoftRoVR reduces $D_q$ by $26.30\%$ for positive perturbations and $26.09\%$ for negative perturbations; clipping-flip rates decrease from $22.526\%$ to $19.362\%$ and from $21.061\%$ to $17.370\%$. For a $20\%$ burst, $D_q$ decreases by $87.44\%$ and $87.39\%$ in the positive and negative directions. The positive-burst flip rate changes from $47.591\%$ to $47.956\%$ with interval $[-0.944\%,0.221\%]$, while the negative-burst rate changes from $39.909\%$ to $39.102\%$ with interval $[0.202\%,1.413\%]$. Overall, SoftRoVR reduces aggregate displacement and strong-spike clipping sensitivity, while long-burst clipping effects depend on perturbation direction and training stage.

\Cref{app:softrovr_stress} provides the negative-direction and across-step analyses.

\subsection{Robustness to reward contamination}
\label{sec:reward_contamination}

We perturb saved InternLM2 rewards from Qwen3-4B-Base rollouts while keeping responses fixed. From $16{,}704$ complete groups, we select $5{,}000$ evenly spaced groups with $G=16$ and replace one reward at a time by
\begin{equation}
R_t' = R_t \mathbin{\pm} \alpha\sigma,
\qquad
\alpha\in\{0.5,1,2,4,8,16\},
\end{equation}
where $\sigma=1.165$ is the population standard deviation of the saved reward stream. We compare GRPO/GSPO normalization, the RoVR-reference variant $\widehat A^{\mathrm{center}}$, and the bounded-credit variant $\widehat A^{\mathrm{credit}}$. Both robust variants use the full RoVR reference, including blockwise M-centers with $c=1$ and the outer composite-quantile correction. The credit map uses $\kappa=1$ and $s_{\min}=10^{-3}$. Keeping the reference and block assignment matched isolates the contribution of the bounded residual map.

The baseline divides mean-centered rewards by the sample standard deviation (denominator $G-1$) plus $10^{-6}$. The RoVR-reference and bounded-credit variants use the RMS of their respective residuals with $10^{-6}$ inside the square root. At $G=16$ and negligible regularization, sample standard deviation is $\sqrt{16/15}$ times mean-centered RMS. The center-to-credit comparison holds both the RoVR reference and the RMS convention fixed.

We measure reference displacement, scale inflation, clean-response contrast retention $\mathcal C_H$, and clean-advantage RMS deviation relative to each estimator's clean outputs. Results use within-group medians, averaged over perturbation signs, followed by medians across prompt groups. Pointwise $95\%$ intervals use $500$ group-bootstrap replicates.

\begin{figure}[!htb]
\centering
\includegraphics[width=\linewidth]{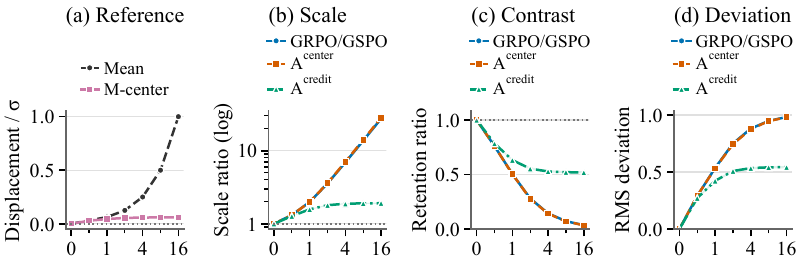}
\caption{Offline sensitivity to additive reward contamination. One of $G=16$ rewards is perturbed by $\pm\alpha\sigma$; curves summarize $5{,}000$ prompt groups and shaded regions denote bootstrap $95\%$ intervals. The RoVR reference (labeled M-center) limits location displacement, while bounded-credit normalization additionally limits scale inflation and preserves more clean-response advantage contrast.}
\label{fig:reward_contamination}
\end{figure}

\begin{table}[!htbp]
\centering
\caption{Estimator sensitivity at the strongest offline stress level ($\alpha=16$). Scale inflation and clean contrast retention are ideal at $1$; advantage RMS deviation is ideal at $0$. Values are medians across the same $5{,}000$ complete prompt groups used in \Cref{fig:reward_contamination}.}
\label{tab:reward_contamination}
\small
\setlength{\tabcolsep}{4pt}
\renewcommand{\arraystretch}{1.08}
\begin{tabular}{lccc}
\toprule
\textbf{Estimator} & \textbf{Scale inflation} & \textbf{Clean contrast retention} & \textbf{Advantage RMS deviation} \\
\midrule
GRPO/GSPO & 27.400 & 0.036 & 0.982 \\
Robust center & 28.194 & 0.035 & 0.980 \\
$A^{\mathrm{credit}}$ & \textbf{1.911} & \textbf{0.520} & \textbf{0.543} \\
\bottomrule
\end{tabular}
\end{table}

\Cref{fig:reward_contamination,tab:reward_contamination} separate protection of the reference from protection of the complete advantage. At $16\sigma$, the arithmetic reference shifts by $1.000\sigma$, whereas the robust reference shifts by only $0.060\sigma$. With the reference held fixed, the bounded-credit map further limits scale inflation to $1.911\times$, retains $0.520$ of the clean contrast, and reduces clean-advantage RMS deviation to $0.543$; the center-only variant retains only $0.035$ of the clean contrast and inflates the scale by $28.194\times$, close to standard normalization ($0.036$ and $27.400\times$).

At the tested $4\sigma$ shift, bounded-credit normalization yields $1.881\times$ scale inflation and $0.529$ contrast retention, compared with $6.918\times$ and $0.145$ for standard normalization. These fixed-batch results match the mechanism in \Cref{prop:center_only_impossibility}: robust centering controls shared reference displacement, while the bounded residual map additionally controls the influence of the perturbed response on scale and advantage. Complementary simulations in \Cref{app:simulation} isolate the clean outer-efficiency result and compare reference estimators under heavy tails and matched contamination geometries.

\FloatBarrier
}

\section{Conclusion}
\label{sec:conclusion}

RoVR-GSPO treats reward-derived advantages and sequence-level weights as separate statistical interfaces to group-relative policy optimization. Blockwise M-estimation and tie-neutral composite-quantile aggregation construct the reward reference; bounded credit protects the normalized response contrast, and SoftRoVR aggregates token log-ratios before clipping. The analysis links these design choices to reference robustness, clean outer efficiency, and local update stability.

Across mathematical reasoning and GovReport, RoVR-GSPO consistently improves over matched GSPO baselines, with additional gains on the offline tool-call annotation score. Controlled reward and token perturbations further show that the two channels address complementary sources of update sensitivity. These results support robust advantage and sequence-weight estimation as a practical statistical interface for group-relative language-model post-training.

\subsection*{AI use statement}

OpenAI Codex was used to assist with manuscript organization, language editing, LaTeX formatting, and stress-testing the exposition and logical consistency of the motivation and theoretical arguments. The authors are responsible for independently verifying every mathematical claim, proof, citation, experimental datum, and conclusion before submission, and they take responsibility for the final content of the paper.

\subsection*{Ethics statement}

This work studies robust advantage and sequence-weight estimation for language-model post-training and introduces no new human-subject study or personally identifiable dataset. Robust aggregation can reduce sensitivity to isolated scoring errors, but it does not guarantee reward-model validity, factual correctness, fairness, or safe deployment. Any deployment should therefore retain independent reward audits, distribution-shift evaluation, and the data-governance requirements of the underlying models and datasets.

\subsection*{Reproducibility statement}

The estimator, credit map, token extension, and optimization procedure are specified in \Cref{sec:method,app:implementation}. The assumptions and complete proofs of the theoretical claims appear in \Cref{app:finite_group,app:target_scale,app:policy}. The experimental protocol, placement ablation, and reporting conventions are specified in \Cref{sec:experiments,app:exp_protocol}.

\bibliography{reference}
\bibliographystyle{iclr2027_conference}

\clearpage
\appendix
\raggedbottom
\setlength{\textfloatsep}{5pt plus 1pt minus 1pt}
\setlength{\floatsep}{4pt plus 1pt minus 1pt}
\setlength{\intextsep}{5pt plus 1pt minus 1pt}
\renewcommand{\floatpagefraction}{0.8}
\makeatletter
\setlength{\@fptop}{0pt}
\setlength{\@fpsep}{8pt plus 2pt minus 2pt}
\setlength{\@fpbot}{0pt plus 1fil}
\makeatother

\section{Related Work}
\label{sec:related}

RoVR-GSPO connects two strands of work: robust construction of reward-derived advantages and robust aggregation of the likelihood statistics used by clipped policy objectives. PPO clips likelihood ratios and is commonly paired with a learned value function in RLHF~\citep{schulman2017ppo,ouyang2022instructgpt}; GRPO replaces the critic with within-group reward normalization~\citep{shao2024deepseekmath}; DAPO studies large-scale choices such as asymmetric clipping and dynamic sampling~\citep{yu2025dapo}; and GSPO uses a length-normalized sequence correction~\citep{zheng2025gspo}. RoVR-GSPO keeps the parent clipped objective and contributes a statistical interface that separates reference estimation, bounded credit, and sequence-weight aggregation.

\paragraph{Robust reward and advantage estimation.}
Learned reward models can exhibit label errors, calibration shifts, and exploitable tails, while outcome verifiers are bounded and often tie-rich. Process supervision supplies denser feedback~\citep{lightman2023verify}, and heavy-tailed reward misspecification remains a concern even with KL regularization~\citep{kwa2024catastrophic}. Reward winsorization and post-normalization advantage clipping limit extremes; RoVR separates robust reference estimation from bounded residual normalization so their effects can be analyzed independently. Risk-sensitive objectives intentionally change the optimized reward functional~\citep{ren2025riskpo}, whereas our reward channel exposes the robustness--fidelity trade-off through the explicit map $\chi_\kappa$.

Group-relative normalization also induces prompt-dependent difficulty weighting~\citep{yang2026grouprelativeadvantagebiased}. This statistical bias concerns the target of advantage estimation, whereas RoVR characterizes sensitivity to localized reward perturbations and the fidelity of bounded credit. The leave-one-out analysis in \Cref{thm:crossfit_baseline} makes the prompt-reweighting effect explicit.

\paragraph{Robust reference estimation.}
Medians, trimming, and bounded-score M-estimators provide classical contamination protection~\citep{huber2011robust,rieder2012robust}. A global Huber or pseudo-Huber estimator is the $B=1$ boundary of our reference construction. MOM methods add an explicit block interface~\citep{lecue2020robust,humbert2022robust}, while VRMOM reduces the outer median variance through a composite-quantile one-step correction but retains arithmetic local means~\citep{tu2021variancereducedmedianofmeansestimator}. RoVR combines bounded-score local centers with tie-neutral composite-quantile aggregation and then adds a policy-specific bounded credit map, so reference protection and clean-response contrast can be examined separately.

\paragraph{Robust sequence-weight estimation.}
GSPO summarizes token log-ratios into a sequence-level weight before clipping. SoftRoVR targets this aggregation step directly: it smooths the RoVR construction so gradients can pass through the sequence statistic and then exponentiates the resulting log-weight. Because this robust aggregation changes the likelihood functional, we analyze it as an optimization surrogate and study its calibration and clipping-branch stability rather than treating it as an exact importance ratio.

Barron's family connects several classical robust penalties~\citep{BarronCVPR2019} and supports adaptive and graduated robust estimation~\citep{jung2024adaptive,hitchcox2022mind}. RoVR fixes $\alpha=1$ to obtain a convex pseudo-Huber/Charbonnier-type local loss, then combines its bounded score with tie-neutral composite-quantile aggregation and policy-specific bounded residual normalization. This choice gives a unique local root and the score properties used in our analysis.

\section{Additional Experimental Results}
\label{app:additional_experiments}

This appendix reports training dynamics and the complete token-ratio and reward-contamination diagnostics using the sampling units specified in each protocol.

\subsection{Training dynamics}

\rovrGovReportDynamicsFigure
\rovrAblationDynamicsFigure

\rovrRobustnessExperiments

\long\def\rovrTheoryAppendix{
\rovrDeferredTheory

\section{Proofs for Failure Propagation and Finite-Group Robustness}
\label{app:finite_group}

This appendix proves the exact GRPO phase diagram, the translation, constant-preservation, and tie-neutral properties of RoVR, and the mixed-contamination theorem. The fixed-group proof evaluates bounded empirical scores at deterministic thresholds, thereby avoiding a curvature event along an unknown contaminated root path.

\subsection{Single-reward propagation and the center-only impossibility}

\begin{proof}[Proof of \Cref{prop:grpo_broadcast}]
Write $R_i^{(\Delta)}=R_i+\Delta\ind\{i=j\}$. The mean identity is immediate. Expanding the centered sum of squares, or applying the one-point variance update, gives
\[
\frac1G\sum_i(R_i^{(\Delta)}-\bar R_\Delta)^2
=\frac1G\sum_i(R_i-\bar R)^2
+\frac{2\Delta(R_j-\bar R)}{G}
+\frac{G-1}{G^2}\Delta^2,
\]
which proves \eqref{eq:grpo_spike_exact}, including the unchanged floor $\varepsilon_s$.

The centered residuals sum to zero, so $\sum_iA_{i,\Delta}=0$. Moreover,
\[
\sum_iA_{i,\Delta}^2
=\frac{\sum_i(R_i^{(\Delta)}-\bar R_\Delta)^2}
{G^{-1}\sum_i(R_i^{(\Delta)}-\bar R_\Delta)^2+\varepsilon_s}
\leq G.
\]
If $M=\max_i|A_{i,\Delta}|$, the other $G-1$ values sum to a number of magnitude $M$. Cauchy--Schwarz gives $\sum_{k\neq i}A_{k,\Delta}^2\geq M^2/(G-1)$, hence $G\geq M^2G/(G-1)$ and $M\leq\sqrt{G-1}$.

As $|\Delta|\to\infty$, \eqref{eq:grpo_spike_exact} gives $s_\Delta\sim|\Delta|\sqrt{G-1}/G$. The focal centered residual is $\Delta(G-1)/G+O(1)$ and every other centered residual is $-\Delta/G+O(1)$. Division proves \eqref{eq:grpo_broadcast_limit}. Finally, $A_{i,\Delta}-A_{k,\Delta}=(R_i-R_k)/s_\Delta\to0$ for clean $i,k$.
\end{proof}

\begin{proof}[Proof of \Cref{prop:center_only_impossibility}]
Because $m_\Delta=O(1)$, the focal raw residual is $R_j+\Delta-m_\Delta=\Delta+O(1)$, while every clean residual is $O(1)$. If $d_\Delta=O(1)$, the focal quotient diverges. Conversely, bounded focal credit requires $d_\Delta=\Omega(|\Delta|)$, and division sends every fixed clean residual to zero. For an RMS around $m_\Delta$, the single squared focal residual dominates its average, so $d_\Delta\sim|\Delta|/\sqrt G$ and the stated limits follow.
\end{proof}

\subsection{Invariance and deterministic root sensitivity}

\begin{proof}[Proof of \Cref{prop:calibration}]
Adding $a$ to every observation shifts each convex block objective by the change of variable $u\mapsto u+a$. Hence every $\widetilde\mu_b$ and their median shift by $a$, while all residuals, local sandwich scales, and the arguments of $J_0$ remain unchanged. Equation \eqref{eq:rovr} therefore shifts by exactly $a$.

If every observation equals $a$, each block center and the median equal $a$. Symmetry gives $\Delta_{K+1-k}=-\Delta_k$. For every noncentral pair,
\[
J_0(-\widehat\nu\Delta_k/\sqrt{n_b})
+J_0(-\widehat\nu\Delta_{K+1-k}/\sqrt{n_b})=1.
\]
When $K$ is odd, the central term is $J_0(0)=1/2$. Thus the sum over $k$ is $K/2=\sum_k\tau_k$ for every block, and the correction is zero. The same pairing applies when $B=1$ because its only block center equals the initializer, so RoVR reduces to that local M-center.
\end{proof}

\begin{proof}[Proof of \Cref{prop:block_replacement}]
Replacing at most $s$ observations changes the empirical score uniformly by at most $2sM_c/n$. Since the score derivative is the negative empirical curvature, the mean-value theorem between the two roots gives
\[
a_0|T'-T|
\leq\sup_u|S_D(u)-S_{D'}(u)|
\leq\frac{2sM_c}{n},
\]
which proves \eqref{eq:block_replacement_sensitivity}. Interchanging $D,D'$ gives the alternative condition.
\end{proof}

\subsection{Randomized load and the fixed-threshold block tail}

\begin{proof}[Proof of \Cref{prop:random_partition}]
For a fixed block of size $n_b$, $S_b$ is hypergeometric with mean $n_bm/N$. Hoeffding's inequality for sampling without replacement yields
\[
\Pp\!\left(S_b-\frac{n_bm}{N}\geq u\right)
\leq\exp(-2u^2/n_b).
\]
A union bound over blocks proves the first inequality in \eqref{eq:random_partition_load}. Since $n_b\leq n_{\max}$, each summand is at most $\exp(-2u^2/n_{\max})$, proving the second.
\end{proof}

\begin{proof}[Proof of \Cref{cor:partition_to_estimator}]
Apply \Cref{prop:random_partition} using the block-specific threshold for each block $b$,
$u_b=\sqrt{n_b\log(B/\delta_{\rm part})/2}$, and then take a union bound. With probability at least $1-\delta_{\rm part}$, every realized load is at most $s_b(\delta_{\rm part})$. Conditional on the sampled partition and this load event, the clean block samples remain independent, the replacement support obeys the deterministic budgets, and \Cref{thm:finite_group} applies with $q=0$. Its observable-correction-magnitude and deterministic-envelope statements give the two conditional bounds. A final union bound over the load event proves \eqref{eq:partition_estimator_bound}. If the replacement indices were chosen after seeing the partition, this conditioning step would not supply the prescribed independent load event.
\end{proof}

Let $S_b^0(v)=n_b^{-1}\sum_i\psi_c(X_{bi}-v)$ be the clean score and $S_b(v)$ the score after at most $s_b$ replacements. Uniformly in $v$,
\[
|S_b(v)-S_b^0(v)|\leq\frac{2s_bM_c}{n_b}.
\]
Because $S_b$ is decreasing, $\{T_b>\theta+t\}\subseteq\{S_b(\theta+t)>0\}$. Therefore
\[
\{T_b>\theta+t\}
\subseteq
\left\{
S_b^0(\theta+t)-\E S_b^0(\theta+t)
>at-\frac{2s_bM_c}{n_b}
\right\}.
\]
Each score is in $[-M_c,M_c]$, so Hoeffding's inequality gives the upper-tail probability $p_b(t)$ in \eqref{eq:block_mixed_tail}; the lower tail is identical.

\begin{proof}[Proof of \Cref{thm:finite_group}]
For every honest block define the clean upper-tail event
\[
E_b^+(t)=\left\{S_b^0(\theta+t)-\E S_b^0(\theta+t)
>at-\frac{2s_bM_c}{n_b}\right\}.
\]
The fixed-threshold inclusion gives
$\{T_b>\theta+t\}\subseteq E_b^+(t)$ regardless of how the allowed replacement coordinates and values are selected after the clean sample is observed. The events $\{E_b^+(t):b\in\mathcal H\}$ depend only on mutually independent latent clean blocks, so they remain independent and have probabilities at most $p_b(t)$. This is why adaptivity of replacement values is allowed but data-dependent selection of $\mathcal H$ is not.

If the midpoint median for even $B$, or the central median for odd $B$, exceeds $\theta+t$, at least $r_B=\lceil B/2\rceil$ block centers lie above the threshold. Even if all $q$ arbitrary blocks do so, at least $r_B-q$ clean dominating events must occur. Hoeffding's inequality for independent, non-identically distributed Bernoulli variables gives
\[
\Pp(\widehat\mu_0>\theta+t)
\leq\exp[-2H\{\eta_q-\overline p_H(t)\}^2].
\]
The lower-tail argument is identical, so a union bound yields
$\Pp(|\widehat\mu_0-\theta|>t)\leq\beta_H(t)$.

On the complementary event, the triangle inequality combines the population-dependent threshold with the observable correction magnitude
\[
|\widehat\theta_{\mathrm{RoVR}}-\theta|
\leq t+
|\widehat\theta_{\mathrm{RoVR}}-\widehat\mu_0|
=t+\widehat C_B.
\]
For each block,
$0\leq\sum_kJ_0(\cdot)\leq K$ and
$\sum_k\tau_k=K/2$, so its centered composite score has magnitude at most $K/2$. Together with $\widehat\nu\leq\nu_{\max}$, this gives
\[
\widehat C_B
\leq\frac{\nu_{\max}}{D_KW_B}\frac{BK}{2}
=C_B.
\]
The two probability statements in \eqref{eq:finite_rovr_bound} follow.
\end{proof}

\section{Proofs for Target, Credit, and Clean Efficiency}
\label{app:target_scale}

This appendix proves the target, bounded-credit, unequal-block oracle variance, scale-orthogonality, and clean transfer results. The arguments keep functional bias, contamination bias, finite-sample error, and credit shaping separate.

\subsection{Asymmetry and population contamination}

\begin{proof}[Proof of \Cref{prop:asymmetric_bias}]
Since $g(\theta_\rho)=0$, the mean-value theorem and the curvature lower bound give $|g(\mu)|\geq a_0|\mu-\theta_\rho|$, proving \eqref{eq:asymmetric_functional_bias}. For the third-moment statement, let $U=X-\mu$. The inequality $1-(1+x)^{-1/2}\leq x/2$ gives
\[
\left|\frac{U}{\sqrt{1+U^2/c^2}}-U\right|
\leq\frac{|U|^3}{2c^2}.
\]
Since $\E U=0$, $|\E\psi_c(U)|\leq\E|U|^3/(2c^4)$. The dimensionless curvature condition corresponds to $a_0=\bar a_0/c^2$, which proves \eqref{eq:third_moment_bias}.
\end{proof}

\begin{proof}[Proof of \Cref{prop:population_contamination}]
At $\theta_\epsilon=\theta_\rho(P_\epsilon)$,
\[
(1-\epsilon)\E_{P_0}\psi_c(X-\theta_\epsilon)
+\epsilon\E_Q\psi_c(X-\theta_\epsilon)=0.
\]
The contamination term has magnitude at most $M_c$, while the clean term has magnitude at least $a_0|\theta_\epsilon-\mu_0|$. Rearranging proves \eqref{eq:population_contamination_bias}.
\end{proof}

\subsection{Bounded-advantage stability and retained contrast}

\begin{proof}[Proof of \Cref{prop:credit_stability}]
The map $\chi_\kappa$ is 1-Lipschitz and lies in $[-\kappa,\kappa]$. Its square takes values in $[0,\kappa^2]$ and is $2\kappa$-Lipschitz. Each of the $r$ replaced rewards can change a squared-credit summand by at most $\kappa^2$; each unchanged reward changes by at most $2\kappa e_\theta$ through the center. The two quantities inside the scale square roots therefore differ by at most $r\kappa^2/G+2\kappa e_\theta$. Since the square root is $1/(2s_{\min})$-Lipschitz on $[s_{\min}^2,\infty)$, \eqref{eq:credit_scale_stability} follows.

For response $i$,
\[
|\chi_\kappa(R_i-\widehat\theta_R)
-\chi_\kappa(R_i^\star-\theta_R^\star)|
\leq\min\{e_{R,i}+e_\theta,2\kappa\}=d_{\chi,i}.
\]
Adding and subtracting the first numerator divided by the oracle scale gives
\[
|\widehat A_i-A_i^\star|
\leq\frac{d_{\chi,i}}{s_{\min}}
+\kappa\frac{|\widehat s_\chi-s_\chi^\star|}{s_{\min}^2},
\]
which is \eqref{eq:credit_stability}. The global magnitude bound follows from $|\chi_\kappa|\leq\kappa$. Sign preservation uses the sign-preserving map and the fact that an unchanged residual cannot cross zero when its margin exceeds $e_\theta$.

For \eqref{eq:credit_contrast_retention}, apply the mean-value theorem to the two residuals. On $[-M,M]$, the derivative is at least $m_{\kappa,M}$. The shared center cancels from their raw difference, and $\widehat s_\chi\leq\sqrt{s_{\min}^2+\kappa^2}$.
\end{proof}

\begin{proof}[Proof of \Cref{cor:finite_credit}]
On the event in \Cref{thm:finite_group}, the observed center is within $e_\theta(t)$ of the population reference $\theta$. Apply \Cref{prop:credit_stability} to the observed group and the clean precursor centered at $\theta$, then substitute $e_\theta(t)$ into \eqref{eq:credit_scale_stability} and \eqref{eq:credit_stability}. The theorem's tail probability gives the simultaneous statement.
\end{proof}

Equation \eqref{eq:credit_clean_fidelity} follows from
\[
|\chi_\kappa(u)-u|
=|u|\left\{1-(1+u^2/\kappa^2)^{-1/2}\right\}
\leq\frac{|u|^3}{2\kappa^2}.
\]

\subsection{Unequal-block oracle variance and scale orthogonality}

\begin{proof}[Proof of \Cref{prop:oracle_unequal_efficiency}]
For block $b$, standardization makes $Z_b=\sqrt{n_b}(\widetilde\mu_b-\theta)/\nu$ standard normal. The block composite score
\[
\xi_b=\sum_{k=1}^{K}[J_0(Z_b-\Delta_k)-\tau_k]
\]
has mean zero and variance $A_K$; ties have probability zero. Independence across blocks gives
\[
\operatorname{Var}\!\left(
\frac{\nu}{D_KW_B}\sum_b\xi_b
\right)
=\frac{\nu^2BA_K}{D_K^2W_B^2}
=\frac{\nu^2V_K}{N_{\mathrm{eff}}}.
\]
\end{proof}

\begin{proof}[Proof of \Cref{prop:scale_orthogonality}]
Because $\Phi(\Delta_k)=\tau_k$, $M_\Phi(0,1)=0$. Differentiation gives
\[
\partial_tM_\Phi(0,1)=\sum_k\phi(\Delta_k)=D_K,
\qquad
\partial_sM_\Phi(0,1)=\sum_k\Delta_k\phi(\Delta_k).
\]
The quantile grid is symmetric, so terms in the last sum cancel pairwise and the central term, if present, is zero. A second-order Taylor expansion on a fixed neighborhood proves \eqref{eq:orthogonal_expansion}.
\end{proof}

\subsection{Clean efficiency transfer}

\begin{proof}[Proof of \Cref{thm:efficiency_transfer}]
Let
\[
R_{B,n}(t,s)=\frac1B\sum_{b\notin\mathcal H_B}\sum_{k=1}^{K}
\{J_0(Z_{b,n}-t-s\Delta_k)-\tau_k\}.
\]
Each arbitrary-block composite score has magnitude at most $K/2$, so $|R_{B,n}(t,s)|\leq q_BK/(2B)=o(B^{-1/2})$. The exact decomposition is
\[
\widehat M_{B,n}(t,s)
=\frac{H_B}{B}M_n(t,s)+\mathbb U_{B,n}(t,s)+R_{B,n}(t,s).
\]
For honest blocks, (T3), \Cref{prop:scale_orthogonality}, and (T1)--(T2) give
\[
M_n(t_0,\widehat s)
=D_Kt_0+o_{\Pp}(B^{-1/2}),
\]
because $t_0^2=O_{\Pp}(B^{-1})$ and $(\widehat s-1)^2=o_{\Pp}(B^{-1/2})$. Moreover, $(H_B/B-1)M_n=-(q_B/B)O_{\Pp}(B^{-1/2})=o_{\Pp}(B^{-1/2})$. Condition (T4) therefore yields
\[
\widehat M_{B,n}(t_0,\widehat s)
=D_Kt_0+\mathbb U_{B,n}(0,1)+o_{\Pp}(B^{-1/2}).
\]
Substitution into \eqref{eq:rovr} gives
\[
\begin{split}
\widehat\theta_{\mathrm{RoVR}}-\theta_\rho
= {}&-\frac{\nu}{\sqrt nD_K}\mathbb U_{B,n}(0,1)
+\frac{\nu(1-\widehat s)t_0}{\sqrt n}\\
&-\frac{\nu(\widehat s-1)}{\sqrt nD_K}\mathbb U_{B,n}(0,1)
+o_{\Pp}((Bn)^{-1/2}).
\end{split}
\]
The second residual is $o_{\Pp}(B^{-1/2}/\sqrt n)$ because $(\widehat s-1)t_0=o_{\Pp}(B^{-3/4})$, and the third has the same order because $\mathbb U_{B,n}(0,1)=O_{\Pp}(B^{-1/2})$. This proves \eqref{eq:efficiency_linearization}.

The honest block score is bounded by $K$. Under (T5), its covariance converges to the Brownian-bridge covariance and its variance to $A_K$; also $H_B/B\to1$ because $q_B=o(\sqrt B)$. The triangular-array Lindeberg condition is automatic, so the central limit theorem gives variance $\nu^2A_K/D_K^2$. Finally,
\[
D_K/K\to\int_{\R}\phi^2(z)\,dz=\frac{1}{2\sqrt\pi},
\qquad
A_K/K^2\to\int_0^1\!\int_0^1[\min(u,v)-uv]\,du\,dv=\frac1{12},
\]
and their ratio tends to $\pi/3$.
\end{proof}

\section{Proofs for Smoothing and Policy Scope}
\label{app:policy}

This appendix proves the tie-aware smoothing, translation calibration, branch-count, policy-stability, leave-one-out, and credit-distortion results. It does not turn the robust token surrogate into an exact likelihood ratio or the bounded advantage into an unbiased raw-reward gradient.

\subsection{Smoothing and translation calibration}

\begin{proof}[Proof of \Cref{prop:smoothing}]
For $u\neq0$, $|H_\gamma(u)-J_0(u)|\leq e^{-|u|/\gamma}$; at $u=0$, both equal $1/2$. Summing at most $BK$ nonzero errors and multiplying by the prefactor $\widehat\nu/(D_KW_B)$ proves \eqref{eq:smoothing_margin_bound}. Integrating the logistic-step discrepancy over a continuous margin gives $2\gamma\log2$, hence the stated $O(\gamma)$ expected error under a bounded continuous density. Atomic mass at zero has zero discrepancy.
\end{proof}

To verify \eqref{eq:rovr_lipschitz}, write $M(T)=\median_bT_b$ and $u_{bk}(T,\lambda)=T_b-M(T)-\lambda\Delta_k/\sqrt n$. The sample median is 1-Lipschitz in input sup norm, so the raw residuals satisfy
\[
\max_{b,k}|u_{bk}(T,\lambda)-u_{bk}(T',\lambda')|
\leq2d_T+\frac{\Delta_{\max}d_\lambda}{\sqrt n}.
\]
For $F_\gamma(T,\lambda)=B^{-1}\sum_{b,k}[H_\gamma(u_{bk}(T,\lambda))-\tau_k]$, symmetry of the quantile grid gives $\sum_k\tau_k=K/2$ and hence $|F_\gamma|\leq K/2$. Also, $|H_\gamma'|\leq1/(4\gamma)$ implies
\[
|F_\gamma(T,\lambda)-F_\gamma(T',\lambda')|
\leq\frac{K}{4\gamma}
\left(2d_T+\frac{\Delta_{\max}d_\lambda}{\sqrt n}\right).
\]
Finally, $\mathcal S_\gamma(T,\lambda)=M(T)-\lambda F_\gamma(T,\lambda)/(\sqrt nD_K)$. Changing the initializer, scale prefactor, and logistic arguments in turn, and using $\lambda,\lambda'\leq\lambda_{\max}$, gives \eqref{eq:rovr_lipschitz} for the raw-residual parameterization used by the algorithm.

For \eqref{eq:rovr_translation_calibration}, translating all token values translates each local convex root and every equivariantly initialized smooth median by the same amount. Residuals, scales, and correction scores are unchanged. Differentiating the translation identity with respect to the common shift gives the gradient-sum identity.

\subsection{Branch counts and local policy stability}

\begin{proof}[Proof of \Cref{prop:branch_flip_count}]
If $|\widehat m_i-m_i^\star|\leq e_{m,i}$ and the oracle log ratio is farther than $e_{m,i}$ from both log boundaries, the observed and oracle log ratios lie in the same interval cut out by those boundaries. A union count gives the first inequality. The center sign statement follows because an unchanged residual can cross zero only if its oracle magnitude is at most the center displacement. Summing gives \eqref{eq:flip_count_bounds}.
\end{proof}

\begin{proof}[Proof of \Cref{prop:policy_stability}]
For nonnegative ratios, $g(q,A)$ is $\overline Q$-Lipschitz in $A$ over the stated domain and $|A|$-Lipschitz in $q$. The exponential mean-value bound gives $|\widehat q-q^\star|\leq e^Le_m$, proving \eqref{eq:policy_value_stability}.

Away from clipping kinks and under the same active branch, the gradient is either zero or $Aq\nabla m$. The log-aggregate conditions imply
\[
\|\widehat q\nabla\widehat m-q^\star\nabla m^\star\|
\leq e^L(e_J+J_{\max}e_m),
\qquad
\|\widehat q\nabla\widehat m\|
\leq e^L(J_{\max}+e_J).
\]
Adding and subtracting $A^\star\widehat q\nabla\widehat m$ proves \eqref{eq:policy_gradient_stability}.
\end{proof}

\subsection{Linear leave-one-out direction and bounded-advantage distortion}

\begin{proof}[Proof of \Cref{thm:crossfit_baseline}]
Condition on $(x,Y_{-i},U)$. Then $B_i,S_i$ are constant with respect to $Y_i$ and
\[
\E[\nabla_\theta\log\pi_\theta(Y_i\mid x)B_i/S_i\mid x,Y_{-i},U]=0.
\]
The reward term equals $S_i^{-1}\nabla_\theta\E[R(x,Y)\mid x]$. Averaging over $Y_{-i},U$ and then over $i$ gives \eqref{eq:loo_scaled_direction}. Exact likelihood weighting changes old-policy sampling to the same on-policy expectation.
\end{proof}

\begin{proof}[Proof of \Cref{prop:credit_gradient_distortion}]
Equation \eqref{eq:credit_clean_fidelity} gives $|\chi_\kappa(U_i)-U_i|\leq|U_i|^3/(2\kappa^2)$. Multiply by the score norm, use $S_i^{-1}\leq s_{\min}^{-1}$, and apply the triangle inequality inside expectation to prove \eqref{eq:credit_gradient_distortion}.
\end{proof}

}

\section{Experimental Protocol and Hyperparameters}
\label{app:exp_protocol}

This appendix specifies the training configurations, RoVR components, and statistical reporting conventions for \Cref{sec:experiments}. The channel definitions follow \Cref{sec:robust_scale_exp}; the offline studies evaluate the corresponding update statistics on saved responses.

\subsection{Training configuration}

The mathematical-reasoning experiments use Qwen3-4B-Base and OLMo-3-7B-Instruct on a 7,500-example MATH split, with group size $G=16$, actor learning rate $10^{-6}$, prompt/response limits 2,048/8,192, global and mini-batches of 192/24 prompts, asynchronous vLLM rollout with tensor parallelism 2, GPU memory utilization 0.8, and at most 500 steps over 10 epochs. The GovReport runs use the same backbones, sample eight summaries per input with temperature and top-$p$ equal to 1.0, and train with LongDocFACTScore.

The tool-call comparison uses the identical 3,920/80 train/test parquet files and rule scorer in both runs, with Qwen3-4B-Base, $G=16$, actor learning rate $10^{-6}$, prompt/response limits 3,072/1,024, global and mini-batches of 64/16 prompts, and asynchronous vLLM rollout with tensor parallelism 2. Validation uses one greedy response per prompt every 20 steps. The only recorded configuration differences are experiment names and output directories; the RoVR-GSPO launcher selects block-credit advantages and SoftRoVR sequence weights, while the GSPO launcher selects ordinary group mean/standard-deviation advantages and arithmetic sequence weights. Both configurations request 500 steps, but the supplied logs contain training only through the step-100 checkpoint and validation through step 80. The run identifiers are \texttt{sqy23z50} (RoVR-GSPO) and \texttt{y1c3z2ii} (GSPO).

The dual-channel Qwen3-4B-Base run used in \Cref{sec:softrovr_token_stress} records reward parameters $\kappa=1$ and $s_{\min}=10^{-3}$. Its ratio channel uses $B=8$ balanced contiguous blocks, a minimum block size of $4$, $K=9$ quantiles, $c=1$, $\gamma=\eta=0.01$, $S=32$ safeguarded solver iterations, $a_{\min}=\nu_{\min}=10^{-6}$, $\nu_{\max}=10$, and denominator regularization $10^{-8}$. The recorded clipping interval is $[0.9997,1.0004]$. These settings describe the trajectory used by the paired offline comparison.

The reward-contamination analysis uses groups of $G=16$, the full RoVR reference with local scale $c=1$, credit parameters $\kappa=1$ and $s_{\min}=10^{-3}$, bootstrap seed $20250904$, and $500$ bootstrap replicates. The global reward standard deviation is $1.165$, and the median within-group population standard deviation is approximately $0.165$. A $4\sigma$ global shift therefore corresponds to about $28.28$ times this typical within-group scale.

\subsection{Resolution and reporting conventions}

The RoVR configurations in the experiments use blockwise M-centers and the composite-quantile outer correction. The reward-reference ablation retains this construction and changes only the residual normalization; the channel-placement ablations enable the reward channel, the ratio channel, or both. The theoretical conditions $B\geq2q+1$ and $n_{\min}\geq2s+1$ specify the contamination budgets covered by the two-level guarantee. \Cref{app:simulation} studies six scalar reference estimators under matched partitions.

Downstream comparisons match initialization, sampled-response budget, decoding, optimizer budget, and evaluation code; prompt order is not verified across the two tool-call runs. \Cref{tab:math_main,tab:placement_ablation} report means and cross-run standard deviations over three independent runs. GovReport reports single-run point estimates; the tool-call comparison reports one run per method and selects the highest Total reward from each run's four logged validations. The token-ratio analysis uses $5{,}000$ prompt-cluster bootstrap replicates from a fixed training trajectory, the reward analysis uses $500$ group-bootstrap replicates, and the scalar simulations quantify Monte Carlo uncertainty. The GovReport figure displays reward trajectories and their visual separation.

\subsection{Additional token-ratio stress diagnostics}
\label{app:softrovr_stress}

The primary token-ratio stress test in \Cref{sec:softrovr_token_stress} uses positive perturbations in the main figure; here we apply the same protocol to negative perturbations. At step $18$, the negative $16\sigma_\ell$ spike reduces $D_q$ by $26.09\%$, while the flip rate decreases from $21.061\%$ to $17.370\%$ with difference $3.691\%$ and interval $[2.949\%,4.453\%]$. For a negative $20\%$ burst, $D_q$ decreases by $87.39\%$ and the flip-rate difference is $0.807\%$ with interval $[0.202\%,1.413\%]$. The corresponding positive-burst interval includes zero.

For a stage-sensitivity check, every complete recorded step is evaluated at spike magnitude $16\sigma_\ell$ and burst fraction $20\%$ using a fixed seeded sample of $64$ complete prompt groups (1,024 responses) and five paired positions per response. Across all $35$ complete steps and both directions, the paired differences for $D_q$ and strong-spike flips are positive, with pointwise intervals above zero. For $20\%$ bursts, the reduction in mean $D_q$ ranges from $85.29\%$ to $90.12\%$. The burst flip difference changes sign across steps: its point estimate favors SoftRoVR in $28/35$ positive-direction checks and $20/35$ negative-direction checks. These results support lower aggregation sensitivity across recorded stages, while clipping effects remain direction- and stage-dependent. These steps are observations from one training run, not independent seeds, and the pointwise intervals are not adjusted for multiple comparisons.
\begin{figure}[!htbp]
	\centering
	\includegraphics[width=0.98\linewidth]{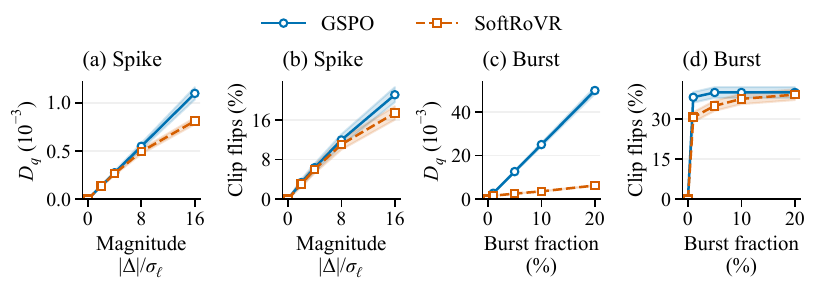}
	\caption{Negative-direction sensitivity analysis at step $18$ ($192$ prompt groups, $3{,}072$ responses). Bands are pointwise 95\% prompt-cluster bootstrap intervals. The perturbations are $-c\sigma_\ell$ for spikes and $-8\sigma_\ell$ for bursts. SoftRoVR reduces $D_q$ throughout the tested nonzero settings. Its clipping benefit is larger for strong spikes and short bursts than for the longest burst.}
	\label{fig:softrovr_negative_stress}
\end{figure}

\begin{figure}[!htbp]
\centering
\includegraphics[width=0.98\linewidth]{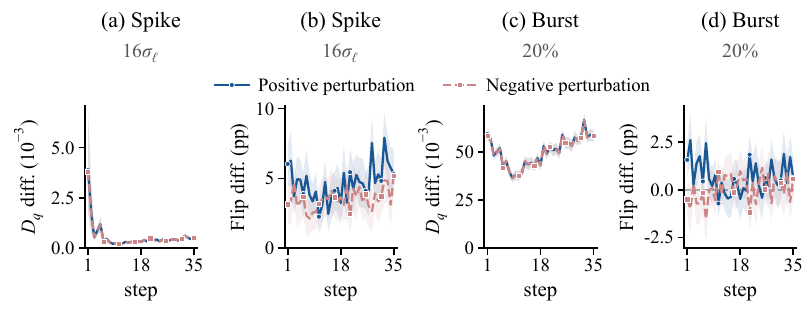}
\caption{Paired differences across $35$ complete steps for spike $16\sigma_\ell$ and burst $20\%$ stress, using $64$ prompt groups per step. Lines connect the recorded estimates without smoothing; shaded bands are pointwise 95\% prompt-cluster bootstrap intervals. Positive values favor SoftRoVR. Effects on log-weight displacement are consistent across the recorded stages; long-burst clipping effects are stage-dependent.}
\label{fig:softrovr_cross_step}
\end{figure}

\rovrReferenceEstimatorTable

\section{Implementation Details}
\label{app:implementation}

This appendix specifies the numerical object corresponding to the definitions in the main text. It records the full Barron family for completeness, then gives the convex solver, tie convention, balanced-block handling, SoftRoVR path, and complexity. Main results use only $\alpha=1$.

\rovrAdvantageVariantTable

\subsection{Estimator and update summaries}

\rovrUpdateAlgorithm

\subsection{Barron family and convex root solver}

For residual $u$, shape $\alpha$, and scale $c>0$, Barron's loss is~\citep{BarronCVPR2019}
\[
\rho(u;\alpha,c)=
\begin{cases}
\frac12(u/c)^2, & \alpha=2,\\
\log(1+\frac12(u/c)^2), & \alpha=0,\\
1-\exp[-\frac12(u/c)^2], & \alpha=-\infty,\\
\frac{|\alpha-2|}{\alpha}
\left[\left(1+\frac{(u/c)^2}{|\alpha-2|}\right)^{\alpha/2}-1\right], & \text{otherwise.}
\end{cases}
\]
For $\alpha=1$, safeguarded Newton or bisection solves the strictly decreasing empirical score. A bracket is given by the sample minimum and maximum. Newton steps leaving the bracket are replaced by bisection, and termination uses both score and interval tolerances.

Shapes $\alpha<1$ are redescending and may have multiple stationary points. If reported, they use a fixed median initialization, a declared graduated schedule, and a separate label; no convex-root or finite-group theorem is transferred to them.

\subsection{Blocks, ties, scales, and differentiation}

The entire balanced random assignment or permutation, not merely the block sizes, is sampled independently of corruption labels whenever \Cref{prop:random_partition} is invoked. The exact $n_b$, partition seed, and whether corruption could adapt to the partition are logged. For $B=1$, the composite correction is identically zero by \Cref{prop:calibration}. For even $B$, the midpoint median is used consistently in hard reward code; the token map uses the unique finite-temperature initializer in \eqref{eq:smooth_median}.

For exact discrete rewards, $J_0$ assigns half weight to equality. A numerical implementation may use a fixed tolerance $\epsilon_{\rm tie}$ to assign $1/2$ when $|u|\leq\epsilon_{\rm tie}$; this convention is part of the estimator specification. The logistic map returns $1/2$ at zero.

For small local blocks, an independent pilot or a lagged running statistic can supply the scale. Gradients are stopped through reward-side centering, scale estimation, and partitioning. On the ratio side, gradients pass through the unrolled safeguarded solvers and the finite-temperature correction. The pooled smooth scale uses capped local sandwich scales; derivative checks should evaluate the same finite-temperature map.

With $S$ root iterations, local centers cost $O(SN)$ scalar operations. Hard median selection costs $O(B)$, while the general outer correction costs $O(BK)$. SoftRoVR's two smooth median solves add $O(SB)$ operations; bounded reward-credit construction costs $O(G)$. Forward aggregation stores local statistics and correction terms, while differentiating through the unrolled solves additionally retains token-level intermediate states. Actual memory usage depends on solver unrolling, token packing, and the autograd implementation.

\FloatBarrier
\section{Supplementary Experiment: Reference-Estimator Variance and Robustness}
\label{app:simulation}

This appendix studies the scalar reference estimators under controlled sampling and contamination. Each method receives the same observations and balanced partition, with uncontaminated center $0$ as the target. The design isolates reference-estimation variance and error, complementing the downstream policy comparisons.

\subsection{Design and estimands}

We use $N=128$ observations partitioned into $B=8$ balanced blocks of size $16$, with the RoVR parameters $K=9$ and $c=1$. The clean distributions are standard Gaussian and $t_3/\sqrt{3}$, so both have center and variance equal to zero and one, respectively. For the contamination experiments, $16$ observations (12.5\%) are shifted by $+8$. In the dispersed condition these observations are allocated as evenly as possible across the eight blocks; in the coherent condition one entire block is shifted. Thus the two geometries have the same contamination budget, while only their placement relative to the block interface differs. The estimator comparison uses 3,000 independent trials, the geometry sweep uses 2,000 trials per shift, and the oracle outer-factor experiment uses 20,000 trials. All random streams are seeded with seed $0$.

We compare the arithmetic mean, the global convex M-center, MOM, a matched \emph{VRMOM-style} estimator, Robust MOM, and RoVR. The VRMOM-style comparator uses arithmetic local means, composite-quantile correction, and the median block standard-deviation scale. We report sampling variance and RMSE to the uncontaminated center. Error bars are 95\% bootstrap intervals over Monte Carlo trials.

\subsection{Oracle outer-factor validation}

Table~\ref{tab:sim_outer_factor} and Figure~\ref{fig:sim_outer_factor} validate the outer aggregation in isolation. The theoretical factor $V_K$ decreases from $\pi/2$ at $K=1$ toward $\pi/3$ as the quantile grid is refined. At the paper's setting $K=9$, $V_K=1.0691$ and the empirical oracle factor is $1.0747$; at $K=31$, they are $1.0498$ and $1.0540$. The empirical median factor is $1.5649$, close to $\pi/2=1.5708$. The finite-sample agreement supports the implementation of the composite-quantile correction and its stated outer-factor interpretation. The complete estimator additionally incorporates the local sandwich scale, finite block sizes, and estimated initialization.

\begin{table}[!htbp]
\centering
\caption{Oracle outer-factor validation. The empirical factors use 20,000 trials with $B=101$ and block size $n=32$. The median factor is shown for reference; $\pi/2$ and $\pi/3$ are the clean asymptotic reference values.}
\label{tab:sim_outer_factor}
\setlength{\tabcolsep}{7pt}
\begin{tabular}{rrrr}
\toprule
$K$ & $V_K$ (theory) & Oracle correction (emp.) & Outer median (emp.) \\
\midrule
1  & 1.5708 & 1.5812 & 1.5649 \\
3  & 1.1680 & 1.1802 & 1.5649 \\
5  & 1.1034 & 1.1102 & 1.5649 \\
9  & 1.0691 & 1.0747 & 1.5649 \\
15 & 1.0564 & 1.0614 & 1.5649 \\
31 & 1.0498 & 1.0540 & 1.5649 \\
\midrule
\multicolumn{2}{r}{$\pi/2$} & \multicolumn{2}{r}{1.5708} \\
\multicolumn{2}{r}{$\pi/3$} & \multicolumn{2}{r}{1.0472} \\
\bottomrule
\end{tabular}
\end{table}

\begin{figure}[!htbp]
\centering
\includegraphics[width=0.8\linewidth]{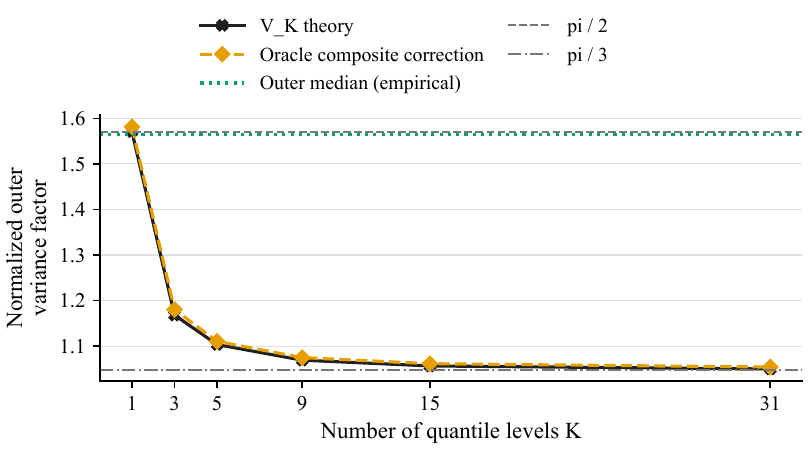}
\caption{Oracle outer-factor validation. The oracle composite-quantile estimator follows $V_K$ closely, while the outer median remains near $\pi/2$.}
\label{fig:sim_outer_factor}
\end{figure}

\begin{figure}[!htbp]
	\centering
	\includegraphics[width=0.98\linewidth]{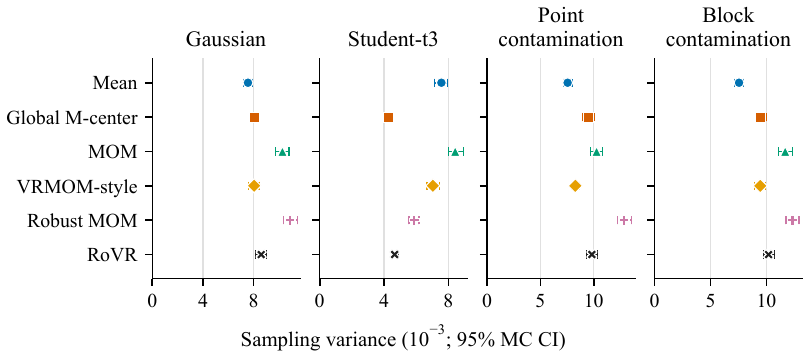}
	\caption{Sampling variance across clean and contaminated scenarios. Error bars are 95\% bootstrap Monte Carlo intervals over 3,000 estimator trials. Under contamination, variance alone is not a sufficient robustness metric because it does not include systematic displacement.}
	\label{fig:sim_variance}
\end{figure}
\subsection{Finite-sample comparison}

The sampling variances are reported in Table~\ref{tab:sim_variance} and Figure~\ref{fig:sim_variance}. Under clean Gaussian data, the arithmetic mean is the most efficient estimator in this comparison ($0.007552$), while RoVR has variance $0.008596$. This is expected: bounded local scores and block aggregation trade some Gaussian efficiency for robustness. RoVR nevertheless has lower variance than MOM ($0.010271$) and Robust MOM ($0.010865$) in this finite configuration. Under the unit-variance $t_3$ distribution, robust score estimators benefit from suppressing tail observations; RoVR has variance $0.004660$, close to the global M-center ($0.004274$) and below the arithmetic mean ($0.007567$) and MOM ($0.008424$).

\begin{table}[!htbp]
\centering
\caption{Sampling variance of the six reference estimators. Each entry is computed from 3,000 trials; the full CSV also reports bootstrap Monte Carlo intervals.}
\label{tab:sim_variance}
\setlength{\tabcolsep}{3.5pt}
\renewcommand{\arraystretch}{1.08}
\resizebox{\linewidth}{!}{%
\begin{tabular}{lrrrrrr}
\toprule
Scenario & Mean & Global M & MOM & VRMOM-style & Robust MOM & RoVR \\
\midrule
Gaussian & 0.007552 & 0.008088 & 0.010271 & 0.008033 & 0.010865 & 0.008596 \\
Student-$t_3$ & 0.007567 & 0.004274 & 0.008424 & 0.007038 & 0.005871 & 0.004660 \\
Point contamination & 0.007552 & 0.009483 & 0.010271 & 0.008273 & 0.012848 & 0.009828 \\
Block contamination & 0.007552 & 0.009486 & 0.011677 & 0.009443 & 0.012323 & 0.010190 \\
\bottomrule
\end{tabular}}
\end{table}

\subsection{Contamination geometry and interpretation}

\begin{table}[H]
\centering
\caption{RMSE to the uncontaminated center $0$ under the fixed $+8$ contamination. Both contamination geometries modify 16 of 128 observations.}
\label{tab:sim_rmse}
\setlength{\tabcolsep}{3.5pt}
\renewcommand{\arraystretch}{1.08}
\resizebox{\linewidth}{!}{%
\begin{tabular}{lrrrrrr}
\toprule
Scenario & Mean & Global M & MOM & VRMOM-style & Robust MOM & RoVR \\
\midrule
Point contamination & 1.0059 & 0.2719 & 1.0067 & 1.0064 & 0.2746 & 0.2695 \\
Block contamination & 1.0059 & 0.2718 & 0.1171 & 0.1129 & 0.1206 & 0.1170 \\
\bottomrule
\end{tabular}}
\end{table}

Table~\ref{tab:sim_rmse} and Figure~\ref{fig:sim_contamination_rmse} show why the failure geometry must be declared. With dispersed point contamination, the mean, MOM, and VRMOM-style estimator have RMSE approximately $1.006$, whereas RoVR has RMSE $0.270$, close to the global M-center ($0.272$) and Robust MOM ($0.275$). The bounded local score prevents the shifted observations from directly controlling every block mean. With one fully corrupted block, block-based estimators are substantially more effective: MOM, VRMOM-style, and RoVR have RMSE $0.117$, $0.113$, and $0.117$, respectively, while the global M-center has RMSE $0.272$. Both RoVR and the VRMOM-style comparator benefit from the block interface in this setting.

The shift sweep in Figures~\ref{fig:sim_dispersed_sweep} and~\ref{fig:sim_block_sweep} gives the same qualitative picture over contamination magnitudes from $0$ to $16$. Under dispersed corruption, the mean-like block estimators grow approximately linearly with the shift, while the global and block-robust estimators saturate. Under coherent block corruption, the explicit block interface allows the median and composite-quantile outer aggregators to reject the bad block for the declared one-block failure geometry. RoVR inherits this behavior through its robust local centers and composite-quantile correction. The sweep evaluates the declared, fixed block-failure geometry across perturbation magnitudes.

The simulations reproduce the predicted outer-efficiency factor and show how local and block robustness contribute under different contamination geometries. RoVR improves over median-based aggregation in the clean settings and achieves low error under both dispersed and coherent contamination. The downstream effects of the complete optimizer are evaluated in \Cref{tab:math_main,tab:govreport_rouge,tab:placement_ablation}; the exploratory tool-call proxy is reported in \Cref{sec:tool_call_results}.

\begin{figure}[!htbp]
\centering
\includegraphics[width=0.8\linewidth]{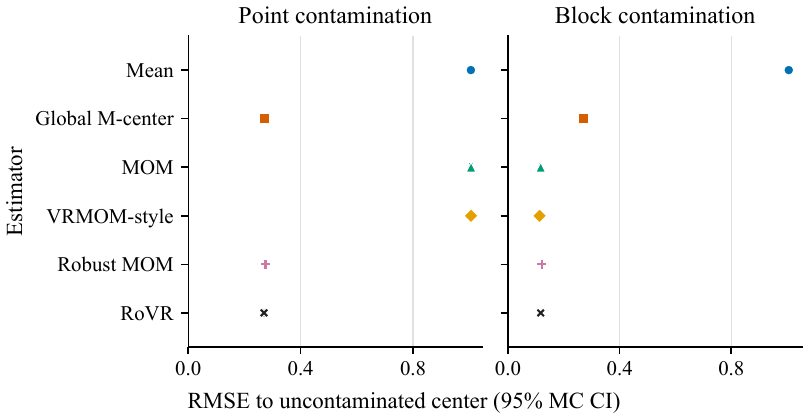}
\caption{RMSE to the uncontaminated center under the two matched contamination geometries. Error bars are 95\% bootstrap Monte Carlo intervals over 3,000 trials.}
\label{fig:sim_contamination_rmse}
\end{figure}

\begin{figure}[!htbp]
\centering
\includegraphics[width=0.8\linewidth]{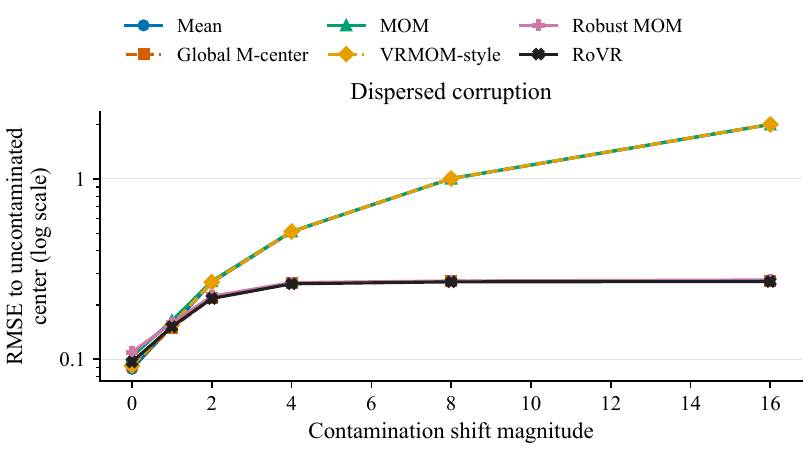}
\caption{RMSE sweep for dispersed point corruption. The y-axis is logarithmic to show both mean-like and robust estimators over the same range.}
\label{fig:sim_dispersed_sweep}
\end{figure}

\begin{figure}[!htbp]
\centering
\includegraphics[width=0.8\linewidth]{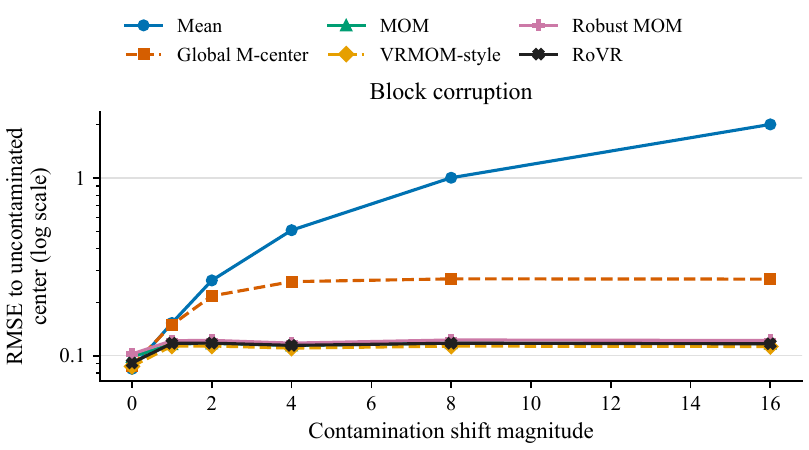}
\caption{RMSE sweep for one-block coherent corruption. The favorable behavior of block estimators is conditional on the declared block failure geometry.}
\label{fig:sim_block_sweep}
\end{figure}


\rovrTheoryAppendix

\end{document}